\def\eqref#1{equation~\ref{#1}}
\def\1{\bm{1}}
\DeclareMathAlphabet{\mathsfit}{\encodingdefault}{\sfdefault}{m}{sl}
\SetMathAlphabet{\mathsfit}{bold}{\encodingdefault}{\sfdefault}{bx}{n}
\newcommand{\E}{\mathbb{E}}
\newcommand{\R}{\mathbb{R}}
\newtheorem{theorem}{Theorem}[section]
\newtheorem{proposition}[theorem]{Proposition}
\newtheorem{lemma}[theorem]{Lemma}
\newtheorem{corollary}[theorem]{Corollary}
\newtheorem{definition}[theorem]{Definition}
\newtheorem{remark}[theorem]{Remark}
\newcommand{\EE}{\mathbb{E}}
\newcommand{\dd}{\mathcal{D}}
\newcommand{\hh}{\mathcal{H}}
\newcommand{\yy}{\mathcal{Y}}
\newcommand{\mz}{\mathcal{Z}}
\newcommand{\real}{\mathbb{R}}
\newcommand{\indep}{\perp \!\!\! \perp}
\definecolor{ed}{RGB}{225,0,0}
\newcommand{\ed}[1]{\textcolor{ed}{[#1]}}
\newcommand{\bitem}{\begin{itemize}}
\newcommand{\eitem}{\end{itemize}}
\newcommand{\benum}{\begin{enumerate}}
\newcommand{\eenum}{\end{enumerate}}
\newcommand{\beq}{\begin{equation}}
\newcommand{\eeq}{\end{equation}}
\newcommand{\beqs}{\begin{equation*}}
\newcommand{\eeqs}{\end{equation*}}
\newcommand{\supp}{\textnormal{supp}}
\newcommand{\N}{\mathcal{N}}
\title{Learning Augmentation Distributions using Transformed Risk Minimization}
\author{\name Evangelos Chatzipantazis\thanks{Equal Contribution.} \email vaghat@seas.upenn.edu \\
\addr Department of Computer and Information Science\\ University of Pennsylvania 
\AND
\name Stefanos Pertigkiozolou\footnotemark[1]\email pstefano@seas.upenn.edu \\ \addr Department of Computer and Information Science\\ University of Pennsylvania
\AND
\name Kostas Daniilidis\email kostas@cis.upenn.edu\\ \addr Department of Computer and Information Science\\ University of Pennsylvania
\AND
\name Edgar Dobriban\email dobriban@wharton.upenn.edu\\ \addr Department Statistics of Statistics and Data Science\\ University of Pennsylvania
}
\begin{document}

\maketitle

\begin{abstract}

We propose a new \emph{Transformed Risk Minimization} (TRM) framework as an extension of classical risk minimization. 
In TRM, we optimize not only over predictive models, but also over data transformations; specifically over distributions thereof.
As a key application, we focus on learning augmentations; for instance appropriate rotations of images, to improve classification performance with a given class of predictors.
Our TRM method (1) jointly learns transformations and models in a \emph{single training loop}, (2) works with any training algorithm applicable to standard risk minimization, and (3)  handles any transforms, such as discrete and continuous classes of augmentations. 
To avoid overfitting when implementing empirical transformed risk minimization, we propose a novel regularizer based on PAC-Bayes theory.
For learning augmentations of images, we propose a new parametrization of the space of augmentations via a stochastic composition of blocks of geometric transforms.
This leads to the new \emph{Stochastic Compositional Augmentation Learning} (SCALE) algorithm.
The performance of TRM with SCALE compares favorably to prior methods on CIFAR10/100. Additionally, we show empirically that SCALE can correctly learn certain symmetries in the data distribution (recovering rotations on rotated MNIST) and can also improve calibration of the learned model.
\end{abstract}

\section{Introduction}

Adapting to the structure of data distributions (such as symmetry and transformation invariances) is an important problem in machine learning. Invariances can be built into the learning process by architecture design (such as in convolutional neural nets), or by augmenting---transforming---the dataset during training. 
Traditionally, these approaches require full knowledge of the symmetries of the data distribution.
Absent this knowledge, practitioners may mis-specify the set of augmentations, or may need to resort to expensive and time-consuming tuning. 
The current practice of machine learning largely depends on handcrafted augmentations, based on extrinsic knowledge about the task to be solved. In contrast, the  ``best'' augmentations may also depend on the predictive model, not just  on the task; in possibly non-obvious ways. 
The space of augmentations is vast; and these factors make human fine-tuning hard. Therefore, it could be more efficient to learn augmentations from data.
There have been several prior approaches to this problem, but none that satisfy our desired criteria described below.

Our goal is to design an algorithm that learns data transformations jointly with solving a downstream task. 
For instance, we want to learn which augmentations---rotations or flips---are most helpful while learning to classify images. 
In addition, we want to learn the level of data transformations---maximum rotation angle---that will be beneficial for the task. We require that the algorithm is:
\begin{compactenum}
    \item \emph{Efficient}: The goal is to jointly learn augmentations and models in a \emph{single training loop} (without a significant increase in training time compared to standard model training).
    \item \emph{Modular}: The method should pair with any learning problem and algorithm for the downstream task. For example, it should be independent of the neural network architecture in an image classification task and should be incorporated without significant changes into a standard training algorithm for risk minimization.
    \item Applicable to any transforms, for instance to both continuous and discrete-valued augmentations. 
    This is challenging, as previous approaches have used specialized approaches \citep{cubuk2018autoaugment,benton2020learning}: for discrete augmentations, one can sometimes enumerate them; for continuous ones one may leverage Lie group theory. 
\end{compactenum}
Our approach to this problem (in Section \ref{sec:trm}) is, intuitively, to expand the usual model space (e.g., neural networks) to include a first ``stochastic layer'' where we optimize over \emph{distributions of transformations} using a novel objective.
In the framework of statistical learning theory, the transformations induce a so-called \emph{transformed risk}.
Thus, we formulate our algorithm in a new theoretical framework called \textit{Transformed Risk Minimization} (TRM). 
This enables us to leverage the theory of risk minimization and design a unified algorithm that satisfies all three criteria discussed above. We can summarize our contributions as follows:

\begin{compactenum}
    \item We formulate the Transformed Risk Minimization framework (TRM) and connect it to standard risk minimization. 
    We prove a zero-gap theorem, implying that when there is an unknown \emph{distributional invariance} in the data, TRM can recover both the invariance as well as a model that is optimal under the standard risk when the distributional invariance is known.
    \item We then propose the new Stochastic Compositional Augmentation Learning (SCALE) algorithm that aims to optimize the transformed risk for finite data. 
    SCALE consists of a new parametrization of the augmentation space via a stochastic composition of blocks and a novel regularizer for training which is derived using  PAC-Bayes theory.
    \item We provide experiments with SCALE. Our first experiment shows that on RotMNIST, SCALE learns the correct symmetries and outperforms previous methods; which supports our zero-gap theorem. 
    We also perform experiments on CIFAR 10 and CIFAR 100, showing that SCALE has advantages in both accuracy and calibration compared to prior works, while maintaining the benefit of time efficiency.
\end{compactenum}

\section{Transformed Risk Minimization}\label{sec:trm}
\subsection{Preliminaries on Risk Minimization}
\begin{figure}
    \centering
    \captionsetup{justification=centering}
    \vskip -0.5in
    \subfloat[Training Data]{ \includegraphics[width=0.19\columnwidth]{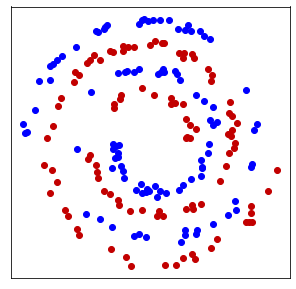}}
    \subfloat[Distribution with Standard Augm.]{ \includegraphics[width=0.19\columnwidth]{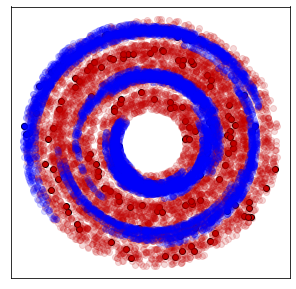}}
    \subfloat[Learned Augm. Distribution with TRM]{ \includegraphics[width=0.19\columnwidth]{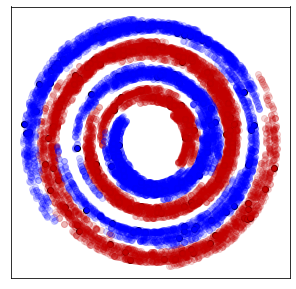}}

    \vskip -0.15in
    \subfloat[Learned Model with RM]{ \includegraphics[width=0.19\columnwidth]{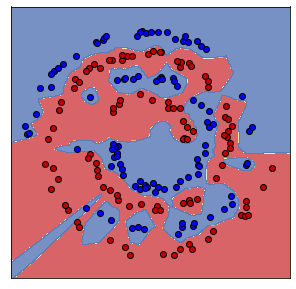}}
    \subfloat[Learned Model with Standard Augm.]{ \includegraphics[width=0.19\columnwidth]{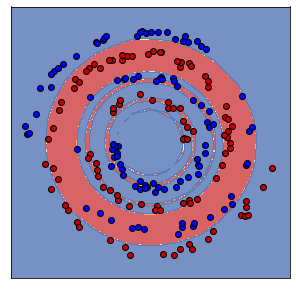}}
    \subfloat[Learned Model with TRM]{ \includegraphics[width=0.19\columnwidth]{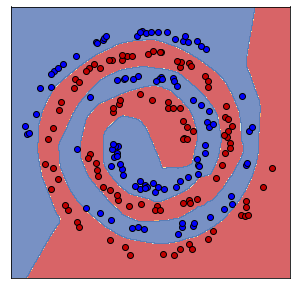}}
    \captionsetup{justification=justified}    
    \caption{When only a small amount of training data is available (as shown in (a)), models trained with standard risk minimization (RM) may lead to poor generalization. 
    A common practice to improve the generalization is to perform data augmentation, but this can be harmful when the appropriate distribution of augmentations is not known (as shown in (b), (e) where we apply rotations of $180^\circ$). We propose the framework of Transform Risk Minimization (TRM) to learn both the model and the optimal distribution of augmentations. As shown in (c), (f), TRM recovers the appropriate distribution of augmentations  (rotations of $45^\circ$) and learns an improved model.}
    \label{fig:trmvisual}
\end{figure}
In conventional supervised statistical learning \citep{vapnik2013nature}, the learner is provided with an independently drawn sample $S = \{(X_i,Y_i), i\in [n]\}$, where $[n]=\{1,2,\ldots,n\}$, and $(X_i,Y_i) \in \mathcal{X}\times\mathcal{Y}$ for all $i\in[n]$, 
from an unknown data distribution $\mathcal{D}$.
Further, the learner has access to an action space $\mathcal{Y}'$,\footnote{All objects are measurable with respect to appropriate sigma-algebras, which are used tacitly in our work.} 
and a hypothesis space $\mathcal{H}\subset (\mathcal{Y'})^\mathcal{X}$ containing candidate hypotheses (models)  $h:\mathcal{X} \rightarrow \mathcal{Y}'$. In general, $\mathcal{Y}'$ might not be equal to $\mathcal{Y}$. 
For example, in a multi-class classification problem, 
$\mathcal{Y}$ can be the discrete set of classes,
and $\mathcal{Y}'$ can be the set of probability distributions over them.

Based on the training data $S$, the learner outputs a hypothesis $h_S$ whose performance on a test datapoint $(x,y) \sim \mathcal{D}$ is evaluated via a loss function $l: \mathcal{Y}'\times\mathcal{Y} \rightarrow \real$. 
The goal in Risk Minimization (RM) is to minimize the following risk over $h \in \mathcal{H}$:
\vskip -0.3in
\begin{align}
 R_{RM}(h):=\EE_{(X,Y) \sim \mathcal{D}}[l(h(X),Y)].
\label{eq:rm}
\end{align}
One may aim to control the risk $R_{RM}(h_S)$ of the data-dependent predictor $h_S$ either in expectation or with high probability with respect to $S$.
\vskip -0.1in
\subsection{From fixed augmentations to an augmentation hypothesis space}
\par
A common way to improve performance is by leveraging domain knowledge in the form of transformations that leave the prediction invariant (e.g., in classification), or equivariant---transforming in a pre-specified way (e.g., in pose estimation of complex shapes). 
When the set of transformations admits a group structure, it is beneficial to constrain the hypothesis space to invariant/equivariant models. However, for more general transformations, the most common practice is \emph{data augmentation}: adding transformed data to training.
If tuned properly, data augmentation can improve performance. 
However, it assumes that the user fully knows the beneficial transformations. 
This assumption is in general not valid; and in practice augmentations require a great deal of tuning.\par

To relax this assumption, 
we switch our viewpoint and suppose that the user only has a certain level of \textit{incomplete knowledge} about the beneficial transformations. We then aim to learn those transformations \emph{jointly} with the model. 
Intuitively, one may think that one should optimize over specific transformations $g:x\mapsto g(x)$. However, we find it beneficial 
to relax to \emph{distributions} over transformations,
as 
this is both more general---it includes point masses at specific transformations---and more closely mirrors standard stochastic data augmentation.
Thus, we consider a modification of the classical learning paradigm in which an additional \textit{transformation hypothesis space} $\Omega$ is provided to the learning algorithm, containing candidate \textit{transformation distributions} $Q$ over sets of transformations $G_Q$. 
Then, we denote the action of $g \in G_Q$ on $X$ as $gX$ (instead of $g(X)$)\footnote{In general, $g:\mathcal{X} \rightarrow \mathcal{X}', h:\mathcal{X}' \rightarrow \mathcal{Y}'$. We simplify $\mathcal{X}' = \mathcal{X}$ for clarity.}. Now the goal of the learner is to output not just a model $h_S \in \mathcal{H}$, but also a distribution $Q_S\in \Omega$ of transformations.\par

As an example, one transformation distribution $Q \in \Omega$ might be the uniform distribution over all rotations in the range $(-30^\circ,30^\circ)$ degrees, i.e., $Q = \mathrm{U}_R(-30^\circ,30^\circ)$.\footnote{The symbol $\mathrm{U}$ will denote a uniform distribution over an appropriate space.} 
The distribution hypothesis space $\Omega$ could be the set of all such uniform distributions parametrized by their endpoints, e.g., $\Omega= \{\mathrm{U}_R(\theta_1,\theta_2)\,|\,\theta_1<\theta_2,|,\theta_1,\theta_2 \in [-180^\circ,180^\circ]\}$. 
The sampled transforms $g \sim \mathrm{U}_R(-30^\circ,30^\circ)$ act on their inputs by rotation.
For 2D image inputs, they are represented by specific $2 \times 2$ rotation matrices transforming an image $I:\mathbb{R}^2 \to \mathbb{R}$ as ${I(x_1,x_2) \xrightarrow[]{g} gI(x_1,x_2):=I(g^{-1}(x_1,x_2))}$. In this example, we learn the range of the helpful rotations, as well as a model that fits the transformed data.

\subsection{Transformed Risk Minimization}
A first thought may be to try to optimize the standard risk in  \eqref{eq:rm} for the \emph{expected} prediction over the unknown $Q \in \Omega$, i.e., $\bar{R}(h,Q) := \EE_{(X,Y) \sim \mathcal{D}}[l(\mathbb{E}_{g \sim Q}[h(gX)],Y)].$ 
However, this objective is not amenable to standard stochastic optimization due to the difficulty of obtaining unbiased gradients with respect to $Q$.

In this work we study a notion of \emph{transformed risk} that bypasses this issue. The goal of the learner in Transformed Risk Minimization (TRM) is to minimize the \emph{transformed population risk}
\begin{equation}
     R_{TRM}(h,Q) := \EE_{(X,Y) \sim \mathcal{D}}\EE_{g \sim Q}[l(h(gX),Y)]
    \label{eq:trm}
\end{equation}
simultaneously over augmentation distributions $Q \in \Omega$ and models $h \in \mathcal{H}$. Figure \ref{fig:trmvisual} shows an example of the difference between models learned using the standard risk minimization and our proposed TRM framework.

This objective is inspired by how data augmentation is applied in practice with a fixed distribution $Q$ \citep{baird1992document,chen2020group}. 
In particular, data augmentation can be viewed as averaging the loss over the augmentation distribution.
In our setting, this distribution is learned. 
For losses such that  $l(\cdot,y)$ is convex for all $ y \in \mathcal{Y}$, such as typical surrogate losses for classification like the cross-entropy loss, Jensen's inequality shows that  $\bar{R}(h,Q) \leq R_{TRM}(h,Q)$. 
Thus a small value of the TRM objective
implies that the more difficult-to-handle risk of the average model is also small.
Given a finite dataset, empirical transformed risk minimization (ETRM) would aim to optimize:
\vskip -0.15in
\begin{equation}
    R_{TRM,n}(h,Q) = \frac{1}{n}\sum_{i=1}^n \EE_{g \sim Q} l(h(gX_i),Y_i), 
    \label{eq:ETRM}
\end{equation}
\vskip -0.1in
over $h \in \hh, Q \in \Omega$. However, as we discuss in Section \ref{pacqtheta} this training objective is not a good estimate of the population risk (\eqref{eq:trm}) as it collapses to trivial distributions. In the next section, we discuss how to optimize the population risk by proposing an upper bound to \eqref{eq:trm} that is computable from the training data and takes care of this problem.

\section{The SCALE Algorithm for Learning Data Augmentations}\label{sec:Method}

\begin{figure*}[ht]
    \centering
    \includegraphics[width=0.95\textwidth]{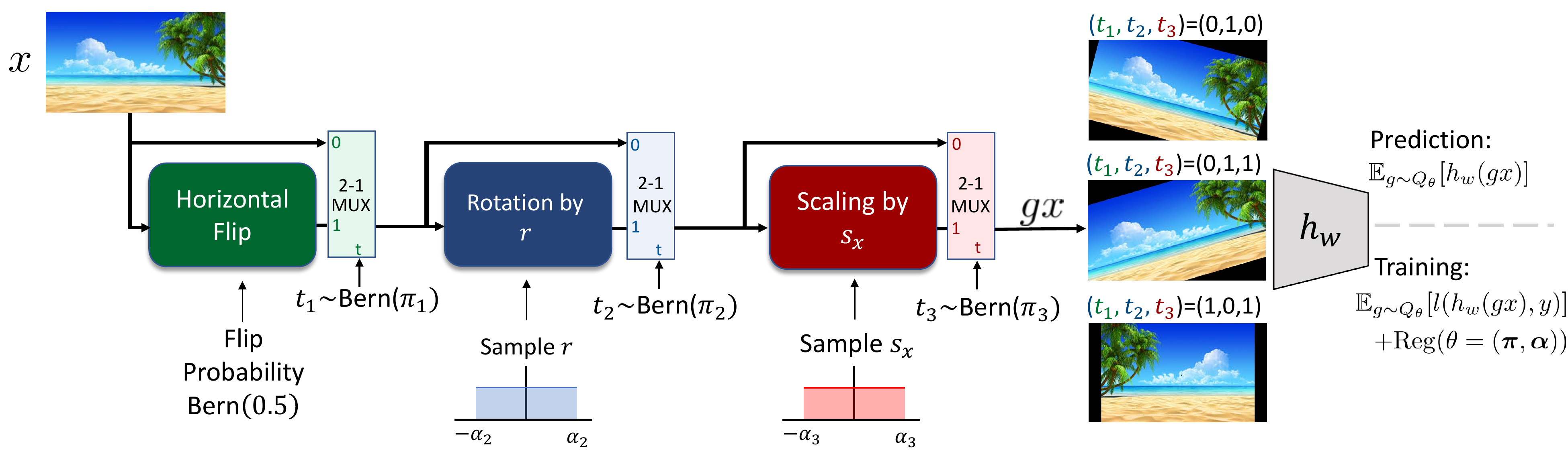}

    \caption{SCALE: The computation of $gx$ from a random sample $g\sim Q_\theta$ is a composition of blocks applying randomly sampled transformations to the input. The $i^{th}$ block in the sequence transforms its input with probability $\pi_i$ and leaves it unchanged with probability $(1-\pi_i)$, described in the figure via 2-1 multiplexers   ``MUX'' with probabilistic selectors. For each input $x$, the model samples several $gx$-s and uses the outputs $h(gx)$ to compute the expected loss during training, and the expected output at test-time inference. 
    During training, we learn the parameters of both the model and the distribution of augmentations by minimizing the expected loss plus the regularization term we derive in Section \ref{pacqtheta}. }
    \label{fig:scale}

\end{figure*}

\par  In this section, building on our TRM framework, we design the new \emph{Stochastic Compositional Augmentation Learning} (SCALE) algorithm, shown in Figure \ref{fig:scale}.

\subsection{Parametrization of $Q$ and $\mathcal{H}$ }\label{hypparametrization}
\par In practice, we do not optimize over an abstract space $(\mathcal{H},\Omega)$ but over a parametric space $(W,\Theta)$. 
Considering the complexity of our tasks, we will use rich hypothesis spaces $\mathcal{H}$---such as deep networks---with parameters $w$ for some parameter space $W \subseteq \mathbb{R}^p, p \in \mathbb{N}$, i.e., $\hh = \{h_w; w\in W\}$.

To parametrize the augmentation distributions $Q$ in $\Omega$, we leverage the rich compositional structure of transformations. 
Complex transformations can be constructed by compositions of simpler transformations sampled from parametrized base distributions $\{U_{i,\alpha_i}\}_{i=1}^k$. We can also include in the composition fixed base distributions $\{U_i\}_{i=k+1}^K$. 
Additionally, we allow choosing which base distributions should be included in the composition independently for each $i$. This is a discrete optimization problem where the optimization variable can take an exponential number of values in the number of base distributions. We associate to each base distribution a variable $\pi_i \in \{0,1\}$ indicating if the distribution is included in the composition. 
To handle the combinatorial optimization problem, we relax $\pi_i$ to continuous mixture components $\pi_i \in [0,1]$, $i \in [K]:=\{1,\ldots,K\}$. Thus, we define $Q_\theta$ to be the distribution induced by sampling $g \sim Q_\theta$ as:
\vskip -0.15in
\begin{equation}
     g=g_1\circ g_2\circ \ldots\circ  g_K,\quad g_i\sim Q_{(i)},\quad i=1,\ldots,K \label{Eq:param}
\end{equation}
\vskip -0.05in
where $\theta=\{\pi_1,\ldots,\pi_K,\alpha_1,\ldots,\alpha_k\}$ and $g_i\sim Q_{(i)}$ is:
\begin{compactenum}
    \item with probability $\pi_i$ a transformation sampled from a distribution $U_{i,\alpha_i}$. This distribution can have learnable parameters $\alpha_i$, or can be fixed (the latter denoted  as $U_{i}$);
    \item with probability $(1-\pi_i)$ the identity transformation leaving the input unchanged.
\end{compactenum}

\par With this parametrization we can handle compositions of both discrete and continuous distributions of transformations. This is an advantage compared to previous works, as we discuss in the Related Work section.
\par 
We will focus on base augmentation distributions $U_{i,\alpha_i}$ supported on transformations of the form $\rho_i(a)$, with $a$ sampled from the uniform distribution 
parametrized by $\alpha_i$ via
$a \sim \text{U}([-\alpha_i,\alpha_i])$,
where $0<\alpha_i\leq A_i$ for some fixed $A_i>0$.\footnote{Non-symmetric endpoints are handled similarly.}
We choose a parametrization of our transforms such that $\rho_i(0)$ is the identity map. For instance,  $\rho_i(a)$ can be a rotation by $a$ degrees with $a$ sampled uniformly over the interval $[-30^\circ,30^\circ]$. 
We will also consider parameter-free base distributions $U_i$, uniform on some discrete set containing the identity transformation (such as horizontal flips, or cropping). If the first $k$  out of $K$ base distributions are parametric, then the base mixed distributions $Q_{(i)}$\footnote{We overload notation here. $Q_{(i)}$ was supported on a set of transformations $\rho_i(a)$ not the parameters $a$. We explain this common subtlety and how it can lead to tighter bounds in the Appendix.} are constructed as

\[
Q_{(i)} = 
\begin{cases}
     (1-\pi_i) \delta_0 + \pi_i \mathrm{U}[-\alpha_i,\alpha_i],\qquad\qquad\quad~~ i \leq k\\
    (1-\pi_i) \delta_0 + \pi_i \mathrm{U}[\{0,\ldots,N_i-1\}],\, k<i\leq K,
\end{cases}
\]

where $\alpha_i \in \Theta_i := (0,A_i]$ and $\pi_i \in [0,1]$ are the parameters of the mixture we optimize over. 
Then, $\Theta = [0,1]^K \times \prod_{i=1}^k\Theta_i$, and for each $\theta \in \Theta$, each $Q_\theta$ is constructed as the product measure $Q_\theta = \prod_{i=1}^K Q_{(i)}$ on $\prod_{i=1}^k[-a_i,a_i] \times \prod_{i=k+1}^K\{0,\ldots, N_i-1\}$. Finally the augmentation hypothesis space is $\Omega = \{Q_\theta: \theta \in \Theta\}$. Distributions of transformations where the parameter is sampled over a closed interval $[-\alpha_i,\alpha_i]$ are common in practical data augmentation. However, in contrast to their standard usage, here we can learn the endpoints. \par 
The task is to learn parameters $(w,\theta) \in (W,\Theta)$ such that $(h_w,Q_\theta) \in (\mathcal{H}, \Omega)$ results in a small TRM risk \ref{eq:trm}. 

\subsection{Leveraging PAC-Bayes theory for training SCALE}
\label{pacqtheta}

In this section we design an algorithm aiming to optimize the transformed risk given a finite sample.  
Naively optimizing the empirical TRM criterion (Eq. \ref{eq:ETRM}) would typically result in a trivial solution for the augmentation distribution $Q_\theta$, namely the Dirac mass around the identity transformation. 
This is not surprising, since augmenting the initial dataset typically results in worse training error, even though it might result in better generalization;
see e.g., \cite{benton2020learning} etc.
\par

Thus, regularization is of high importance for identifying the useful augmentations. We propose to leverage PAC-Bayes theory \citep{macallester1998,Shawe97APAC,catonipac} to bound the TRM risk and to obtain a novel, theoretically motivated regularizer.  

Due to space limitations we will assume some basic familiarity with PAC-Bayes theory.
We provide a bound on SCALE building on a PAC-Bayes bound \citep{JMLR:v17:15-290} that holds for bounded and some unbounded losses (e.g., sub-gaussian losses \citep{NIPS2016_84d2004b}). 
Since we aim for generality and efficiency in our algorithm, we do not use bounds from \citep{mcallester2013pacbayesian}---derived only for bounded losses---or data-dependent priors \citep{NEURIPS2018Dziugaite}---which usually require nested training loops. 
In our experiments, our regularizer is essential for recovering useful augmentation distributions; 
even though the bounds themselves may not provide a non-vacuous generalization certificate. 
After all, we use deep networks for which non-vacuous bounds are uncommon.

Our generalization bound relies on the following reduction: instead of viewing data augmentation as a transformation of the data followed by the application of a hypothesis $h\in \hh$, we view the entire process as a randomized predictive model. For each $(h,Q)$, the transformation distribution $Q$ acts as a first  ``stochastic layer'' before applying a deterministic model $h$. Hence, Eq.~\ref{eq:trm} becomes the expected risk of a stochastic classifier ${\Tilde{h} = h \circ g ,\, g \sim Q}$. 
For each input $x$, the classifier ${\Tilde{h}}$ first draws a sample $g \sim Q$ and then predicts $(h \circ g) (x)$. 

Let $P_\mathcal{H}=\mathcal{N}(w_0,s^2I),\,
Q_{\mathcal{H},w}=\mathcal{N}(w,\sigma^2I)$, for some parameters $w_0,w\in W$ be multivariate normal distributions with scaled identity covariance matrices. 
Let $Q_\theta$ be the probability measure constructed in Section \ref{hypparametrization} and, similarly to $Q_\theta$, let $P$ be the product measure $P_A=\prod_{i=1}^kP_{(i)}$ on $A:=\prod_{i=1}^k[-A_i,A_i] \times \prod_{i=k+1}^K\{0,\ldots, N_i-1\}$, where 

\[
P_{(i)} = 
\begin{cases}
     \beta \delta_0 + (1-\beta) \mathrm{U}[-A_i,A_i], & i \leq k\\
    \beta \delta_0 + (1-\beta) \mathrm{U}[\{0,\ldots,N_i-1\}], & k <i \leq K
\end{cases}
\]

for a fixed $\beta \in (0,1)$. In practice, we choose $\beta$ to be a small positive number.\footnote{$\beta >0$ is necessary so that $Q\in \Omega$ is \textit{absolutely continuous} with respect to $P_A$ and $KL(Q||P_A)$ is well-defined.}  
Consider the prior and posterior probability measures $P_\mathcal{H} \times P_A$ and $Q_{\mathcal{H},w} \times Q_\theta$ on $W \times A$.
Then, the following bound holds for the TRM risk:
\begin{theorem}[SCALE generalization bound]\label{thm:specialpac}
If the map $w \rightarrow l(h_w(X),Y)$ is $L$-Lipschitz continuous for any $(X,Y)\in \mathcal{X}\times \mathcal{Y}$ and $l$ is bounded in $[a,b]$,\footnote{In the Appendix we provide a more general bound for any loss satisfying a so-called \textit{Hoeffding assumption} \citep{JMLR:v17:15-290}.} then, with probability at least $1 - \delta$ over the draw of $S=\{(X_i,Y_i)\}_{i=1}^n \sim \mathcal{D}^n$ it holds simultaneously over all  $w \in W$, 
$\theta=\{\alpha_1,\ldots,\alpha_k,\pi_1,\ldots,\pi_K\} \in \Theta$ where $\alpha_i \in (0,A_i], i \in [k]$
and
$\pi_i \in [0,1], i \in [K]$ that:
\begin{align*}
    R_{TRM}(h_w,Q_\theta)  &\leq  
    R_{TRM,n}(h_w,Q_\theta)  \notag 
     + \frac{1}{\sqrt{n}}\left(\frac{1}{2s^2}\|w-w_0\|^2 + \mathrm{Reg}(\theta)\right) + c_n
    \label{eq:pacbayestrmfull3}
\end{align*}
where
\begin{equation}
    \mathrm{Reg}(\theta) := 
    \sum_{i=1}^k K_i(\alpha_i,\pi_i)+ \sum_{i=k+1}^K K_i(\pi_i). \label{eq:pacreg}
\end{equation}
Here, for $i \leq k$,
\begin{align*}
K_i(\alpha_i,\pi_i) =
    \log\frac{1-\pi_i}{\beta}+
    \pi_i \log\left[\frac{\pi_iA_i\beta}{(1-\pi_i)\alpha_i(1-\beta)}\right]=\nonumber
\\
\mathrm{KL}\left(\mathcal{B}(\pi_i)||\mathcal{B}(1-\beta)\right) + \pi_i \mathrm{KL}\left(\mathrm{U}[-\alpha_i,\alpha_i]||\mathrm{U}[-A_i,A_i]\right)
\end{align*}
and for $k<i\leq K$,
\begin{align*}
    K_i(\pi_i) = \mathrm{KL}\left(\mathcal{B}\left(\left(1-\frac{1}{N_i}\right)\pi_i\right)\Bigg\| \mathcal{B}\left(\left(1-\frac{1}{N_i}\right)(1-\beta)\right)\right).
\end{align*}
and $c_n=c_n(\delta,s,p,L) = o_n(1)$ vanishes as $n\to\infty$.
\end{theorem}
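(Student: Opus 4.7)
The plan is to view the augmentation-and-prediction pipeline as a single randomized classifier indexed by $(w', a) \in W \times A$, apply a PAC-Bayes inequality with data-independent prior $P_\mathcal{H} \otimes P_A$ and data-dependent posterior $\mu_{w,\theta} := Q_{\mathcal{H},w} \otimes Q_\theta$, and then de-randomize the weight coordinate using the Lipschitz hypothesis. Writing $\tilde R(w',a) := \mathbb{E}_{(X,Y)\sim \mathcal{D}}[l(h_{w'}(\rho(a)X),Y)]$ and its empirical analogue $\tilde R_n$, with $\rho(a) = \rho_1(a_1)\circ\cdots\circ\rho_K(a_K)$ the compositional transform of Section \ref{hypparametrization}, the key identity is $\mathbb{E}_{a\sim Q_\theta} \tilde R(w,a) = R_{TRM}(h_w,Q_\theta)$, and analogously for $R_{TRM,n}$.

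Applying the PAC-Bayes inequality of \cite{JMLR:v17:15-290}, valid for sub-Gaussian (hence bounded) losses, with $\lambda \asymp \sqrt{n}$, gives, with probability $\geq 1-\delta$ simultaneously over all posteriors (and hence over all $(w,\theta)$),
\begin{equation*}
\mathbb{E}_{\mu_{w,\theta}}[\tilde R] \leq \mathbb{E}_{\mu_{w,\theta}}[\tilde R_n] + \frac{\mathrm{KL}(\mu_{w,\theta} \,\|\, P_\mathcal{H} \otimes P_A) + \log(1/\delta) + C(n,\lambda)}{\sqrt{n}}.
\end{equation*}
The product structure of prior and posterior gives $\mathrm{KL}(\mu_{w,\theta} \,\|\, P_\mathcal{H} \otimes P_A) = \mathrm{KL}(Q_{\mathcal{H},w} \,\|\, P_\mathcal{H}) + \mathrm{KL}(Q_\theta \,\|\, P_A)$. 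The first summand is the standard Gaussian KL, whose $w$-dependent part equals $\tfrac{1}{2s^2}\|w-w_0\|^2$, while its $p$-dependent entropy piece (in $\sigma,s$) is $(w,\theta)$-free and absorbed into $c_n$. Independence across blocks further gives $\mathrm{KL}(Q_\theta \,\|\, P_A) = \sum_{i=1}^K \mathrm{KL}(Q_{(i)} \,\|\, P_{(i)})$.

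The per-block divergences are computed against an appropriate dominating measure. For $i \leq k$, $\delta_0$ and the Lebesgue-continuous uniform are mutually singular, so writing $Q_{(i)}$ and $P_{(i)}$ against $\delta_0 + \mathrm{Leb}|_{[-A_i,A_i]}$---well-defined because $\alpha_i \leq A_i$ and $\beta > 0$---gives an atom contribution $(1-\pi_i)\log\tfrac{1-\pi_i}{\beta}$ plus a density contribution $\pi_i \log\tfrac{\pi_i A_i}{(1-\beta)\alpha_i}$; collecting terms yields $K_i(\alpha_i,\pi_i)$, and isolating $\pi_i \log(A_i/\alpha_i) = \pi_i \mathrm{KL}(\mathrm{U}[-\alpha_i,\alpha_i] \,\|\, \mathrm{U}[-A_i,A_i])$ produces the Bernoulli-plus-uniform decomposition stated in the theorem. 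For $i > k$, the atom at $0$ lies inside the support of $\mathrm{U}\{0,\ldots,N_i-1\}$, so $Q_{(i)}$ effectively places mass $1-\tilde\pi_i$ at $0$ and spreads $\tilde\pi_i$ uniformly over $\{1,\ldots,N_i-1\}$ with $\tilde\pi_i = (1-1/N_i)\pi_i$, and similarly $\tilde\beta_i = (1-1/N_i)(1-\beta)$ for $P_{(i)}$; the $1/(N_i-1)$ normalizations cancel between numerator and denominator and the KL collapses to $\mathrm{KL}(\mathcal{B}(\tilde\pi_i) \,\|\, \mathcal{B}(\tilde\beta_i)) = K_i(\pi_i)$.

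To return to the deterministic weight $w$, I apply the Lipschitz hypothesis pointwise in $a$: $|\tilde R(w',a) - \tilde R(w,a)| \leq L\|w'-w\|$, so integrating against $\mu_{w,\theta}$ and using $\mathbb{E}_{w'\sim \mathcal{N}(w,\sigma^2 I)}\|w'-w\| \leq \sigma\sqrt{p}$ yields $R_{TRM}(h_w,Q_\theta) \leq \mathbb{E}_{\mu_{w,\theta}} \tilde R + L\sigma\sqrt{p}$ and $\mathbb{E}_{\mu_{w,\theta}} \tilde R_n \leq R_{TRM,n}(h_w,Q_\theta) + L\sigma\sqrt{p}$. Picking $\sigma^2$ slowly vanishing with $n$ (e.g., $\sigma^2 = 1/n$) makes $2L\sigma\sqrt{p}$, the residual Gaussian entropy $\tfrac{p}{2\sqrt n}\log(s^2/\sigma^2)$, the term $\log(1/\delta)/\sqrt n$, and the sub-Gaussian constant all $o_n(1)$, and they are collected into $c_n(\delta,s,p,L)$. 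I expect the main obstacle to be the per-block KL step: the continuous case ($\delta_0$ singular against a density) and the discrete case ($0$ inside the uniform's support) have qualitatively different dominant-measure structure, and recovering the asymmetric closed forms $K_i(\alpha_i,\pi_i)$ versus $K_i(\pi_i)$---in particular the $(1-1/N_i)$ factors that appear only in the discrete case---requires careful bookkeeping rather than a generic KL identity.
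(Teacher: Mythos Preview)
Your proposal is correct and follows essentially the same route as the paper: apply the Alquier-style PAC-Bayes bound on the joint space $W\times A$ with product prior $P_{\mathcal H}\times P_A$ and posterior $Q_{\mathcal H,w}\times Q_\theta$, factor the KL across the product and across the $K$ independent blocks, compute each block's KL via the Radon--Nikodym/dominating-measure argument you describe (recovering exactly $K_i(\alpha_i,\pi_i)$ and $K_i(\pi_i)$), de-randomize in $w$ via the Lipschitz assumption to pick up the $2L\sigma\sqrt p$ correction, and finally set $\lambda=\sqrt n$ and a vanishing $\sigma$ to absorb all $(w,\theta)$-free pieces into $c_n$. The only cosmetic differences are that the paper computes $dQ_{(i)}/dP_{(i)}$ directly rather than passing through an auxiliary dominating measure, and optimizes $\sigma$ over $(0,\infty)$ rather than fixing $\sigma^2=1/n$; both choices lead to the same $o_n(1)$ residual.
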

The proof is provided in Appendix \ref{proof:pacbayes}. 
The steps are as follows: 
\begin{compactenum}
    \item \textbf{PAC-Bayes for joint bound:} Using our alternative viewpoint discussed above, we leverage PAC-Bayes theory to derive generalization bounds on the expected risk jointly over models and transformations.
    \item \textbf{Derandomization:} However, in TRM we are interested in deterministic models. 
    Thus, we apply the Lipschitz continuity assumption to  provide a bound on the relative distance between the expected risk of the joint model and of a deterministic model used in TRM.
    \item \textbf{Use of Augmentation Space:} Next, we derive the explicit form of the regularizer $\text{Reg}(\theta)$ in Eq. \ref{eq:pacreg}, resorting to the Radon-Nikodym derivative $\frac{dQ}{dP},$ since neither the augmentation prior $P$ nor the augmentation posterior $Q$ are absolutely continuous with respect to the Lebesgue measure.
    \item \textbf{Optimization of bounds:} Finally, we optimize the bound over its free parameters, which leads to the form of $c_n$.
\end{compactenum}
Observe that if the loss $l(\cdot, y)$ is convex for all $y \in \mathcal{Y}$, then this bound also upper bounds the risk $R_{RM}$ of the aggregate predictors $x \rightarrow \mathbb{E}_{g \sim Q_\theta}[h_w(gx)]$. 

\subsection{Building the SCALE Algorithm} \label{subsec:buildalgo}
\par We propose to optimize the upper bound from Thm.  \ref{thm:specialpac} to learn simultaneously the parameters $w$ of the hypothesis space $\hh = \{h_w; w\in W\}$ and the parameters $\theta=\{\pi_1,\ldots,\pi_K,\alpha_1,\ldots,\alpha_k\} \in \Theta$ of the distribution $Q_\theta$. We absorb the weight decay term in the optimizer and write the training objective $L(w,\theta)$ as:
\begin{equation}
   L(w,\theta) = \frac{1}{n}\sum_{i=1}^n\EE_{g_1\sim Q_{(1)}}\left[\ldots\EE_{g_K\sim Q_{(K)}}\left[l\left(h_w\left(\prod_{j=1}^K g_j x_i\right),y_i\right)\right]\right] +\lambda_\mathrm{reg}\text{Reg}(\theta=\{\bm{\pi},\bm{\alpha}\})
   \label{eq:loss}
\end{equation}
where $\text{Reg}(\theta=\{\bm{\pi},\bm{\alpha}\})$ is the regularization term suggested by Eq.~$\ref{eq:pacreg}$ 
and $\bm{\alpha}=[\alpha_1,\ldots,\alpha_k]$, $\bm{\pi}=[\pi_1,\ldots,\pi_K]$. This regularizer is minimized when each  $\pi_i \rightarrow 1 -\beta \approx 1$ and each $\alpha_i\to A_i$, increasing the support of the uniform distributions $U_{i,\alpha_i}$. Thus the regularizer promotes frequent augmentations (large $\pi_i$) and a broad range of augmentations (large $\alpha_i$).
This leads to a trade-off with the training loss, as desired.
During the experiments, we use $\beta=0.01$.

\par At test time, we aim to use the learned transformation distribution $Q_\theta$ and model $h_w$ to compute the expected output $\bar{h}_w(x)= \EE_{g\sim Q_\theta}[h_w(gx)]$, and we approximate the expectation by Monte Carlo sampling. In the experiments we refer to the use of $\bar{h}_w$ as test-time augmentation.
\par End to end training requires the computation of the gradients with respect to the local variables $\alpha_i,\pi_i$ that appear in the distributions $Q_{(i)}$ in the iterated expectations of \eqref{eq:loss}. The general case of this setting has been adrressed in \cite{NIPS2015_1373b284}. In our case, the specific form of the TRM loss allows us to compute unbiased gradient estimators $\widehat{\nabla_{\bm{\pi}} L}$,  $\widehat{\nabla_{\bm{\alpha}} L}$  with respect to $\bm{\pi}$, $\bm{\alpha}$, without the need to resort to generic estimators such as REINFORCE \citep{williams92reinforce}. In particular, the form of $Q_{(i)}$ permits us to compute the gradients with respect to the $\pi_i$'s in closed form while we use the reparametrization trick for the $\alpha_i$'s similar to \cite{benton2020learning}.
In  Appendix \ref{sec:gradEstim} we derive the unbiased gradient estimators for both parametric and parameter-free base distributions.
Algorithm \ref{alg:selalg} shows the training process for optimizing over the parameters of the augmentation distribution and of the network.

\begin{algorithm}[t]
   \caption{SCALE: Training for learning augmentations using TRM}
   \label{alg:selalg}
\begin{algorithmic}
   \STATE \textbf{Input:} Data $S=\{x_i,y_i\}_{i=1}^n$,  $\{h_w\} \in \mathcal{H}$
   \STATE Initialize $\pi_i = \frac{1}{K}$, $\alpha_i=\varepsilon$, $\theta=\{\bm{\pi},\bm{\alpha}\}$
   \STATE Initialize model parameters $w=w_0$.
    \FOR{epoch=1 \textbf{to} Total Epochs} 
   \FOR{batch=\{batch\_x, batch\_y\} \textbf{in} $S$}
   \FOR{$(x,y)$ \textbf{in} batch}
   \STATE Sample $M$ iid transforms $g^{(j)} \sim Q_\theta$, $j=1,\ldots,M$, according to \eqref{Eq:param}
   \STATE Add all $(g^{(j)}x,y)$ to augmented\_batch
   \ENDFOR
   \STATE Compute $\text{batch\_output}=h_w(\text{augmented\_batch\_x})$
   \STATE Compute batch loss $L(w,\theta)=(\text{batch\_y,\,batch\_output})$ according to \eqref{eq:loss}
   \STATE Compute $(\widehat{\nabla_\pi L}, \widehat{\nabla_\alpha L})$
   \STATE Set parameters $w\leftarrow w-
   \lambda_w\widehat{\nabla_w L}$, $\bm{\alpha}\leftarrow \mathrm{clamp}(\bm{\alpha}-\lambda_\alpha\widehat{\nabla_\alpha L},\, \min=0,\max=A_i)$
   \STATE $\bm{\pi}\leftarrow \mathrm{clamp}(\bm{\pi}-\lambda_\pi\widehat{\nabla_\pi L},\min=0,\max=1)$
   \ENDFOR
   \ENDFOR
\end{algorithmic}
\end{algorithm}

\section{TRM under Distributional Invariance}\label{sec:theoryTRM}

In this section, we provide a theoretical result in the setting where the data have some unknown symmetry in the form of a distributional invariance.
By drawing a connection between the risks $R_{RM}$ and $R_{TRM}$ from Eq.~\ref{eq:rm},  Eq.~\ref{eq:trm}, we prove that TRM recovers the unknown invariance, as well as a model that is optimal under the "oracle", i.e., the standard risk $R_{RM}$ that knows the invariance. 

We denote the respective optimal risks by $R^*_{RM}$ and $R^*_{TRM}$, and define the set  $G_{\Omega}=\bigcup_{Q \in \Omega} G_Q$ of all transforms considered. We also define the hypothesis space of transforms composed with models in $\mathcal{H}$,
$\hh \circ G_\Omega = \{x \rightarrow (h \circ g)(x)|\,h \in \hh,\, g \in G_\Omega\}.$


The first proposition provides a lower bound on the TRM risk when the hypothesis space $\hh$ is   ``large enough'' that $\hh \circ G_\Omega \subseteq \hh$, or that it is closed under $Q$-expectations (so for any $Q \in \Omega$, $h\in \hh$ implies $x\to \EE_{g\sim Q} h(gx)\in \hh$). For detailed proofs we refer to Appendix \ref{TRMproofs}.:
\begin{proposition}[Lower Bound]\label{lb}
For any $h \in \mathcal{H}$ and $Q \in \Omega$, we have $R_{TRM}(h,Q) \geq R^*_{RM}$ under either one of the following two sufficient conditions:
\begin{enumerate}
    \item If $\hh \circ G_\Omega \subseteq \hh$;~or
    \item if $l(\cdot,y)$ convex for all $ y \in \mathcal{Y}$, and $\hh$ is closed under $Q$-expectations for all $ Q \in \Omega$.
\end{enumerate}
\end{proposition}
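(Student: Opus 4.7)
The plan is to handle the two sufficient conditions separately, with the common theme of reducing the transformed risk to a standard risk of some hypothesis that still lies in $\hh$, and then invoking the definition of $R^*_{RM}$ as an infimum.

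\textbf{Case 1 ($\hh \circ G_\Omega \subseteq \hh$).} I would start by swapping the order of integration in the definition of $R_{TRM}$ (Fubini's theorem applies under the standard measurability/integrability assumptions the paper tacitly uses). This yields
\begin{equation*}
R_{TRM}(h,Q) \;=\; \EE_{g\sim Q}\bigl[\EE_{(X,Y)\sim \mathcal{D}}\,l\bigl((h\circ g)(X),Y\bigr)\bigr] \;=\; \EE_{g\sim Q}\bigl[R_{RM}(h\circ g)\bigr].
\end{equation*}
By the assumption $\hh\circ G_\Omega \subseteq \hh$, every $h\circ g$ lies in $\hh$, hence $R_{RM}(h\circ g)\geq R^*_{RM}$ pointwise in $g$. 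Taking expectation over $g\sim Q$ preserves the inequality, giving $R_{TRM}(h,Q)\geq R^*_{RM}$.

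\textbf{Case 2 (convex loss, closure under $Q$-expectations).} Here I would introduce the aggregated predictor $\bar h(x) := \EE_{g\sim Q}[h(gx)]$, which belongs to $\hh$ by the closure hypothesis. Applying Jensen's inequality to the convex map $y'\mapsto l(y',y)$ for each fixed $Y=y$ gives, pointwise in $(X,Y)$,
\begin{equation*}
\EE_{g\sim Q}\bigl[l(h(gX),Y)\bigr] \;\geq\; l\bigl(\EE_{g\sim Q}[h(gX)],\,Y\bigr) \;=\; l(\bar h(X),Y).
\end{equation*}
Taking expectation over $(X,Y)\sim\mathcal{D}$ and using Fubini on the left, I obtain $R_{TRM}(h,Q)\geq R_{RM}(\bar h)\geq R^*_{RM}$, since $\bar h\in\hh$.

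No step is a genuine obstacle: both arguments are one-line applications of Fubini plus either monotonicity of expectation or Jensen's inequality. The only mild care needed is to justify Fubini (integrability of $l$ under the product measure $\mathcal{D}\otimes Q$) and to verify measurability of $\bar h$ so that the phrase \emph{closed under $Q$-expectations} is applied to a bona fide element of $\hh$; these are implicit in the paper's tacit measurability conventions.
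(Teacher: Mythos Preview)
Your proof is correct and follows essentially the same approach as the paper. For Case~2 it is identical (Jensen applied to the convex loss, then closure under $Q$-expectations); for Case~1 your argument is actually slightly more direct than the paper's, which routes through an auxiliary lemma swapping $\inf_{h}$ with $\EE_{g\sim Q}$ to prove $R^*_{TRM}\ge R^*_{RM}$, whereas you bound $R_{TRM}(h,Q)$ pointwise via $R_{RM}(h\circ g)\ge R^*_{RM}$ and then average over $g$.
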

Condition 1 is not restrictive, because model spaces of interest are typically very large and expressive. For example if $\Omega$ contains distributions on 2D affine transformations, then even the simple model class $\hh$ of all linear classifiers satisfies the condition. \par
The next proposition shows how an assumption about the type of augmentations within $G_\Omega$ can lead to an upper bound on the risk $R^*_{TRM}$.

. 

\begin{proposition}[Upper Bound]\label{ub}
Let $G_{inv} = \{g \in G_\Omega: (X,Y)=_d (gX,Y)\}$ be the set of \emph{distributional invariances} $g$. If there is a distribution $Q \in \Omega$ such that
its support $G_Q$ is included in the set of distributional invariances, i.e., 
$G_Q \subseteq G_{inv}$, then $R^*_{TRM} \leq R^*_{RM}$. 
\end{proposition}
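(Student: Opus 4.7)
The plan is to exhibit a specific pair $(h,Q)\in\hh\times\Omega$ whose transformed risk equals the standard risk $R_{RM}(h)$, and then infimize over $h$. Concretely, let $Q\in\Omega$ be a distribution whose support satisfies $G_Q\subseteq G_{inv}$, as guaranteed by hypothesis. Fix an arbitrary $h\in\hh$ and expand
\begin{equation*}
R_{TRM}(h,Q) \;=\; \EE_{g\sim Q}\,\EE_{(X,Y)\sim\mathcal{D}}\bigl[l(h(gX),Y)\bigr],
\end{equation*}
where the order of expectations can be swapped by Fubini--Tonelli (implicitly assumed throughout the paper by the measurability conventions).

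The key step is to observe that for every $g$ in the support of $Q$ we have $g\in G_{inv}$, meaning $(gX,Y)=_d(X,Y)$. Consequently, for each such $g$,
\begin{equation*}
\EE_{(X,Y)\sim\mathcal{D}}\bigl[l(h(gX),Y)\bigr] \;=\; \EE_{(X,Y)\sim\mathcal{D}}\bigl[l(h(X),Y)\bigr] \;=\; R_{RM}(h).
\end{equation*}
Taking the outer expectation over $g\sim Q$ (a constant in $g$) yields $R_{TRM}(h,Q)=R_{RM}(h)$.

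Finally, by definition of the optimum,
\begin{equation*}
R^*_{TRM} \;=\; \inf_{h'\in\hh,\,Q'\in\Omega} R_{TRM}(h',Q') \;\leq\; \inf_{h\in\hh} R_{TRM}(h,Q) \;=\; \inf_{h\in\hh} R_{RM}(h) \;=\; R^*_{RM},
\end{equation*}
which is the desired inequality.

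The proof is essentially a one-line observation once invariance is applied pointwise in $g$; there is no real obstacle. The only subtlety worth flagging is the implicit measurability assumption needed to interchange the two expectations, and the need to handle the support $G_Q$ in the measure-theoretic sense (i.e.\ the statement $(gX,Y)=_d(X,Y)$ is used for $Q$-almost every $g$, which is exactly what ``$G_Q\subseteq G_{inv}$'' provides). No convexity of $l$ or closure properties of $\hh$ are needed for this direction, in contrast with Proposition \ref{lb}.
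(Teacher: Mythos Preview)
Your proof is correct and essentially identical to the paper's own argument: fix a $Q$ supported in $G_{inv}$, swap expectations via Fubini--Tonelli, use $(gX,Y)=_d(X,Y)$ to reduce the inner expectation to $R_{RM}(h)$, and then infimize over $h$. The additional remarks you make about measurability and the absence of convexity/closure hypotheses are accurate and consistent with the paper.
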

When the conditions for both the upper and lower bounds hold, then the optimal classifier under the TRM risk is also optimal for the standard risk $R_{RM}$. We let $h^*_{RM}$ be any optimal classifier under the standard risk $R_{RM}$. 
Further, a classifier $h$ is $G$-invariant if for all $g
\in G$ and all $x \in \mathcal{X}$,  $h(gx)=h(x)$.

\begin{corollary}[Zero gap under distributional invariance]\label{zg}\
\begin{enumerate}
    \item Assume $\hh \circ G_\Omega \subseteq \hh$  and that there is a $Q \in \Omega$ such that $G_Q \subseteq G_{inv}$. Then, $R^*_{TRM}=R^*_{RM}$ and any pair $(h^*_{RM},Q)$ minimizes the TRM risk. 
    \item If in addition the conditions in Proposition \ref{lb}, part 2, hold, and there is a compact group $G \subseteq G_{inv}$ such that $\mathrm{Unif}(G) \in \Omega$\footnote{The theorem holds mutatis mutandis for locally compact groups too, if we extend $\Omega$ to contain more general (not just probability) measures, e.g., the right-invariant Haar measure $U_G$; and replace expectations with integrals.}, then there exists a $G$-invariant TRM-optimal classifier $h^*_{TRM}$.
\end{enumerate}
\end{corollary}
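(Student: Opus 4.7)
The plan is to derive part 1 by combining Propositions \ref{lb} and \ref{ub} directly, and to build an explicit $G$-invariant TRM-optimal classifier in part 2 by averaging an $R_{RM}$-optimal classifier over the compact invariance group.

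For part 1, the equality $R^*_{TRM}=R^*_{RM}$ is immediate. Proposition \ref{lb}, part 1, together with the hypothesis $\mathcal{H}\circ G_\Omega\subseteq \mathcal{H}$, gives the lower bound $R^*_{TRM}\geq R^*_{RM}$; Proposition \ref{ub}, together with the existence of $Q\in\Omega$ with $G_Q\subseteq G_{inv}$, gives the upper bound $R^*_{TRM}\leq R^*_{RM}$. To show that the specific pair $(h^*_{RM},Q)$ attains $R^*_{TRM}$, I would exchange the order of expectation (using Fubini under the mild integrability assumed throughout) and then, for each $g\in G_Q\subseteq G_{inv}$, use $(gX,Y)\stackrel{d}{=}(X,Y)$ to conclude $\mathbb{E}_{(X,Y)}[l(h^*_{RM}(gX),Y)]=R^*_{RM}$. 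Averaging over $g\sim Q$ gives $R_{TRM}(h^*_{RM},Q)=R^*_{RM}=R^*_{TRM}$, so $(h^*_{RM},Q)$ is optimal.

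For part 2, define the candidate classifier by group averaging:
\[
\bar h(x):=\mathbb{E}_{g\sim\mathrm{Unif}(G)}\bigl[h^*_{RM}(gx)\bigr].
\]
Three things must be checked. (i) \emph{Membership:} by the assumed closure of $\mathcal{H}$ under $Q$-expectations applied with $Q=\mathrm{Unif}(G)\in\Omega$, we get $\bar h\in\mathcal{H}$. (ii) \emph{$G$-invariance:} for any $g_0\in G$, right-invariance of the Haar measure on the compact group $G$ yields $\bar h(g_0 x)=\mathbb{E}_{g}[h^*_{RM}(gg_0 x)]=\mathbb{E}_{g}[h^*_{RM}(gx)]=\bar h(x)$. (iii) \emph{Optimality:} by Jensen's inequality applied to the convex loss $l(\cdot,y)$ and then distributional invariance $(gX,Y)\stackrel{d}{=}(X,Y)$ for each $g\in G\subseteq G_{inv}$,
\[
R_{RM}(\bar h)\leq \mathbb{E}_{(X,Y)}\mathbb{E}_{g}\bigl[l(h^*_{RM}(gX),Y)\bigr]=\mathbb{E}_{g}[R^*_{RM}]=R^*_{RM},
\]
so $R_{RM}(\bar h)=R^*_{RM}$. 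Finally, using $G$-invariance of $\bar h$,
\[
R_{TRM}(\bar h,\mathrm{Unif}(G))=\mathbb{E}_{(X,Y)}\mathbb{E}_{g}\bigl[l(\bar h(gX),Y)\bigr]=\mathbb{E}_{(X,Y)}[l(\bar h(X),Y)]=R^*_{RM}=R^*_{TRM},
\]
so $\bar h$ is a $G$-invariant TRM-optimal classifier.

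The hardest ingredient is a measure-theoretic point rather than a deep one: verifying that the group-average $\bar h$ is genuinely a well-defined element of $\mathcal{H}$ (not merely an a.e.\ representative), that $g\mapsto h^*_{RM}(gx)$ is measurable so the Haar integral makes sense, and that the Fubini interchanges used above are justified; this is where the compactness of $G$ and boundedness/integrability of $l$ are doing work. Everything else is a clean assembly of the two propositions with Jensen and the invariance of the Haar measure.
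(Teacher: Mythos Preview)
Your proposal is correct and follows essentially the same approach as the paper: part 1 combines Propositions \ref{lb} and \ref{ub} and then invokes the identity $R_{TRM}(h^*_{RM},Q)=R_{RM}(h^*_{RM})$ via distributional invariance, while part 2 constructs $\bar h(x)=\mathbb{E}_{g\sim\mathrm{Unif}(G)}[h^*_{RM}(gx)]$, uses closure under $Q$-expectations for membership, Jensen plus invariance for RM-optimality, and right-invariance of the Haar measure for $G$-invariance. The only cosmetic difference is that in part 2 the paper first argues $(\bar h,\mathrm{Unif}(G))$ is TRM-optimal by re-invoking part 1 (since $\bar h$ is RM-optimal) and then proves $G$-invariance, whereas you prove $G$-invariance first and use it directly to compute $R_{TRM}(\bar h,\mathrm{Unif}(G))=R_{RM}(\bar h)$; both routes are equally valid.
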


This corollary is consistent with the experimental results in Section \ref{sec:MNIST} for RotMNIST---an approximately rotationally invariant dataset---and serves as a proof of concept for the functionality of our SCALE algorithm. 

\section{Experiments}

\subsection{Base Transformations}\label{sec:parametrization}

We parametrize $Q_\theta$ using a set of base transformations commonly used in image processing. Specifically, in both experiments presented in Sections \ref{sec:MNIST}, \ref{sec:expCIFAR} we learn a distribution $Q_\theta$ where each sample $g\sim Q_\theta$ is constructed as $g=g_1g_2\ldots g_K$, with $K=7$. 
\par The first four transformations $g_1,\ldots,g_4$ correspond to rotation, $X$-axis scaling, $Y$-axis scaling and $X$-axis shearing. Each transformation is applied with probability $\pi_i$, and its parameter is sampled uniformly from $[-\alpha_i,\alpha_i]$. 
The transformations $g_5$, $g_6$ correspond to discrete rotations by 180 degrees, and horizontal flips, respectively. 
They are applied with probabilities $\pi_5$, $\pi_6$, respectively, and once they are applied, they perform the discrete rotation or horizontal flip with probability 0.5.
Finally $g_7$ is a random cropping of the image, applied with  probability $\pi_7$. (For more details see Appendix \ref{sec:paramdetails}.)
\begin{figure*}[t]
\vskip -0.5in
\begin{center}
\subfloat{\includegraphics[width=0.3\textwidth]{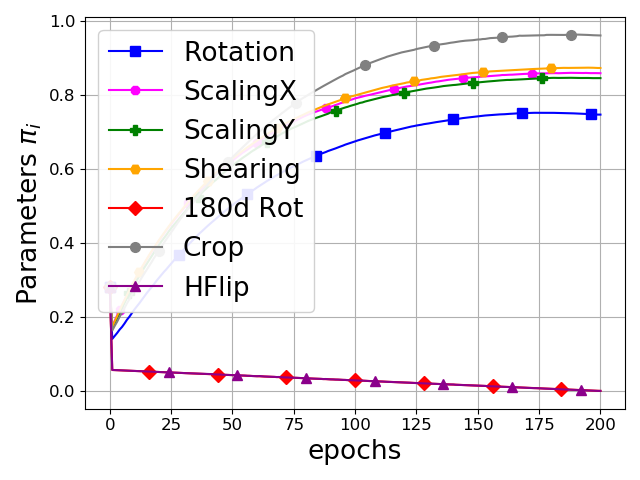}}
\subfloat{\includegraphics[width=0.3\textwidth]{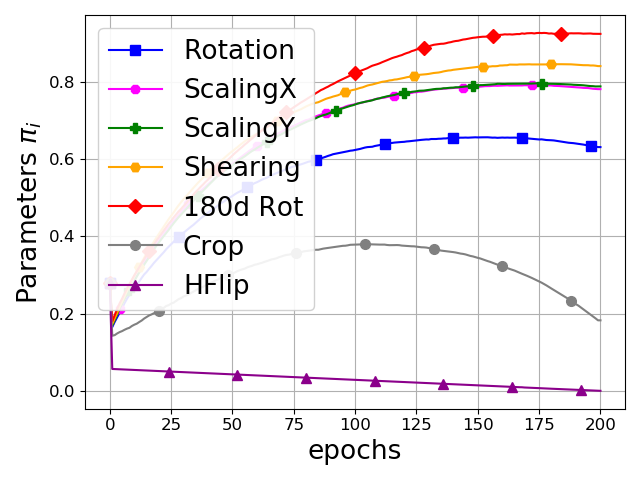}}
\subfloat{\includegraphics[width=0.3\textwidth]{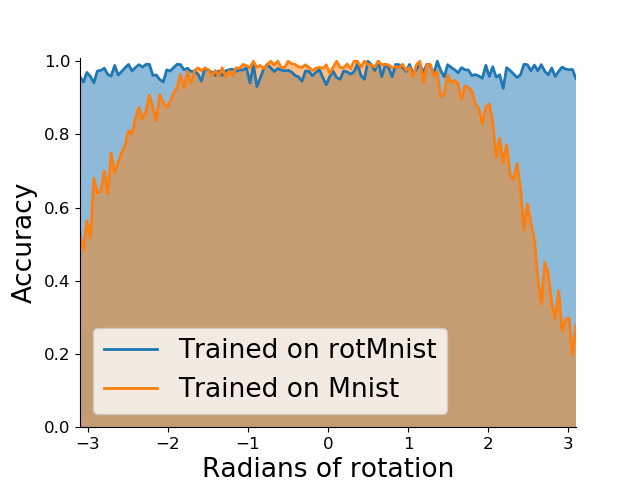}}
\caption{{Parameters $\pi_i$ during training using SCALE on \textbf{Left}: MNIST, \textbf{Center}: rotMNIST. \textbf{Right}: Accuracy of models trained on MNIST and rotMNIST, using MNIST images rotated by various angles.}}
\label{fig:4.2}

\end{center}

\end{figure*}
\subsection{Experiments on MNIST, rotMNIST: Discovering Distributional Invariances}\label{sec:MNIST}
Using the parametrization from Section \ref{sec:parametrization}, we train on MNIST  \citep{lecun1998MNIST} and rotated MNIST (rotMNIST). We train a simple CNN similar to the model used in \cite{benton2020learning}.
Figure \ref{fig:4.2} shows how the probabilities $\pi_i$ of the transformation distributions  change during the training.
\par For MNIST,  the parameters $\pi_i$ giving the probabilities of rotation, scaling, shearing and random cropping increase and converge to values above 0.6, while the $\pi_i$-s for $180^o$ rotation and horizontal flips converge to zero. For the rotation, the learned distribution corresponds to angles uniformly sampled from $[-0.31,0.31]$ radians, which is consistent with the common practice of augmenting with small rotations.
\par When training on rotMNIST, we observe that the parameters $\pi_1,\ldots,\pi_5$ for rotation (continuous and discrete),  scaling and shearing converge to values above 0.6, while the parameters $\pi_6,$ $\pi_7$ for horizontal flips and random crops go close to zero. 
This indicates that on rotMNIST, SCALE selects augmentations that are useful on the original dataset (MNIST) and also all rotations that correspond to the invariance of the dataset. 
Additionally, the right plot of Figure \ref{fig:4.2} shows that---contrary to MNIST---on rotMNIST SCALE results in an invariant classifier. 
These results are consistent with Corollary \ref{zg}.
On rotMNIST we achieve test accuracy of 99.1\%, which compares favorably to Augerino, that  achieves test accuracy of 98.9\% on the same network. According to \citet{benton2020learning} only \citet{Weiler2019E2Equiv} have reported a better result  using a network equivariant to rotations.

\subsection{Learning Data Augmentations on CIFAR 10/ 100}\label{sec:expCIFAR}
\begin{figure*}[t]
\begin{center}
\vskip -0.5in
\subfloat{\includegraphics[width=0.4\textwidth]{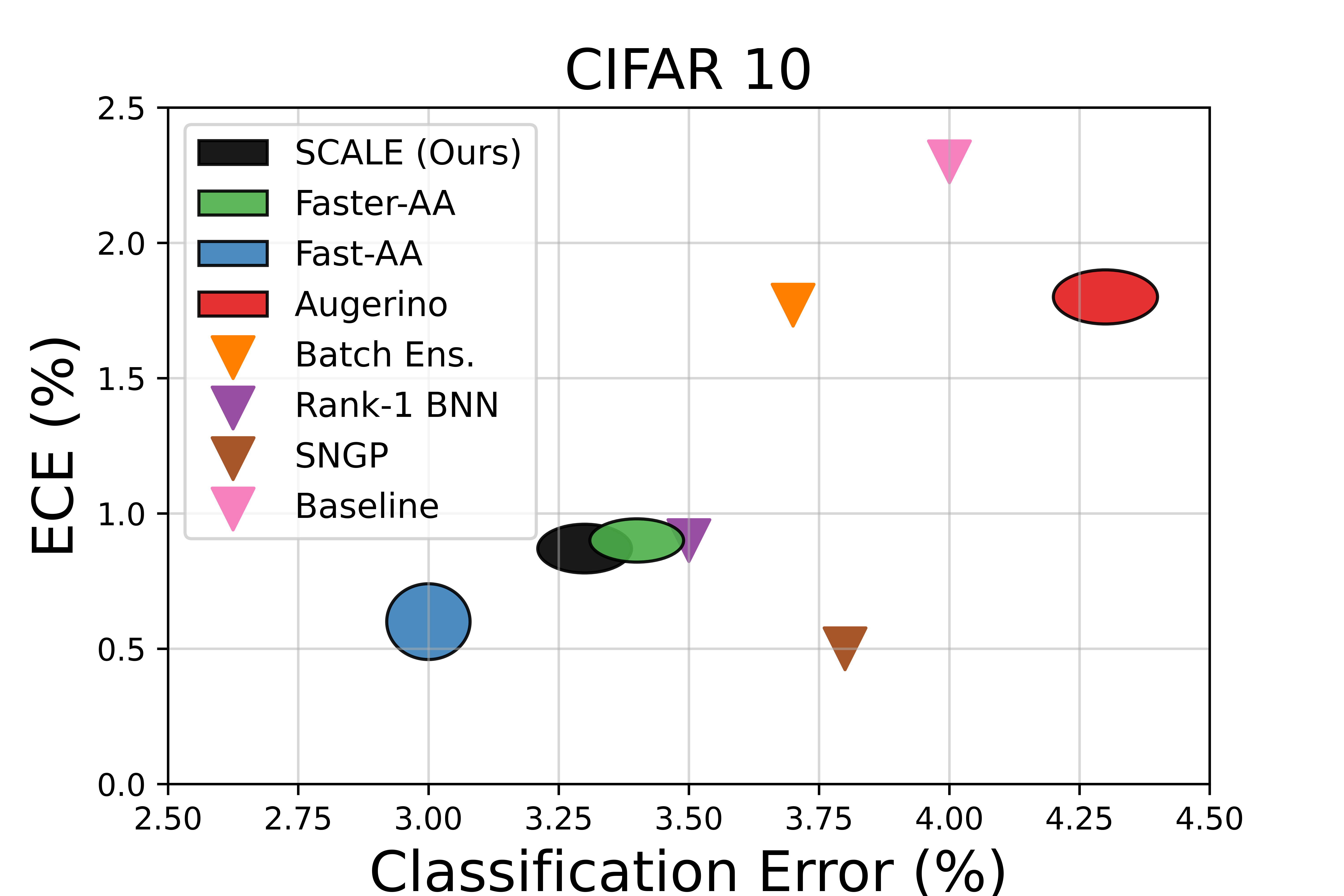}}
\subfloat{\includegraphics[width=0.4\textwidth]{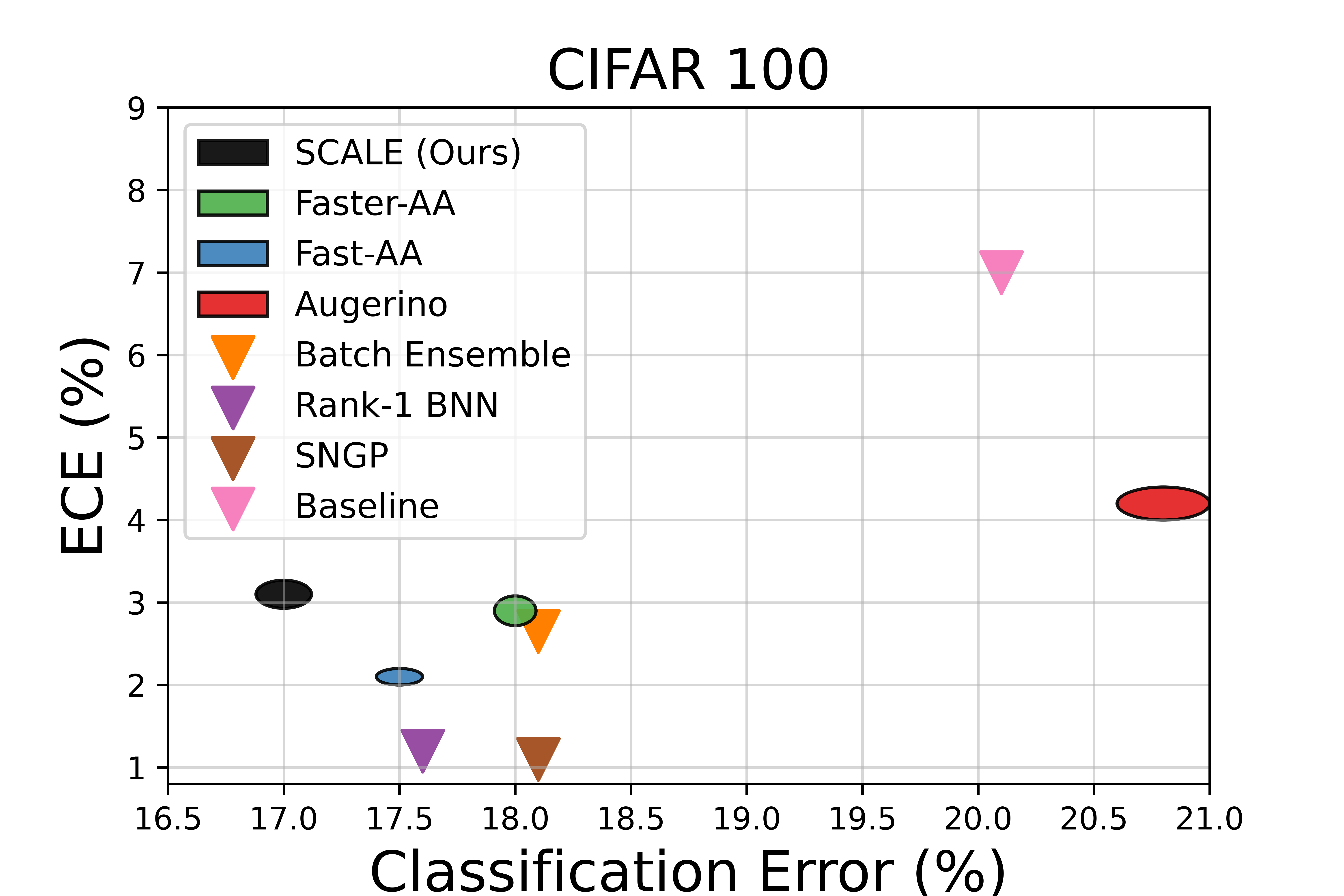}}
\caption{{Comparison of the ECE and Classification Error achieved by the various methods on CIFAR10/100. In the methods where test-time augmentation was performed the standard deviation in the results is indicated by the width and the height of the ellipses (corresponding to two standard deviations). For the methods drawn with a triangle the standard deviation is not provided.}}
\label{fig:calibration}
\end{center}

\end{figure*}
Using SCALE and the parametrization shown in Section \ref{sec:parametrization}, we learn useful data augmentations on the CIFAR 10 and CIFAR 100 datasets \citep{krizhevsky2009learningml}. 
For both CIFAR 10 and CIFAR 100 we used a Wide ResNet 28-10 \citep{zagoruyko2016WideresNet} (for training details see Appendix \ref{sec:trainDetails}). 
\par We compare the results for our method and for  prior works on learning data augmentations that are discussed in Section \ref{sec:related}. 
For Trivial-Augment (TA) \citep{TrivialAugment} and DADA \citep{li2020dada} we evaluate using the same augmentation space as our method (presented in Section \ref{sec:parametrization} and Appendix \ref{sec:paramdetails}).
First, we evaluate the trained models on the original test images, without applying any test-time augmentation. Additionally, for each method, we perform test-time augmentation by sampling augmented copies from the  augmentation policy learned by the corresponding method and taking the average prediction. 
Table \ref{tab:CIFARAccuracy} shows the accuracy for CIFAR 10 and CIFAR 100, both using test-time augmentation with $4$ and $8$ transforms, and without test-time augmentation. Finally Table \ref{tab:Ablation} shows an ablation study on the use of our proposed parametrization and regularizer.
\par On both CIFAR 10 and CIFAR 100, SCALE converges to random rotations, scalings, shearings, horizontal flips and crops, while it does not pick $180^o$ discrete rotations. Figure \ref{fig:LearnParam} shows the final parameters $\pi_i$, $\alpha_i$ learned using SCALE on CIFAR10/100. SCALE outperforms Augerino and DADA on both CIFAR 10 and CIFAR 100. 
Also, SCALE performs similarly 
to  Fast-AutoAugment \cite{sungbin2019FastAA} (Fast-AA) and Faster-AutoAugment \cite{hataya2020faster} (Faster-AA) ---that first search for an augmentation policy and then train a model---on CIFAR 10 and outperforms both methods on CIFAR 100.

\begin{table*}[ht]

\caption{{Test accuracy of Wide ResNet 28-10 trained on CIFAR 10/100, using several method for learning data augmentations. We show the test accuracy of the models when we do not perform test-time augmentation (No TestAug.) and when we perform test-time augmentation (TestAug.) with $n=4,8$ samples. 
The values presented  are the average over five trials. For all methods, the standard deviation is close to 0.1\% (please see the appendix for a table containing the standard deviations) We also report the additional time that each method spends in policy search in GPU Hours relative to the standard training (No Augm. column)}}
\label{tab:CIFARAccuracy}

\begin{center}
\begin{small}
\begin{sc}
\resizebox{1\textwidth}{0.06\textheight}{
\begin{tabular}{|l|c|c|c|c|c|c|c||r|}
\hline
 & & No Augm. & Augerino & Fast-AA  & Faster-AA & TA & DADA &\textbf{SCALE}  \\
 
 & & & +Flips& & & & (geometric) & \textbf{(Ours)} \\
 
\hline
\multirow{3}{*}{CIFAR10} &No TestAug. & 92.2\%  & 95.7\% & \textbf{97.2}\% & 97.1\% & 97.2\% &96.6\%  & 96.7\% \\
\cline{2-9}
 &TestAug. (N=4) & -  & 95.7\% & \textbf{97.0}\% & 96.6\% & 96.8\% & 96.5\% & 96.7\% \\
 \cline{2-9}
 &TestAug. (N=8) & -  & 95.8\% & \textbf{97.2}\% & 96.8\%  & 96.9\% & 96.8\%& 96.9\% \\
\hline\hline
\multirow{3}{*}{CIFAR100} & No TestAug. & 73.1\%  & 79.4\% & 82.5 \% & 81.9\% & \textbf{83.1}\% & 79.4\% &82.7\% \\
\cline{2-9}
& TestAug. (N=4) & - & 79.2\% & 82.5 \% & 82.0\% & 82.9\% & 79.8\% & \textbf{83.0}\% \\
\cline{2-9}
& TestAug. (N=8) & -  & 79.3\% & 82.9 \% & 82.4\% & \textbf{83.6}\% & 80.1\% &83.3\% \\
\hline 
\multicolumn{2}{|c|}{GPU Hours (CIFAR 10/100)} & 5h & +1h & +8h & +3h & +1h &+2.5h &+1.5h \\
\hline
\end{tabular}}
\end{sc}
\end{small}
\end{center}

\end{table*}

\begin{figure*}[h]
\begin{center}
\subfloat{\includegraphics[width=0.245\textwidth]{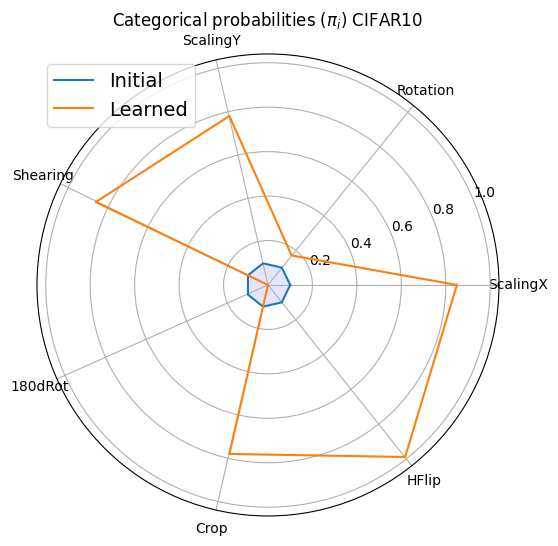}}
\subfloat{\includegraphics[width=0.245\textwidth]{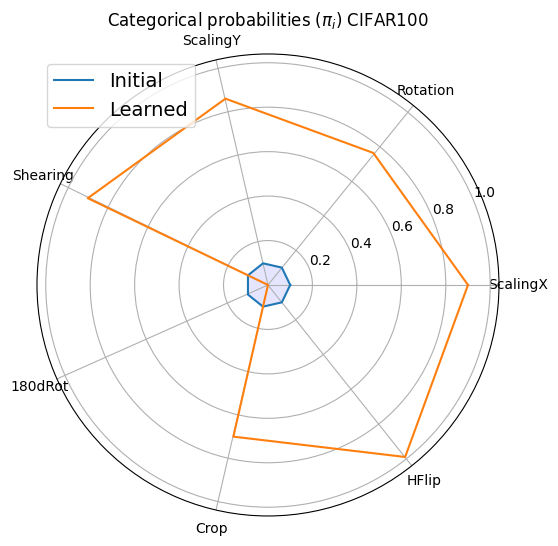}}
\subfloat{\includegraphics[width=0.245\textwidth]{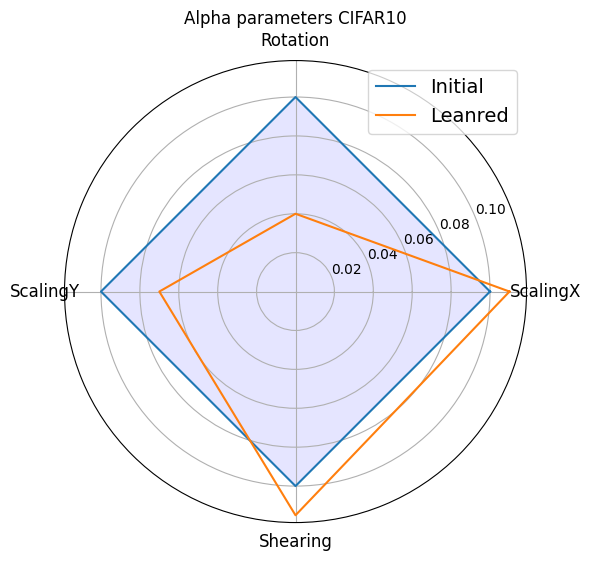}}
\subfloat{\includegraphics[width=0.245\textwidth]{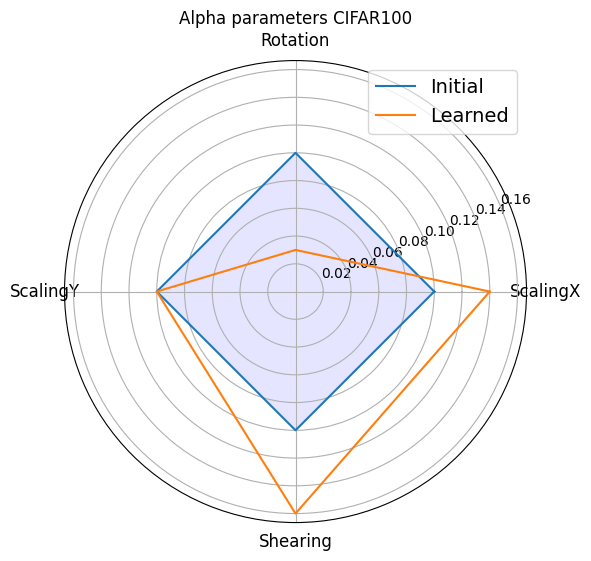}}
\caption{{Initial values of  $\pi_i$, $\alpha_i$ and the final parameters learned after training with SCALE on CIFAR10 and CIFAR100}}
\label{fig:LearnParam}
\end{center}
\end{figure*}

\begin{table*}[ht]
\caption{{Ablation study on the proposed regularization and parametrization. First we evaluate a model using the Augerino parametrization and our proposed PAC-Bayes regularizer (without the addition of flips). Second we evaluate a model using the SCALE parametrization and the regularizer used in Augerino ($L_2$ regularizer). Finally we show our method with both our proposed regularizer and parametrization }}
\label{tab:Ablation}
\begin{center}
\begin{small}
\begin{sc}
\begin{tabular}{|l|cccr|}
\hline
 & & Augerino (No Flips) & SCALE & \textbf{SCALE} \\ 
 & & + PB regularizer & + Aug regularizer & \textbf{(Ours)} \\
 \hline
 \multirow{3}{*}{CIFAR10}& No TestAug. & 94.2\% & 96.6\% & \textbf{96.7}\%  \\
\cline{2-5}
& TestAug. (N=4) & 94.3\% & 96.7\% & \textbf{96.7}\% \\
\cline{2-5}
& TestAug (N=8) & 94.4\% & 96.8\% & \textbf{96.9}\%  \\
\hline
\end{tabular}
\end{sc}
\end{small}
\end{center}
\end{table*}
\subsection{Learning Augmentations can Improve Calibration}\label{sec:cal}
In addition to accuracy,  many applications require  a model to be well calibrated, which means that it provides an accurate uncertainty estimate for its predictions. 
In a classification setting we can use the output of the softmax layer  as an indication of the model's confidence that the input belongs to a specific class. In particular, consider a model $h:\mathcal{X} \to \R^{|\mathcal{Y}|}$ that assigns score $h_y$ (say, a softmax output) to each class $y$. As discussed in \citep{cox1958two,degroot1983comparison,gneiting2007probabilistic,guo17a}, given a random datapoint $(X,Y)$ drawn from the data distribution $\mathcal{D}$,  a model $h$ is perfectly calibrated if
$
\mathbb{P}_{(X,Y)\sim \mathcal{D}}\left(Y=y|\ h_y\left(X\right)=p\right)=p,\quad \forall p\in [0,1].
$
\par A commonly used  metric for  quantifying the calibration of a model is the Expected Calibration Error (ECE) (see e.g., \citep{harrell2015regression,naei15}): the samples are partitioned into $M>0$ equally spaced bins according to the confidence $h_{\hat{y}}(x)$ for the predicted category $\hat{y}=\mathrm{argmax}_y(h_y(x))$. Let $B_m$ be the set containing the indices of datapoints that belong to the $m$-th bin, for $m\in[M]$. For the $m$-th bin we define the mean accuracy as $\bm{\mathrm{acc}}(B_m)=\frac{1}{|B_m|}\sum_{i\in B_m}\mathbbm{1}(\hat{y}_i=y_i)$ and the mean confidence as $\bm{\mathrm{conf}}(B_m)=\frac{1}{|B_m|}\sum_{i\in B_m}h_{\hat{y}_i}(x_i)$. 
We can then define the Expected Calibration Error as:
$\mathrm{ECE}=\sum_{m=1}^M \left|\bm{\mathrm{acc}}(B_m)-\bm{\mathrm{conf}}(B_m)\right|\cdot |B_m|/n$.

We compare a baseline deterministic model trained with the standard augmentations presented in \citet{zagoruyko2016WideresNet} and evaluated without test-time augmentation to models trained with SCALE, Augerino, Fast-AA and Faster-AA and evaluated with test-time augmentation using 4 augmented datapoints \cite{gawlikowski2021survey}.
Additionally  we  compare with methods designed to improve the calibration of a model that are presented in \cite{nado2021uncertainty}. Specifically we compared with Batch Ensemble \cite{Wen2020BatchEnsemble}, Rank-1 BNN \cite{pmlr-v119-dusenberry20a} and SNGP Ensemble \cite{SNGP2020}.  
For the latter three methods we use the implementation and results provided in \cite{nado2021uncertainty}. For both CIFAR 10 and CIFAR 100, 
all methods use the same WideResNet 28-10 network architecture. 
\par Figure \ref{fig:calibration} compares the calibration and classification errors achieved by the various methods on CIFAR 10/100.
SCALE  outperforms the baseline and Augerino,
and obtains classification error and ECE comparable to  more computationally expensive methods that learn augmentation policies (Faster-AA, Fast-AA), and also to methods that aim to improve the calibration of models. 
It is important to note that SCALE mainly aims to minimize the classification error of a model, and not its calibration error. 
Models that only optimize the classification performance need not be well calibrated. 
Thus, perhaps surprisingly, we show that SCALE results in models not only with low classification error (on CIFAR 100 it achieves the lowest error) but also low with calibration error---close to the error achieved by methods that mainly aim to improve calibration. We also present reliability diagrams that show, for each confidence bin, the mean accuracy achieved by the corresponding method. Figure \ref{fig:reliabilityC} shows the reliability diagrams of the baseline deterministic model and of the models trained with SCALE, Augerino, Fast-AutoAugment and Faster-AutoAugment which are evaluated using test-time augmentation with 4 augmented transforms. 
\begin{figure}
    \begin{center}
    \subfloat{\includegraphics[width=0.19\textwidth]{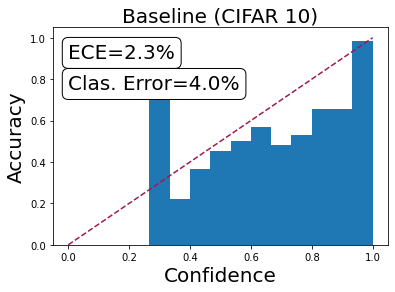}}
    \subfloat{\includegraphics[width=0.19\textwidth]{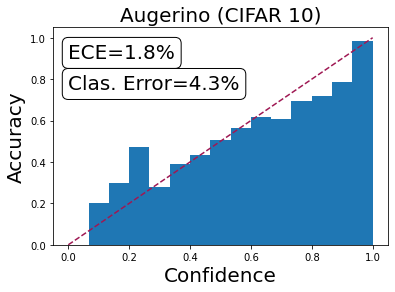}}
    \subfloat{\includegraphics[width=0.19\textwidth]{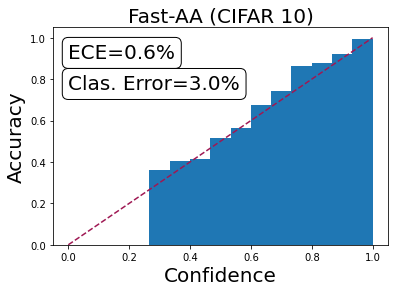}}
    \subfloat{\includegraphics[width=0.19\textwidth]{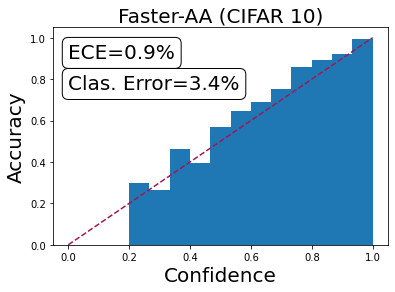}}
    \subfloat{\includegraphics[width=0.19\textwidth]{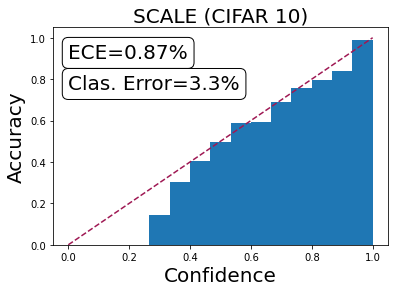}}

    \subfloat{\includegraphics[width=0.19\textwidth]{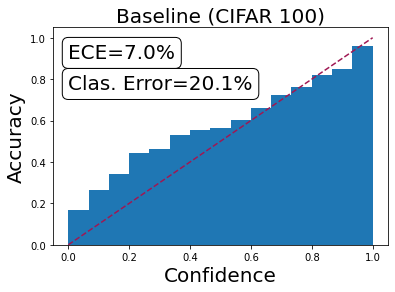}}
    \subfloat{\includegraphics[width=0.19\textwidth]{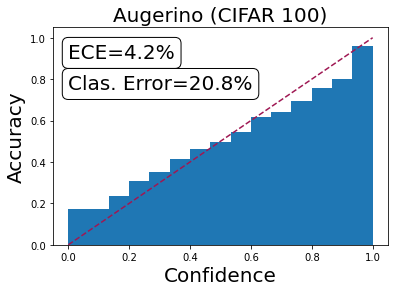}}
    \subfloat{\includegraphics[width=0.19\textwidth]{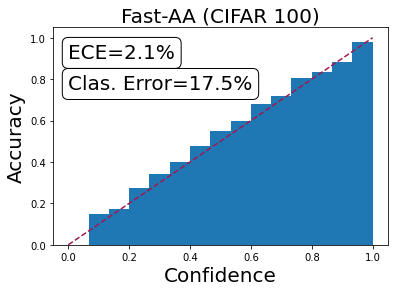}}
    \subfloat{\includegraphics[width=0.19\textwidth]{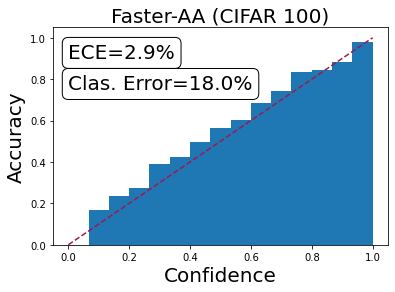}}
    \subfloat{\includegraphics[width=0.19\textwidth]{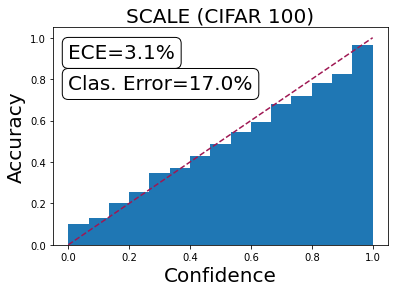}}
    \caption{ Reliability diagrams of various methods for learning data augmentations on CIFAR 10/100.}
    \label{fig:reliabilityC}
    \end{center}
\end{figure}
\section{Related Work}\label{sec:related}
\vspace{-1em}
There is a vast amount of related work; we can only discuss the most closely related ones.
\par {\bf VRM.} We can view ETRM as a generalization of Vicinal Risk Minimization (VRM) \citep{chapelle2001vicinal}. In VRM the Dirac masses around each training datapoint are replaced with vicinal functions, usually uniform or Gaussian distributions centered at each point. It has been proven empirically \citep{CAO2015185} and theoretically \citep{zhang2018genboundvrm} that VRM enjoys some benefits over ERM, but is highly dependent on the right choice of the vicinal functions. Instead of fixed vicinal functions, we learn transformation distributions. In this sense, our learning algorithm adapts the vicinal functions to the task. 

\par {\bf Invariance.} Data augmentation can be viewed as an implicit restriction to models invariant to predefined transformations.
When the invariance is known, and the transformations have a group structure, an exact way to guarantee the invariance is to  ``bake it in'' the network design. This can be traced back to \citep{fukushima1980neocognitron,lecun1989backpropagation} with the development of CNNs, which use convolutions to build translation equivariant layers. Architectures that extend the idea to discrete \citep{cohen2016group} or continuous \citep{WeilerHS18} rotational invariance and to more general Lie Groups \citep{pmlr-v119-finzi20a} have been proposed. More recently they have been  generalized beyond Euclidean domains \citep{Esteves_2018_ECCV,maron2018invariant,pmlr-v97-cohen19d}. 
When the data symmetry is unknown, \citet{anselmi2019symmetry} assume that the dataset is partitioned into full orbits of an unknown group and propose a symmetry-adapted regularization to find it. 

\par {\bf Data augmentation (DA).}
The concept of data augmentation in machine learning goes back at least to \citet{baird1992document}. More recently, generative networks have been used to generate augmented data   \citep{mirza2014conditional,antoniou2017data,robey2020modelbased}. 
There is a variety of work on designing augmentations \citep{zhong2017random,lopes2019improving,perlin1985image,zhang2017mixup}. 
\citet{vedaldi2011learning} propose a method for learning equivariant SVM regressors.
\citet{novotny2018self} introduce augmentations on the fly to learn equivariant descriptors in a self-supervised setting.
Deeper theoretical understanding on the relative efficiency gains of DA has been obtained by \citet{chen2020group}, who develop a  group-theoretic framework, and \citet{lyle2016analysis}, who compare feature averaging with data augmentation on the grounds of generalization using PAC-Bayes theory under the assumption that $(X,Y)=_d(gX,Y)$. We derive the bound jointly over $(Q,h)$ without this assumption.

\citet{ratner2017learning} use a generative adversarial approach to learn sequences of predefined valid transformations. Similarly 
AutoAugment
\citep{cubuk2018autoaugment} learns augmentation policies using reinforcement learning to search over a predefined space of transformations.  
Many works aim to reduce the high computational complexity of the AutoAugment policy search: RandAugment \citep{cubuk2019randaugment} simplifies  the search space to two hyperparameters, Fast-AutoAugment \citep{sungbin2019FastAA} proposes a more efficient search strategy based on density matching and Faster-AutoAugment \citep{hataya2020faster} introduces a gradient-based search method that minimizes the distance between the original and the augmented dataset. 
Similarly to our method, \citet{lin2019ohl} propose to formulate the augmentations as parametrized distributions. They suggest a bi-level optimization algorithm to train the model simultaneously with the augmentation parameters using policy gradients. While faster than AutoAugment this approach still introduces a very big overhead over standard training. \cite{hitaya2022} also propose a bi-level optimization problem to directly optimize the validation accuracy without proxy tasks or smaller augmentation spaces by approximating the inverse Hessian via a truncated von Neumann series, an approximation that was first studied in \cite{Lorraine2019OptimizingMO}. A similar automatic augmentation algorithm was proposed in \cite{pmlr-v174-raghu22a} for electrocardiograms. Finally \citet{TrivialAugment} propose Trivial-Augment, a simple tuning-free  augmentation policy, and argue it achieves comparable or better performance than the above methods presented.\par

\citet{wilk2018learning} cast the problem as Bayesian model selection and learns data transformations jointly with a downstream task by maximizing the marginal likelihood. This is applied to Guassian processes by imposing an invariance constraint on the kernel. The method is restricted to differentiable transformations and not directly applicable to deep neural networks due to the inefficiency to scale the marginal likelihood. Extending this work to invariant deep kernel learning \cite{SchwobelJOW22} propose to compute the marginal likelihood only in the last layer of a deep network and show some performance gains but also report scalability issues. Lastly, \citet{Immeretal22} extend this formalism to deep networks by proposing a Laplace approximation to the marginal likelihood. While they demonstrate some performance gain the computation that involves gradients of the log determinant of the Hessian of the model introduces big overhead over standard training.

\par Augerino \citep{benton2020learning} optimizes over a continuous parametrization of augmentation distributions, and thus excludes discrete transformations (e.g., horizontal flips). 
Augerino uses the negative norm of the parameters of the learned augmentation distribution as a regularization term. 
This choice of regularizer may be more unstable, because it leads to minimizing a concave function. We compare favorably with Augerino in both accuracy and calibration in the experimental section. Similarly to our method, DADA \citep{li2020dada} relaxes the discrete data augmentation policy selection to a differentiable optimization problem. They rely on the Gumbel-Softmax reparametrization trick introduced in \cite{maddison2016concrete, jang2017gumbel} and apply variance reduction to their proposed gradient estimator. We compare favorably against DADA in all CIFAR 10/100 and SVHN.

\section{Conclusion}

In this paper, we formulated the Transformed Risk Minimization framework, and used it to design SCALE, a new algorithm for learning augmentation distributions. This relies on a new parametrization of the augmentation space via a composition of stochastic blocks, and leverages PAC-Bayes theory to derive a novel training objective. 
We provided experimental results that corroborate our theoretical analysis, and showed that SCALE can lead to advantages over prior approaches in both accuracy and calibration and can be incorporated easily in large-scale problems like language navigation and benefit them in terms of performance.

\section{Acknowledgements}

This work was sponsored by the Army Research Office under Grant Number W911NF-20-1-0080 as well as by the grants NSF TRIPODS 1934960 and ONR N00014-17-1-2093.

\newpage
\bibliography{main_trm}
\bibliographystyle{tmlr}

\newpage
\appendix
\section{Appendix}

\subsection{TRM Propositions}\label{TRMproofs}
For technical reasons, in the definitions of optimal risks,
we switch from minimization  to taking infima. This makes the definitions more broadly applicable. When the minima are well-defined, we recover the definitions from the main body.  
We also assume that all functions we consider are measurable, including after taking the appropriate infimum over the hypothesis space in the definitions of optimal risks.
\par In this section we provide the proofs of the propositions in Section \ref{sec:theoryTRM}. The following definitions will be used extensively in the proofs:\\
RM risk and its optimal value:
\begin{align*}
    R(h) := \EE_{(X,Y) \sim \dd} [l(h(X),Y)],\qquad
    R^*_{RM} := \inf_{h \in \hh} R(h).
\end{align*}
TRM risk and its optimal value:
\begin{align*}
    R(h,Q) := \EE_{(X,Y) \sim \dd} \EE_{g \sim Q} [l(h(gX),Y)],\qquad
    R^*_{TRM} := \inf_{h \in \hh, Q \in \Omega} R(h,Q).
\end{align*}
\textit{Fixed g-risk} for a fixed transformation $g$:
$$R_f(h,g) := \EE_{(X,Y) \sim \dd}[l(h(gX),Y)].$$

Set of \textit{distributional invariances} $g$:
\begin{equation}
    G_{inv} := \{g \in G_\Omega: (X,Y)=_d (gX,Y)\}.
\end{equation}
{\bf Proof of Proposition \ref{lb} (Lower Bound).}
First we prove the following lemma:
\begin{lemma}\label{lem1} For every distribution $Q$, 
\begin{align*}
    \inf_{h \in \hh} \EE_{g \sim Q}R_f(h,g) \geq \EE_{g \sim Q} \inf_{h \in \hh} R_f(h,g).
\end{align*}
\begin{proof}[Proof of Lemma \ref{lem1}]
Using the definition of the infimum and the monotonicity of expectations
\begin{align*}
    R_f(h,g) &\geq \inf_{h \in \hh} R_f(h,g), ~~\forall h \in \hh \implies \\
    \EE_{g \sim Q} R_f(h,g) &\geq \EE_{g \sim Q}\inf_{h \in \hh} R_f(h,g), ~~\forall h \in \hh \implies\\
    \inf_{h \in \hh}\EE_{g \sim Q} R_f(h,g) &\geq \EE_{g \sim Q}\inf_{h \in \hh} R_f(h,g). 
\end{align*}
\end{proof}
\end{lemma}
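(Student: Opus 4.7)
The plan is to prove this via the standard one-sided inf--integral swap, which holds in essentially full generality and requires only the monotonicity of expectation plus a measurability assumption on the pointwise infimum. The proof proceeds in three short, sequential steps, and I do not anticipate any real technical obstacle.

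First, I would fix an arbitrary $h \in \hh$ and observe that, pointwise in $g$, the quantity $\inf_{h' \in \hh} R_f(h', g)$ is by definition a lower bound on the set $\{R_f(h', g) : h' \in \hh\}$, and in particular on the specific element $R_f(h, g)$. Thus $R_f(h,g) \geq \inf_{h' \in \hh} R_f(h', g)$ for all $g$ in the support of $Q$.

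Second, I would integrate both sides against $Q$. By monotonicity of expectation applied to the pointwise inequality above, this gives
\begin{equation*}
\EE_{g \sim Q} R_f(h, g) \;\geq\; \EE_{g \sim Q} \inf_{h' \in \hh} R_f(h', g).
\end{equation*}
For the right-hand side to even be well-defined as an expectation, I need $g \mapsto \inf_{h' \in \hh} R_f(h', g)$ to be measurable. This is the only delicate point, but the paper has placed exactly this kind of measurability condition in its standing assumptions (it explicitly assumes measurability of objects obtained after taking infima over $\hh$), so no additional work is required.

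Third, since the right-hand side of the displayed inequality is a single number that does not depend on $h$, it remains a valid lower bound as $h$ ranges over $\hh$. Taking the infimum over $h \in \hh$ on the left-hand side therefore preserves the inequality and yields exactly $\inf_{h \in \hh} \EE_{g \sim Q} R_f(h,g) \geq \EE_{g \sim Q} \inf_{h' \in \hh} R_f(h',g)$, which is the claim. As a sanity check, note that the reverse inequality would generally fail, since choosing $h$ after observing $g$ is at least as powerful as choosing a single $h$ in advance; this asymmetry is reflected in the one-sided statement of the lemma.
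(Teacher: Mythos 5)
Your proof is correct and is essentially identical to the paper's: the same three-step chain of pointwise lower bound, monotonicity of expectation, and taking the infimum over $h$ on the left since the right-hand side is $h$-independent. Your remark on measurability of $g \mapsto \inf_{h'} R_f(h',g)$ is indeed covered by the paper's standing assumption that all functions remain measurable after taking infima over the hypothesis space.
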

Next, for any fixed $g$ we have: 
\begin{equation}\label{obs}
    R_f(h,g) = R(h \circ g) \implies \inf_{h \in \hh}R_f(h,g) = \inf_{\tilde{h} \in \hh \circ g}R(\tilde{h}),
\end{equation}
where $\hh \circ g :=\{h \circ g|h \in \hh\}$.

Using the lemma and observation in \eqref{obs}, we now complete the proof of the proposition.
\begin{proof}[Proof of Proposition \ref{lb}] We prove the two claims in turn.\
\begin{enumerate}
    \item It suffices to show that $R^*_{TRM} \geq R^*_{RM}$. For this, we have
    \begin{align*}
        R^*_{TRM} &= \inf_{h \in \hh, Q \in \Omega} R(h,Q) = \inf_{Q \in \Omega}\inf_{h \in \hh}R(h,Q) \\
        &= \inf_{Q \in \Omega}\inf_{h \in \hh}\EE_{g \sim Q}\EE_{(X,Y)\sim \dd}[l(h(gX),Y)]
        = \inf_{Q \in \Omega}\inf_{h \in \hh}\EE_{g \sim Q}R_f(h,g),
    \end{align*}
      
    where in the first line we used that the joint infimum is the same as the sequential infimum. In the second line we used the Fubini-Tonelli theorem to change the order of expectations, along with the definition of fixed $g$-risk. By Lemma \ref{lem1}, the last expression is lower bounded by
        \begin{align*}
        &\inf_{Q \in \Omega}\EE_{g \sim Q}\inf_{h \in \hh}R_f(h,g)
         = \inf_{Q \in \Omega}\EE_{g \sim Q}\inf_{\tilde{h} \in \hh \circ g}R(\tilde{h}) 
         \geq \inf_{Q \in \Omega}\EE_{g \sim Q} R^*_{RM} = R^*_{RM}.
    \end{align*}
  For the first equation, we used the observation in \eqref{obs}, and 
  in the following inequality we used that for any fixed $g \in G_\Omega$ we have $\hh \circ g \subseteq \hh \circ G_\Omega \subseteq \hh$ from the assumption of the proposition.

    \item We have that if $h \in \hh$ then $\bar{h}_Q(x) := \EE_{g \sim Q}h(gx) \in \hh$ for any $Q \in \Omega$.
    Since the loss $l:\yy' \times \yy \rightarrow \real$ is convex for all $y \in \yy$ we have from Jensen's inequality that for all $h \in \hh, Q \in \Omega$:
    \begin{align}
        R(h,Q) &= \EE_{(X,Y) \sim \dd}\EE_{g \sim Q}[l(h(gX),Y)] 
        \geq  \EE_{(X,Y) \sim \dd}[l(\EE_{g \sim Q}h(gX),Y)] \nonumber\\
        &= \EE_{(X,Y) \sim \dd}[l(\bar{h}_Q(X),Y)] = R(\bar{h}_Q)         \label{eq:convextrm}\\
        &\geq  \inf_{h \in \hh}R(h) = R^*_{RM}.\nonumber
    \end{align}
\end{enumerate}
\end{proof}
{\bf Discussion.} The condition $\hh \circ \supp(\Omega) \subseteq \hh$ means that for each transformation $g \in G$ used by any distribution in $\Omega$, and each $h \in \hh$, we can write $h(gx) = h'(x)$ for some $h'\in \hh$. Thus, this means that minimizing the average risk over $\hh \circ \supp(\Omega)$ is a type of constraint compared to minimizing the risk over all hypotheses in $\hh$. 

Data augmentation is not usually viewed as imposing a constraint. The goal is to learn an invariant predictor, but the motivation for data augmentation is that we do not know how to compute one, so we use a heuristic way to expand our dataset. Thus, in this sense, averaging the risk over transforms can be viewed as a form of a constraint.

\bigskip

\par {\bf Proof of Proposition \ref{ub} (Upper Bound).}
\begin{proof}
Suppose $Q_{inv}$ is such that $Q_{inv} \in \Omega$, and  $G_{Q_{inv}} \subseteq G_{inv}$. Then,
\begin{align}
    R(h,Q_{inv}) &= \EE_{(X,Y) \sim \dd}\EE_{g \sim Q_{inv}}[l(h(gX),Y)]
    = \EE_{g \sim Q_{inv}}\EE_{(X,Y) \sim \dd}[l(h(gX),Y)] \nonumber\\
    &= \EE_{g \sim Q_{inv}}\EE_{(X,Y) \sim \dd}[l(h(X),Y)] = R(h),
    \label{eq:riskinv}
\end{align}
where we used the Fubini-Tonelli theorem to change the order of expectations, and then used that $(X,Y)=_d (gX,Y)$, for all $g \in G_{Q_{inv}}$ to conclude that $\EE_{(X,Y) \sim \dd}[l(h(gX),Y)] = \EE_{(X,Y) \sim \dd}[l(h(X),Y)]$. Therefore,
\begin{align*}
R^*_{TRM} = \inf_{h \in \hh, Q \in \Omega}R(h,Q) \leq \inf_{h \in \hh} R(h,Q_{inv}) = \inf_{h \in \hh} R(h) = R^*_{RM}.
\end{align*}
This finishes the proof. 
\end{proof}

\textbf{Discussion.}
The upper bound uses that we include the   ``correct'' invariances in a distribution $Q \in \Omega$, so the risk does not increase. This seems to be a generally justified setting for TRM. The data distribution is invariant under some transforms in $G$, so we use this knowledge to minimize the empirical transformed risk, a better estimator of the test error than the simple empirical risk. 
From arguments in \cite{chen2020group}, it follows that for any fixed $Q$  supported on $G_{inv}$, the empirical TR has the same mean, but a smaller variance than the empirical risk. Clearly, the same still holds uniformly over all such $Q\in \Omega$, which provides a justification that the TRM framework provides a uniformly more accurate estimator of the population error than RM.



\bigskip
\par {\bf Proof of Corollary \ref{zg} (Zero Gap).}
\begin{proof}\
\begin{enumerate}
    \item From Proposition \ref{lb} (Part 1), using the assumption $\hh \circ G_\Omega \subseteq  \hh$, we obtain the lower bound $R^*_{TRM} \geq R^*_{RM}$. Since there is a distribution $Q_{inv} \in \Omega$ such that $G_{Q_{inv}} \subseteq G_{inv}$, from Proposition \ref{ub} we obtain the upper bound $R^*_{TRM} \leq R^*_{RM}$. Thus, $R^*_{TRM}=R^*_{RM}$. 

    Assume now that $h^*_{RM}$ is a minimizer of the risk $R(h)$ over $\hh$. Then from  \eqref{eq:riskinv}, $R(h^*_{RM},Q_{inv}) = R(h^*_{RM}) = R^*_{RM} = R^*_{TRM}$. So, the pair $(h^*_{RM},Q_{inv})$ is a minimizer of the TRM risk.
    
    
    \item  Set $U:=\mathrm{U}(G)$. From part 1 above, if $h^*_{RM}$ is a minimizer of the standard risk $R(h)$ over $\hh$, then $(h^*_{RM}, U)$ is a minimizer of the TRM risk $R(h,Q)$ over $\hh \times \Omega$, since $G \subseteq G_{inv}$ and $U \in \Omega$ from the assumptions of the theorem.
    
    From  \eqref{eq:convextrm} we have that $R^*_{RM} = R^*_{TRM} = R(h^*_{RM},U) \geq R(\bar{h}^*_{RM,U})$, where $\bar{h}^*_{RM,U}(x) = \EE_{g \sim U}h^*_{RM}(gx) \in \hh$. Hence, $\bar{h}^*_{RM,U}$ is also a minimizer of the RM risk over $\hh$ and the pair $(\bar{h}^*_{RM,U},U)$ is a  minimizer of the TRM risk over $\hh \times \Omega$.
    
    Now we observe that $\bar{h}^*_{RM,U}$ is $G$-invariant, since for all $g$ in the group $G$ we have: 
    \begin{align*}
        \bar{h}^*_{RM,U}(gx) &= \EE_{g' \sim U}h^*_{RM}(g'gx) = \int_{g' \in G} h^*_{RM}(g'gx)dU(g') \\
        &= \int_{\tilde{g}g^{-1} \in G} h^*_{RM}(\tilde{g}x)dU(\tilde{g}g^{-1})=  \int_{\tilde{g} \in G} h^*_{RM}(\tilde{g}x)dU(\tilde{g}) \\
        &= \EE_{\tilde{g} \sim U}h^*_{RM}(\tilde{g}x) = \bar{h}^*_{RM,U}(x).
    \end{align*}
    Above we changed variables to $\tilde{g} = g'g$, used that the group is "closed", i.e., for any fixed $g \in G$, $\tilde{g}g^{-1} \in G \iff \tilde{g} \in Gg=G$, and that $dU(\tilde{g}g^{-1})= dU(\tilde{g})$, since the Haar measure is right-invariant.
\end{enumerate}
\end{proof}

{\bf Discussion.} This   ``zero gap'' result  follows by putting together the upper and lower bounds. It requires that we include the correct invariances in a distribution $Q$, but that the transformations in this distribution do not go outside the hypothesis space $\hh$. This is an ideal setting, in the sense that we use invariances as   ``inductive biases'' to obtain a better estimator of the population error, but the hypothesis space is rich enough that all the transforms we use can be realized by actual hypotheses.

In addition, the result shows that if the loss is convex, then the optimal classifier is invariant to $G$, but of course $\hh$ needs to contain this classifier. In that case, there are many optimization objectives that have that hypothesis as a minimizer, in particular RM and TRM. However, TRM has the advantages described before: (1) reduced variance, and (2) fast inference.

\subsection{Proof of PAC-Bayes bound (Theorem \ref{thm:specialpac})}\label{proof:pacbayes}

We will break the proof into three parts. We start by discussing the preliminary theorems from the PAC-Bayes literature that we will use to derive our bound. We then use the assumptions of Lipschitz continuity from the theorem to prove some key lemmas. Finally, we derive the bound on TRM and compute the KL divergence terms for the parametrization of SCALE. 

\subsubsection{Preliminaries}
Consider the hypothesis space $\mathcal{H}$ parametrized on some space $W \subseteq \mathbb{R}^p$ for some $p \in \mathbb{N}$. Also consider $R_{RM}$, the standard risk from  \eqref{eq:rm}, and its empirical counterpart $R_{RM,n}(h_w)=\frac{1}{n}\sum_{i=1}^n l(h_w(X_i),Y_i)$, 
and denote $S = \{X_i,Y_i\}_{i=1}^n \sim \mathcal{D}^n$. 
Assume $\pi$ is a prior probability measure on $W$. Then, we have the following definition due to \cite{JMLR:v17:15-290}:
\begin{definition}[Hoeffding assumption]
The prior $\pi$ satisfies the \textit{Hoeffding assumption} if there exists a function $f$ and a nonempty open interval $I \subset \mathbb{R}_+^*$ such that for any $\lambda \in I$ and for any $w \in W$:
$$\mathbb{E}_{w \sim \pi}\mathbb{E}_{(X,Y)^n \sim \mathcal{D}^n}\exp\{\pm\lambda(R_{RM}(h_w)-R_{RM,n}(h_w))\} \leq \exp\{f(\lambda,n)\}.$$
\end{definition}

Using this assumption, \citet{JMLR:v17:15-290} prove the following theorem:
\begin{theorem}[Theorem 4.1 \citep{JMLR:v17:15-290}]\label{thm:basicalquier}
Given any distribution $\pi$ over $W$ satisfying the Hoeffding assumption, and for any $\delta \in (0,1]$, $\lambda>0$, with probability at least $1-\delta$ over the draw of $S=\{X_i,Y_i\}_{i=1}^n \sim \mathcal{D}^n$, we have simultaneously for all probability measures $\rho \ll \pi$ on $W$ that:
\begin{align*}
    \mathbb{E}_{w \sim \rho} R_{RM}(h_w) \leq \mathbb{E}_{w \sim \rho} R_{RM,n}(h_w) + \frac{1}{\lambda}\left(KL(\rho||\pi)+\log\Big(\frac{1}{\delta}\Big)+f(\lambda,n)\right).
\end{align*}
\end{theorem}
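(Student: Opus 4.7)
The plan is to prove Theorem 4.1 via the classical PAC-Bayes recipe: a one-sided exponential moment control via Markov, combined with the Donsker--Varadhan variational representation of the KL divergence to pass from the fixed prior $\pi$ to an arbitrary posterior $\rho$. The uniformity over all $\rho \ll \pi$ will come for free from the fact that the single random quantity we need to control is the $\pi$-expectation of an exponential moment, which does not depend on $\rho$.

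First I would fix $\lambda \in I$ and write, using the Hoeffding assumption together with Fubini--Tonelli (justified by nonnegativity of the exponential integrand):
\begin{equation*}
\mathbb{E}_{S \sim \mathcal{D}^n}\Bigl[\mathbb{E}_{w \sim \pi}\exp\bigl\{\lambda\bigl(R_{RM}(h_w)-R_{RM,n}(h_w)\bigr)\bigr\}\Bigr] \leq \exp\{f(\lambda,n)\}.
\end{equation*}
Applying Markov's inequality to the nonnegative data-dependent random variable $Z(S) := \mathbb{E}_{w \sim \pi}\exp\{\lambda(R_{RM}(h_w)-R_{RM,n}(h_w))\}$, I get that with probability at least $1-\delta$ over $S$,
\begin{equation*}
\log Z(S) \leq f(\lambda,n) + \log(1/\delta).
\end{equation*}

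Next, on this high-probability event, I would invoke the Donsker--Varadhan variational formula: for any measurable $\phi \colon W \to \mathbb{R}$ and any probability measure $\rho \ll \pi$,
\begin{equation*}
\mathbb{E}_{w \sim \rho}[\phi(w)] \leq \mathrm{KL}(\rho \| \pi) + \log \mathbb{E}_{w \sim \pi}[\exp \phi(w)].
\end{equation*}
Taking $\phi(w) = \lambda(R_{RM}(h_w)-R_{RM,n}(h_w))$ and combining with the Markov bound yields, simultaneously for all $\rho \ll \pi$,
\begin{equation*}
\lambda\,\mathbb{E}_{w \sim \rho}\bigl[R_{RM}(h_w)-R_{RM,n}(h_w)\bigr] \leq \mathrm{KL}(\rho \| \pi) + f(\lambda,n) + \log(1/\delta).
\end{equation*}
Dividing by $\lambda > 0$ and rearranging produces exactly the statement of the theorem. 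The crucial observation making the ``simultaneously over all $\rho$'' quantifier valid is that only $Z(S)$---a quantity independent of $\rho$---needed to be controlled via the high-probability event; the Donsker--Varadhan step is then a deterministic inequality applied on that event.

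The only mild subtleties, which I would address carefully but not belabor, are: (i) justifying Fubini via nonnegativity and measurability of $(w, S) \mapsto \exp\{\lambda(R_{RM}(h_w)-R_{RM,n}(h_w))\}$ (both of which are implicit in the Hoeffding assumption); (ii) handling the case $\mathrm{KL}(\rho\|\pi) = +\infty$, where the bound is trivially true; and (iii) the use of the one-sided moment bound (the $+\lambda$ branch of the Hoeffding assumption suffices, since we only need an upper bound on the generalization gap). No step involves a difficult calculation---the main conceptual point, and the place where a first-time reader might stumble, is recognizing that Donsker--Varadhan is precisely the right tool to ``change measure'' from the prior to an arbitrary posterior while paying only a KL penalty.
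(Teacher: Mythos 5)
Your proof is correct, and it coincides with the paper's treatment in the only sense available: the paper states this result as a quoted theorem from \citet{JMLR:v17:15-290} without reproducing a proof, and the argument in that original source is exactly yours---Markov's inequality applied to the prior-integrated exponential moment $Z(S)=\mathbb{E}_{w\sim\pi}\exp\{\lambda(R_{RM}(h_w)-R_{RM,n}(h_w))\}$, followed by the deterministic Donsker--Varadhan change of measure, with uniformity over all $\rho\ll\pi$ coming for free because the high-probability event concerns only the $\rho$-independent quantity $Z(S)$. One small point in your favor: you correctly fix $\lambda\in I$ where the Hoeffding assumption actually holds, whereas the paper's restatement loosely writes $\lambda>0$.
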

In particular, the Hoeffding assumption above holds for \textit{sub-Gaussian} losses. The following definition is due to \citet{GermainPacMeets}.
\begin{definition}
A loss function $l:\mathcal{Y}' \times \mathcal{Y} \rightarrow \mathbb{R}$ is sub-Gaussian with variance factor $s^2$ under a prior $\pi$ and a data distribution $\mathcal{D}$ if $V=R_{RM}(h_w)-l(h_w(X),Y)$
is a $s^2$-sub-Gaussian random variable, i.e., 
$$\ln \mathbb{E}_{w \sim \pi}\mathbb{E}_{(X,Y)\sim \mathcal{D}}\exp{(\lambda V)} \leq \frac{\lambda^2s^2}{2},\, \forall \lambda \in \mathbb{R}.$$
\end{definition}
The following lemma is derived from Theorem \ref{thm:basicalquier} above:
\begin{lemma}[\citet{GermainPacMeets}]
Given any distribution $\pi$ over $W$, given $\delta \in (0,1]$, $\lambda>0$, if the loss $l$ is sub-Gaussian with variance factor $s^2$, with probability at least $1-\delta$ over the draw of $S=\{X_i,Y_i\}_{i=1}^n \sim \mathcal{D}^n$, we have simultaneously for all probability measures $\rho \ll \pi$ on $W$ that:
\begin{align*}
    \mathbb{E}_{w \sim \rho} R_{RM}(h_w) \leq \mathbb{E}_{w \sim \rho} R_{RM,n}(h_w) + \frac{1}{\lambda}\left(KL(\rho||\pi)+\log\Big(\frac{1}{\delta}\Big)+\frac{\lambda^2 s^2}{2n}\right).
\end{align*}
\end{lemma}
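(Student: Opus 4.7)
My plan is to apply the sub-Gaussian PAC-Bayes lemma (the corollary of Theorem~\ref{thm:basicalquier} recalled just before the statement) on the \emph{joint} parameter space $W\times A$, viewing each pair $(w',a)$ as a deterministic predictor $x\mapsto h_{w'}(g_a x)$, and then derandomize only the hypothesis component via the Lipschitz assumption. Since $l$ is bounded in $[a,b]$, Hoeffding's lemma implies it is sub-Gaussian with variance factor $(b-a)^2/4$, so the recalled lemma applies with $f(\lambda,n)=\lambda^2(b-a)^2/(8n)$. Taking the product prior $P_{\mathcal{H}}\times P_A$ and the product posterior $Q_{\mathcal{H},w}\times Q_\theta$, the lemma yields, with probability at least $1-\delta$ over $S$, simultaneously in $(w,\theta)$,
\begin{equation*}
\mathbb{E}_{(w',g)}\!\bigl[R_{RM}(h_{w'}\!\circ g)\bigr] \;\leq\; \mathbb{E}_{(w',g)}\!\bigl[R_{RM,n}(h_{w'}\!\circ g)\bigr] +\frac{1}{\lambda}\!\left[\mathrm{KL}(Q_{\mathcal{H},w}\|P_\mathcal{H})+\mathrm{KL}(Q_\theta\|P_A)+\log\tfrac{1}{\delta}+\tfrac{\lambda^2(b-a)^2}{8n}\right],
\end{equation*}
where uniformity in $(w,\theta)$ comes for free from PAC-Bayes holding over \emph{all} posteriors (requiring $\beta>0$ for absolute continuity).

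The two KL terms then split explicitly. The Gaussian-to-Gaussian part gives the standard expression $\mathrm{KL}(\mathcal{N}(w,\sigma^2 I)\|\mathcal{N}(w_0,s^2 I))=\tfrac{\|w-w_0\|^2}{2s^2}+\tfrac{p}{2}[\sigma^2/s^2+\log(s^2/\sigma^2)-1]$; only the first summand depends on $w$, so it will survive in the main bound while the second, independent of $(w,\theta)$, is destined for $c_n$. For the augmentation KL I use product structure, $\mathrm{KL}(Q_\theta\|P_A)=\sum_i\mathrm{KL}(Q_{(i)}\|P_{(i)})$. For $i\leq k$, both $Q_{(i)}$ and $P_{(i)}$ place an atom at $0$ and a uniform continuous piece on nested intervals, so the Radon-Nikodym derivative is $(1-\pi_i)/\beta$ at $0$ and $\pi_iA_i/(\alpha_i(1-\beta))$ on $[-\alpha_i,\alpha_i]\setminus\{0\}$; integrating $\log(dQ/dP)$ against $Q_{(i)}$ gives $(1-\pi_i)\log\tfrac{1-\pi_i}{\beta}+\pi_i\log\tfrac{\pi_iA_i}{\alpha_i(1-\beta)}$, which is algebraically $K_i(\alpha_i,\pi_i)$ and factors as $\mathrm{KL}(\mathcal{B}(\pi_i)\|\mathcal{B}(1-\beta))+\pi_i\,\mathrm{KL}(\mathrm{U}[-\alpha_i,\alpha_i]\|\mathrm{U}[-A_i,A_i])$. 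For $i>k$, a direct sum over $\{0,\ldots,N_i-1\}$ (with the atom at $0$ combining with the uniform mass $1/N_i$) reduces to $\mathrm{KL}\bigl(\mathcal{B}((1-1/N_i)\pi_i)\|\mathcal{B}((1-1/N_i)(1-\beta))\bigr)=K_i(\pi_i)$.

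The last ingredient is derandomization: the left-hand side above is $\mathbb{E}_{w'\sim Q_{\mathcal{H},w}}R_{TRM}(h_{w'},Q_\theta)$, whereas the statement targets the deterministic $R_{TRM}(h_w,Q_\theta)$. Lipschitzness gives $|l(h_w(gX),Y)-l(h_{w'}(gX),Y)|\leq L\|w-w'\|$, and $\mathbb{E}_{w'\sim\mathcal{N}(w,\sigma^2 I)}\|w-w'\|\leq\sigma\sqrt{p}$, so both the population and the empirical stochastic risks differ from their deterministic counterparts by at most $L\sigma\sqrt p$, contributing an additive $2L\sigma\sqrt p$. Finally, choose $\lambda=\sqrt n$ and $\sigma^2=\sigma_0^2/n$: the $\|w-w_0\|^2/(2s^2)$ and $\mathrm{Reg}(\theta)$ terms appear with the advertised prefactor $1/\sqrt n$, while all $(w,\theta)$-independent residuals—$\log(1/\delta)/\sqrt n$, $(b-a)^2/(8\sqrt n)$, $2L\sigma_0\sqrt{p/n}$, and the leftover Gaussian KL term $\tfrac{p}{2\sqrt n}[\sigma_0^2/(ns^2)+\log(ns^2/\sigma_0^2)-1]$ of order $p\log(n)/\sqrt n$—collapse into $c_n(\delta,s,p,L)=o_n(1)$.

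The most delicate step is the augmentation KL computation, because $Q_{(i)}$ and $P_{(i)}$ are neither purely discrete nor purely absolutely continuous, so one has to work with the Radon-Nikodym derivative on the union of an atom and a continuous piece; this is exactly where $\beta>0$ is essential, ensuring that the atom of $Q_{(i)}$ is dominated by that of $P_{(i)}$. The derandomization step is standard but has a subtle tradeoff: $\sigma$ must be small enough that $L\sigma\sqrt p$ vanishes, yet large enough that the residual $\log(s^2/\sigma^2)$ in the Gaussian KL remains $o(\sqrt n)$; picking $\sigma^2\propto 1/n$ threads this needle and determines the $\log$-factor hidden inside $c_n$.
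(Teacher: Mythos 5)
There is a genuine gap: you have proved the wrong statement. The lemma in question is the sub-Gaussian PAC-Bayes inequality of Germain et al.\ on the hypothesis space $W$ alone --- fixed prior $\pi$ on $W$, standard risk $R_{RM}$, no augmentations, no Lipschitz assumption --- and the paper obtains it as a direct corollary of Theorem~\ref{thm:basicalquier} by checking that a sub-Gaussian loss satisfies the Hoeffding assumption. Your proposal instead reconstructs the proof of Theorem~\ref{thm:specialpac} (the SCALE bound): product priors and posteriors on $W\times A$, the mixed atom-plus-uniform Radon--Nikodym computation yielding $K_i(\alpha_i,\pi_i)$ and $K_i(\pi_i)$, derandomization via $L$-Lipschitzness, and the choice $\lambda=\sqrt n$ producing $\mathrm{Reg}(\theta)$ and $c_n$. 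Worse, your argument is circular with respect to the target: your very first step invokes ``the sub-Gaussian PAC-Bayes lemma (the corollary of Theorem~\ref{thm:basicalquier}) recalled just before the statement'' --- but that corollary \emph{is} the statement you were asked to prove, so nothing in the proposal establishes it. (You also quietly replace it with the bounded-loss variant $f(\lambda,n)=\lambda^2(b-a)^2/(8n)$, which is the paper's separate ``improved lemma'' for losses in $[a,b]$, not the sub-Gaussian claim with variance factor $s^2$.)

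The actual proof is short and requires none of your machinery. Write $R_{RM}(h_w)-R_{RM,n}(h_w)=\frac{1}{n}\sum_{i=1}^n V_i$ with $V_i=R_{RM}(h_w)-l(h_w(X_i),Y_i)$ i.i.d.\ copies of the variable $V$ in the sub-Gaussian definition. Then
\begin{align*}
\mathbb{E}_{w \sim \pi}\,\mathbb{E}_{S \sim \mathcal{D}^n}\exp\Bigl\{\pm\lambda\bigl(R_{RM}(h_w)-R_{RM,n}(h_w)\bigr)\Bigr\}
&= \mathbb{E}_{w \sim \pi}\prod_{i=1}^n \mathbb{E}_{(X_i,Y_i)\sim\mathcal{D}}\exp\Bigl(\pm\tfrac{\lambda}{n}V_i\Bigr)
\leq \exp\Bigl(\tfrac{\lambda^2 s^2}{2n}\Bigr),
\end{align*}
using the $s^2$-sub-Gaussianity of $V$ at argument $\pm\lambda/n$ in each of the $n$ factors. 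Hence the Hoeffding assumption holds with $f(\lambda,n)=\lambda^2 s^2/(2n)$, and substituting this $f$ into Theorem~\ref{thm:basicalquier} yields exactly the displayed bound, simultaneously over all $\rho \ll \pi$. Everything you wrote about $Q_\theta$, the KL terms over the augmentation space, derandomization, and the tuning of $\sigma$ and $\lambda$ belongs one level downstream, in the proof of Theorem~\ref{thm:specialpac}, which consumes this lemma rather than establishes it.
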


In particular, the authors study the case that the loss $l$ is bounded in $[a,b]$. More specifically, 
Hoeffding bound
implies that
$l:\mathcal{Y}' \times \mathcal{Y} \rightarrow [a,b]$ is a sub-Gaussian loss with variance factor $s^2 = (b-a)^2$ and the bound becomes:
\begin{align*}
    \mathbb{E}_{w \sim \rho} R_{RM}(h_w) \leq \mathbb{E}_{w \sim \rho} R_{RM,n}(h_w) + \frac{1}{\sqrt{n}}\left(KL(\rho||\pi)+\log\Big(\frac{1}{\delta}\Big)+\frac{(b-a)^2}{2}\right).
\end{align*}
\textbf{Improved Lemma:} This bound can be improved slightly, since we can derive it directly from Theorem \ref{thm:basicalquier}. In particular for any fixed $w$ the random variable $l(h_w(X),Y)$ is bounded in $[a,b]$ and we know that $R_{RM}(h_w) = \mathbb{E}_{(X,Y) \sim \mathcal{D}}l(h_w(X),Y)$. Then, from Hoeffding's lemma:
\begin{align*}
    \mathbb{E}_{(X,Y) \sim \mathcal{D}}\exp\lambda\left(R_{RM}(h_w)-l(h_w(X),Y)) \right) \leq \exp\left(\frac{\lambda^2(b-a)^2}{8} \right),\, \forall \lambda \in \mathbb{R}.
\end{align*}
Using the above result we find for every $w \in W$ and $ \lambda \in \mathbb{R}$:
\begin{align*}
    &\mathbb{E}_{S \sim \mathcal{D}^n}\exp\lambda\left(R_{RM}(h_w)-R_{RM,n}(h_w)) \right) =
    \mathbb{E}_{S \sim \mathcal{D}^n}\exp \frac{\lambda}{n} \left(\sum_{i=1}^n (R_{RM}(h_w)-l(h_w(X_i),Y_i)) \right)\\
    & =\mathbb{E}_{S \sim \mathcal{D}^n}\prod_{i=1}^n\exp  \left(\frac{\lambda}{n} (R_{RM}(h_w)-l(h_w(X_i),Y_i)) \right)
    \overset{i.i.d}{=}
    \prod_{i=1}^n \mathbb{E}_{(X_i,Y_i) \sim \mathcal{D}} \exp  \left(\frac{\lambda}{n} (R_{RM}(h_w)-l(h_w(X_i),Y_i)) \right)
    \\
    &\leq \prod_{i=1}^n\exp\left(\frac{\lambda^2(b-a)^2}{8n^2} \right)=\exp\left(\frac{\lambda^2(b-a)^2}{8n} \right).
\end{align*}
So, the Hoeffding assumption is satisfied with $f(\lambda,n) = \frac{\lambda^2(b-a)^2}{8n}$ and the PAC-Bayes bound becomes:
\begin{align}
    \mathbb{E}_{w \sim \rho} R_{RM}(h_w) \leq \mathbb{E}_{w \sim \rho} R_{RM,n}(h_w) + \frac{1}{\lambda}\left(KL(\rho||\pi)+\log\Big(\frac{1}{\delta}\Big)+ \frac{\lambda^2}{n}\frac{(b-a)^2}{8}\right).
    \label{eq:alquierbounded}
\end{align}


\subsubsection{Lipschitz Lemmas}

We prove some key lemmas below. We denote $l(w;X,Y)= l(h_w(X),Y)$, $R_{RM}^l(w) = R_{RM}(h_w)$, and $R_{RM,n}^l(w) = R_{RM,n}(h_w)$.
\begin{lemma}\label{lem:liprisks}
If the map $l(\cdot;X,Y)$ is $L$-Lipschitz continuous for any $(X,Y) \in \mathcal{X} \times \mathcal{Y}$, then both $R_{RM}^l(\cdot)$ and $R_{RM,n}^l(\cdot)$ are $L$-Lipschitz continuous.
\end{lemma}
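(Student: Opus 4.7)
The plan is to prove both statements by unwinding the definitions and pushing the pointwise Lipschitz bound through the averaging operations (the empirical mean and the population expectation). The fact that neither operation inflates a Lipschitz constant is completely standard; the only thing to be careful about is that we need an inequality involving absolute values, so we can rely on $|\mathbb{E}[Z]| \le \mathbb{E}[|Z|]$ and the triangle inequality.

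First I would handle the empirical risk. Fix $w_1, w_2 \in W$. By the pointwise Lipschitz assumption applied to each sample $(X_i,Y_i)$, we have
\begin{equation*}
|l(w_1; X_i, Y_i) - l(w_2; X_i, Y_i)| \le L \|w_1 - w_2\|.
\end{equation*}
Then by the triangle inequality applied to the average,
\begin{equation*}
|R^l_{RM,n}(w_1) - R^l_{RM,n}(w_2)| \le \frac{1}{n}\sum_{i=1}^n |l(w_1; X_i, Y_i) - l(w_2; X_i, Y_i)| \le L \|w_1-w_2\|,
\end{equation*}
which gives the claim for $R^l_{RM,n}$.

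For the population risk, the same reasoning applies with the expectation playing the role of the average. Fix $w_1, w_2 \in W$ and write
\begin{equation*}
|R^l_{RM}(w_1) - R^l_{RM}(w_2)| = \bigl|\mathbb{E}_{(X,Y)\sim\mathcal{D}}[l(w_1;X,Y)-l(w_2;X,Y)]\bigr| \le \mathbb{E}_{(X,Y)\sim\mathcal{D}}|l(w_1;X,Y)-l(w_2;X,Y)|,
\end{equation*}
where the last inequality uses $|\mathbb{E}[Z]|\le\mathbb{E}[|Z|]$. Applying the pointwise Lipschitz bound inside the expectation and pulling out the constant $L\|w_1-w_2\|$ (which does not depend on $(X,Y)$) yields $|R^l_{RM}(w_1) - R^l_{RM}(w_2)| \le L\|w_1-w_2\|$.

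There is no real obstacle here; the only subtlety is measurability and integrability, which are implicit in the paper's standing assumption that all objects are measurable and that the risk is well-defined. The proof is essentially two lines, one for each statement, and the Lipschitz constant is preserved exactly (not inflated) because both the arithmetic mean and the expectation are convex combinations of the pointwise values.
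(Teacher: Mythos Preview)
Your proof is correct and essentially identical to the paper's own argument: both push the pointwise Lipschitz bound through the averaging via $|\mathbb{E}[Z]|\le\mathbb{E}[|Z|]$ (respectively the triangle inequality for the empirical mean). The only cosmetic difference is that the paper treats the population risk first and then remarks that the empirical case follows by replacing the expectation with the empirical average, whereas you handle them in the opposite order.
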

\begin{proof}
For any $(w,w') \in W^2$
\begin{align*}
|R_{RM}^l(w')-R_{RM}^l(w)|&=|\mathbb{E}_{(X,Y) \sim \mathcal{D}}l(w';X,Y)-\mathbb{E}_{(X,Y) \sim \mathcal{D}}l(w;X,Y)| \\
&\leq \mathbb{E}_{(X,Y) \sim \mathcal{D}}|l(w';X,Y)-l(w';X,Y)|\leq \mathbb{E}_{(X,Y) \sim \mathcal{D}}L\|w'-w\|=L\|w'-w\|.
\end{align*}
For $R_{RM,n}^l$, the analogous argument holds by replacing $\mathbb{E}_{(X,Y) \sim \mathcal{D}}$ with the empirical expectation $\frac{1}{n}\sum_{i=1}^n$.
\end{proof}
\begin{lemma}\label{lem:lipcov}
If $l(\cdot;X,Y)$ is $L$-Lipschitz continuous for any $(X,Y) \in \mathcal{X} \times \mathcal{Y}$, 
then for any distribution $\rho$ on $W$ with $\mathbb{E}_{w \sim \rho}[w] = w_0 \in W$, it holds that:
\begin{align*}
    |R_{RM}^l(w_0)-\mathbb{E}_{w \sim \rho}R_{RM}^l(w)| \leq L \sqrt {\mathrm{tr}\mathrm{Cov}_{w \sim \rho}[w]}. 
\end{align*}
Similarly, for $R_{RM,n}^l$:
\begin{align*}
    |R_{RM,n}^l(w_0))-\mathbb{E}_{w \sim \rho}R_{RM,n}^l(w)| \leq L \sqrt {\mathrm{tr}\mathrm{Cov}_{w \sim \rho}[w]}.
\end{align*}
\end{lemma}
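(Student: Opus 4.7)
The plan is to exploit the Lipschitz property already established in Lemma \ref{lem:liprisks} together with a standard two-step inequality (Jensen then Cauchy--Schwarz) that converts a first-moment bound on $\|w-w_0\|$ into a bound on the square root of the trace of the covariance.

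First I would rewrite $R_{RM}^l(w_0) = \mathbb{E}_{w\sim\rho}[R_{RM}^l(w_0)]$ (a trivial constant-in-$w$ expectation), so that
\[
|R_{RM}^l(w_0)-\mathbb{E}_{w\sim\rho}R_{RM}^l(w)| = \bigl|\mathbb{E}_{w\sim\rho}[R_{RM}^l(w_0)-R_{RM}^l(w)]\bigr|.
\]
Pulling the absolute value inside by Jensen's inequality (or by the triangle inequality for integrals) gives the upper bound $\mathbb{E}_{w\sim\rho}|R_{RM}^l(w_0)-R_{RM}^l(w)|$. At this point I would invoke Lemma \ref{lem:liprisks}, which says that $R_{RM}^l$ is $L$-Lipschitz, to upper bound this by $L\,\mathbb{E}_{w\sim\rho}\|w-w_0\|$.

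Next I would apply Cauchy--Schwarz (or equivalently Jensen for the concave function $\sqrt{\cdot}$) to get
\[
\mathbb{E}_{w\sim\rho}\|w-w_0\| \leq \sqrt{\mathbb{E}_{w\sim\rho}\|w-w_0\|^2}.
\]
Finally, since $w_0=\mathbb{E}_{w\sim\rho}[w]$, the quantity $\mathbb{E}_{w\sim\rho}\|w-w_0\|^2$ equals $\mathrm{tr}\,\mathrm{Cov}_{w\sim\rho}[w]$ by expanding the squared norm coordinatewise. Chaining the three inequalities delivers the claim for $R_{RM}^l$.

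The statement for $R_{RM,n}^l$ follows by the exact same argument, since Lemma \ref{lem:liprisks} provides the same $L$-Lipschitz constant for $R_{RM,n}^l$ and no property of $\mathcal{D}$ versus the empirical measure was used in the chain above. There is no genuine obstacle here; the only minor care is ensuring the moment $\mathbb{E}_{w\sim\rho}\|w-w_0\|^2$ is finite, which is implicit in writing $\mathrm{tr}\,\mathrm{Cov}_{w\sim\rho}[w]$ on the right-hand side (otherwise the bound is vacuous).
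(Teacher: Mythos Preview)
Your proposal is correct and follows essentially the same route as the paper: invoke Lemma \ref{lem:liprisks} to get $L$-Lipschitzness of $R_{RM}^l$, average the pointwise bound $|R_{RM}^l(w_0)-R_{RM}^l(w)|\le L\|w-w_0\|$ over $\rho$, then apply Cauchy--Schwarz and identify $\mathbb{E}_{w\sim\rho}\|w-w_0\|^2$ with $\mathrm{tr}\,\mathrm{Cov}_{w\sim\rho}[w]$. The only cosmetic difference is that the paper writes the two-sided Lipschitz inequality and takes expectations, whereas you pull the constant inside the expectation first; both lead to the same first-moment bound.
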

\begin{proof}
Since $l(\cdot;X,Y)$ is $L$-Lipschitz for all $(X,Y) \in \mathcal{X} \times \mathcal{Y}$, from Lemma \ref{lem:liprisks} we know that $R_{RM}^l(\cdot)$ is $L$-Lipschitz. Thus, for $w_0 \in W$ and for any $w \in W$ we have:
\begin{align*}
&|R_{RM}^l(w_0)-R_{RM}^l(w)| \leq L\|w_0-w\| \implies \\
&R_{RM}^l(w)-L\|w_0-w\| \leq R_{RM}^l(w_0) \leq L\|w_0-w\|+ R_{RM}^l(w),\, \implies\\
&\mathbb{E}_{w \sim \rho}R_{RM}^l(w)-L\mathbb{E}_{w \sim \rho}\|w_0-w\| \leq R_{RM}^l(w_0) \leq \mathbb{E}_{w \sim \rho}R_{RM}^l(w)+L\mathbb{E}_{w \sim \rho}\|w_0-w\|\implies\\
&|R_{RM}^l(w_0)-\mathbb{E}_{w \sim \rho}R_{RM}^l(w)| \leq L\mathbb{E}_{w \sim \rho}\|w_0-w\|.
\end{align*}
From the Cauchy-Schwarz inequality,
$\mathbb{E}_{w \sim \rho}\|w_0-w\| \leq \sqrt{\mathbb{E}_{w \sim \rho}\|w_0-w\|^2}$.
Using that $\mathbb{E}_{w \sim \rho}[w]=w_0$ we have:
$\mathbb{E}_{w \sim \rho}\|w-\mathbb{E}_{w \sim \rho}[w]\|^2
= \mathrm{tr}\mathrm{Cov}_{w \sim \rho}[w]$.
The desired result follows. The proof is similar for $R_{RM,n}$.
\end{proof}

Denote  $R_{TRM}(w,\theta)= R_{TRM}(h_w,Q_\theta)$  and 
$R_{TRM,n}(w,\theta)= R_{TRM,n}(h_w,Q_\theta)$. 
Also define the \textit{fixed-g loss} $l_g(w;X,Y):=l(h_w(gX),Y)=l(w;gX,Y)$ and the \textit{fixed-g risk} $R_g(w) := R_{RM}(h_w \circ g)$ for all $g \in G_\Omega$. 

\begin{lemma}\label{lem:trmlipcov}
If $l(\cdot;X,Y)$ is $L$-Lipschitz continuous for any $(X,Y) \in \mathcal{X} \times \mathcal{Y}$ and a distribution $Q_w$ on $W$ with $\mathbb{E}_{w' \sim Q_w}[w'] = w \in W$, then it holds that:
\begin{enumerate}
\item $w\mapsto R_{TRM}(w,\theta)$ and $w \mapsto R_{TRM,n}(w,\theta)$ are $L$-Lipschitz continuous for all $\theta \in \Theta$. 
\item 
$|R_{TRM}(w,\theta)-\mathbb{E}_{w' \sim Q_w}R_{TRM}(w',\theta)|  \leq L\sqrt {\mathrm{tr}\mathrm{Cov}_{w' \sim Q_w}[w']}, ~ \forall (w,\theta) \in W \times \Theta$ and same for $R_{TRM,n}$.
\item In particular, if $Q_w=\mathcal{N}(w,\sigma^2 I)$ we find:
$$|R_{TRM}(w,\theta)-\mathbb{E}_{w' \sim Q_w}R_{TRM}(w',\theta)|  \leq L\sqrt{p}\sigma, ~\forall (w,\theta)\in W \times \Theta$$
and the analogous inequality also holds for $R_{TRM,n}$.
\end{enumerate}
\end{lemma}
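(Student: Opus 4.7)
The plan is to prove the three parts sequentially: Part 1 extends the Lipschitz property of Lemma \ref{lem:liprisks} from the RM risk to the TRM risk by viewing the latter as an RM-type quantity with one additional expectation; Part 2 then reuses the argument of Lemma \ref{lem:lipcov} verbatim with $R_{RM}^l$ replaced by $R_{TRM}$; and Part 3 is an immediate computation of the trace covariance for an isotropic Gaussian.

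For Part 1, I would first note that for each fixed $g \in G_\Omega$, the map $w \mapsto l_g(w;X,Y) = l(w;gX,Y)$ is $L$-Lipschitz in $w$, because the hypothesis ``$l(\cdot;X,Y)$ is $L$-Lipschitz for any $(X,Y)\in\mathcal{X}\times\mathcal{Y}$'' applies at the point $(gX,Y)\in\mathcal{X}\times\mathcal{Y}$. Writing $R_{TRM}(w,\theta) = \mathbb{E}_{(X,Y)\sim\mathcal{D}}\mathbb{E}_{g\sim Q_\theta}[l_g(w;X,Y)]$ and following the same pull-in-the-absolute-value computation as in Lemma \ref{lem:liprisks}, the triangle inequality inside the iterated expectations yields
\begin{equation*}
|R_{TRM}(w',\theta) - R_{TRM}(w,\theta)|
\le \mathbb{E}_{(X,Y)}\mathbb{E}_{g\sim Q_\theta}|l_g(w';X,Y)-l_g(w;X,Y)|
\le L\|w'-w\|,
\end{equation*}
uniformly in $\theta$. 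The empirical counterpart $R_{TRM,n}$ is handled identically by replacing $\mathbb{E}_{(X,Y)\sim\mathcal{D}}$ with $\frac{1}{n}\sum_{i=1}^n$.

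For Part 2, with the $w$-Lipschitz property of $R_{TRM}(\cdot,\theta)$ in hand, the argument of Lemma \ref{lem:lipcov} transfers word for word: Lipschitz continuity gives $|R_{TRM}(w,\theta)-R_{TRM}(w',\theta)|\le L\|w-w'\|$, so taking $\mathbb{E}_{w'\sim Q_w}$ and using the triangle inequality together with the assumption $\mathbb{E}_{w'\sim Q_w}[w']=w$ yields $|R_{TRM}(w,\theta)-\mathbb{E}_{w'\sim Q_w}R_{TRM}(w',\theta)| \le L\,\mathbb{E}_{w'\sim Q_w}\|w-w'\|$. A Cauchy--Schwarz step bounds this by $L\sqrt{\mathbb{E}_{w'\sim Q_w}\|w-w'\|^2}=L\sqrt{\mathrm{tr}\,\mathrm{Cov}_{w'\sim Q_w}[w']}$, where the last equality uses precisely that $w$ is the mean of $Q_w$. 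The same derivation, with $R_{TRM}$ replaced by $R_{TRM,n}$, gives the empirical version.

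For Part 3, specializing $Q_w = \mathcal{N}(w,\sigma^2 I_p)$ makes $\mathrm{Cov}_{w'\sim Q_w}[w'] = \sigma^2 I_p$, so $\mathrm{tr}\,\mathrm{Cov}_{w'\sim Q_w}[w'] = p\sigma^2$ and the bound of Part 2 collapses to $L\sqrt{p}\,\sigma$. I do not expect any serious obstacle: the only subtlety is to confirm that the Lipschitz assumption, which is stated ``for any $(X,Y)\in\mathcal{X}\times\mathcal{Y}$,'' transfers to the transformed data $(gX,Y)$ with the same constant $L$ and is preserved under the outer expectation over $g\sim Q_\theta$, so the Lipschitz constant does not depend on $\theta$. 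Once this is noted, everything else reduces to the RM lemmas already proved.
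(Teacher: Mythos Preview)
Your proposal is correct and follows essentially the same approach as the paper. The only cosmetic difference is that the paper routes Part~2 through the fixed-$g$ risk $R_g(w)=R_{RM}^{l_g}(w)$, applies Lemma~\ref{lem:lipcov} to each $R_g$, and then averages over $g\sim Q_\theta$, whereas you apply the Lemma~\ref{lem:lipcov} argument directly to $R_{TRM}$ after establishing its Lipschitzness in Part~1; both orderings use the same ingredients and yield the same bound.
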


\begin{proof}
First we observe that since $l(\cdot;X,Y)$ is $L$-Lipschitz for all $(X,Y) \in \mathcal{X} \times \mathcal{Y}$ and $gX \in \mathcal{X}$, we have that $l_g(\cdot)$ is $L$-Lipschitz for all $g \in G_\Omega$. 
Then, we observe that $R_g(w)=R_{RM}^{l_g}(w)$. By  applying Lemma \ref{lem:liprisks} to  $l_g(\cdot)$ we find that $R_g(\cdot)$ is $L$-Lipschitz for all $g \in G_\Omega$. Then, applying Lemma \ref{lem:lipcov} to  $l_g$ and $Q_w$ we find:
\begin{align}
|R_g(w)-\mathbb{E}_{w' \sim Q_w}R_g(w')| = |R_{RM}^{l_g}(w)-\mathbb{E}_{w' \sim Q_w}R_{RM}^{l_g}(w')| \leq L\sqrt {\mathrm{tr}\mathrm{Cov}_{w' \sim Q_w}[w']}.
\label{eq:rglipcov}
\end{align}
Next we observe that $R_{TRM}(w,\theta)=\mathbb{E}_{g \sim Q_\theta}R_g(w)$ thus for all $w,w' \in W$, 
\begin{align*}
&|R_{TRM}(w',\theta)-R_{TRM}(w,\theta)| = |\mathbb{E}_{g \sim Q_\theta}R_g(w')-\mathbb{E}_{g \sim Q_\theta}R_g(w)|=\\
&|\mathbb{E}_{g \sim Q_\theta}(R_g(w')-R_g(w))|\leq\mathbb{E}_{g \sim Q_\theta}|R_g(w')-R_g(w)|\leq \mathbb{E}_{g \sim Q_\theta}L\|w'-w\|=L\|w'-w\|.
\end{align*}
This shows that $w\mapsto R_{TRM}(w,\theta)$ is $L$-Lipschitz, for all $\theta \in \Theta$,
With a similar proof we can also show that $w\mapsto R_{TRM,n}(w,\theta)$ is $L$-Lipschitz continuous for all $\theta \in \Theta$.

Also, using Eq. \ref{eq:rglipcov} and $Q_w = \mathcal{N}(w,\sigma^2I)$,
\begin{align*}
&|R_{TRM}(w,\theta)-\mathbb{E}_{w' \sim Q_w}R_{TRM}(w',\theta)| = |\mathbb{E}_{g \sim Q_\theta}R_g(w)-\mathbb{E}_{w' \sim Q_w}\mathbb{E}_{g \sim Q_\theta}R_g(w')|\\
&=|\mathbb{E}_{g \sim Q_\theta}R_g(w)-\mathbb{E}_{g \sim Q_\theta}\mathbb{E}_{w' \sim Q_w}R_g(w')|
\leq \mathbb{E}_{g \sim Q_\theta}|R_g(w)-\mathbb{E}_{w' \sim Q_w}R_g(w')|
\leq L \sqrt {\mathrm{tr}\mathrm{Cov}_{w' \sim Q_w}[w']}
= L\sqrt{p}\sigma.
\end{align*}
The argument for $R_{TRM,n}$ is similar.
\end{proof}

For some set $A$, consider a parametrization such that each $a \in A$ corresponds to a transformation $g_a \in G_\Omega$ and each pair $(w,a) \in W \times A$ indexes a predictor $x \rightarrow (h_w \circ g_a)(x)$. We define the \textit{fixed-parametrized-g risk} as $$R_p(w,a):=R_{g_a}(w)=\mathbb{E}_{(X,Y) \sim \mathcal{D}}[l((h_w \circ g_a)(X),Y)]=R_{RM}(h_w \circ g_a).$$
Similarly, $R_{p,n}(w,a):=R_{RM,n}(h_w \circ g_a)$. 
Thus, instead of transformations acting on $X$ via $g_aX$, we view them as an extension of a  hypothesis space containing models of the form $h_w \circ g_a$. We can now use the bound in Theorem  \ref{thm:basicalquier} on the space $W \times A$ for $R_p(w,a)$.

\subsubsection{PAC-Bayes Bound}

\begin{proof}[Proof of Theorem \ref{thm:specialpac}]
The proof continues with the following steps: First, by applying the PAC-Bayes bound on $R_p(w,a)$ on product posterior measures over $W \times A$, the TRM risk appears naturally, but in the form of an expectation over the models. 
We then use Lipschitz continuity and the Lemma \ref{lem:trmlipcov} that we proved to derive the desired bound on TRM. 
Since the bound holds simultaneously for all posteriors over $W \times A$, we further constrain it appropriately. 
We conclude the proof by computing the KL divergence terms for the transformation and the models. At the end of the argument, we discuss optimization over the free parameters.

\textbf{Step 1 (Bound on the expected TRM loss):} Consider the prior $P_p := P_\hh \times P_A$ with $P_A = \prod_{i=1}^K P_{(i)}$ from Theorem \ref{thm:specialpac},
$P_\hh = \mathcal{N}(w_0,s^2I)$ for some fixed $w_0 \in W$, 
and posteriors $Q_{w,\theta} := Q_{\mathcal{H},w} \times Q_\theta$ on $W \times A$ with $Q_\theta := \prod_{i=1}^K Q_{(i)}$  as specified by the theorem, with $Q_{\mathcal{H},w}=\mathcal{N}(w,\sigma^2I)$. 
Since $Q_{\mathcal{H},w} \ll P_\hh$, it only remains to prove that $Q_{\theta} \ll P_A, ~ \forall \theta \in \Theta$, which we do later in Step 3. Assuming the Hoeffding assumption  holds with $f(\lambda,n)$ (bounded losses being an example), we apply the bound from Theorem \ref{thm:basicalquier} to  $\delta \in (0,1], \lambda >0$ and $P_p$ to $R_p(w,a)$.
We find that with probability at least $1-\delta$ over the sampling of $S=\{(X_i,Y_i)\}_{i=1}^n \sim \mathcal{D}^n$ we have simultaneously over any probability measure $Q_{w,\theta}$ that:
\begin{align}
    \mathbb{E}_{w' \sim Q_w}\mathbb{E}_{a \sim Q_\theta} R_p(w',a) &\leq \mathbb{E}_{w' \sim Q_w}\mathbb{E}_{a \sim Q_\theta} R_{p,n}(w',a) + \frac{1}{\lambda}\left(KL(Q_{w,\theta}||P_p)+\log\Big(\frac{1}{\delta}\Big)+f(\lambda,n)\right) \iff\notag\\
    \mathbb{E}_{w' \sim Q_w}R_{TRM}(w',\theta) &\leq \mathbb{E}_{w' \sim Q_w}R_{TRM,n}(w',\theta) + \frac{1}{\lambda}\left(KL(Q_{w,\theta}||P_p)+\log\Big(\frac{1}{\delta}\Big)+f(\lambda,n)\right).
    \label{eq:pacbayesexptrm}
\end{align}
\textbf{Step 2 (Bound on TRM for Lipschitz losses):} Now using that $l(\cdot;X,Y)$ is $L$-Lipschitz for any $(X,Y) \in \mathcal{X} \times \mathcal{Y}$, as well as Lemma \ref{lem:trmlipcov}, we 
can derive a bound on TRM:
\begin{align*}
    R_{TRM}(w,\theta) \overset{Lem. \ref{lem:trmlipcov}}{\leq}& \mathbb{E}_{w' \sim Q_w}R_{TRM}(w',\theta) + L \sqrt {\mathrm{tr}\mathrm{Cov}_{w' \sim Q_w}[w']} \\
    \overset{Eq. (\ref{eq:pacbayesexptrm})}{\leq}& \mathbb{E}_{w' \sim Q_w}R_{TRM,n}(w',\theta) + \frac{1}{\lambda}\left(KL(Q_{w,\theta}||P_p)+\log\Big(\frac{1}{\delta}\Big)+f(\lambda,n)\right) + L \sqrt {\mathrm{tr}\mathrm{Cov}_{w' \sim Q_w}[w']}\\
    \overset{Lem. \ref{lem:trmlipcov}}{\leq}& R_{TRM,n}(w,\theta) + \frac{1}{\lambda}\left(KL(Q_{w,\theta}||P_p)+\log\Big(\frac{1}{\delta}\Big)+f(\lambda,n)\right) + 2L \sqrt {\mathrm{tr}\mathrm{Cov}_{w' \sim Q_w}[w']}.
\end{align*}

Now, considering $Q_w = \mathcal{N}(w,\sigma^2I)$, we have:
\begin{align*}
    R_{TRM}(w,\theta)  \leq  
    R_{TRM,n}(w,\theta) + \frac{1}{\lambda}\left(KL(Q_{w,\theta}||P_p)+\log\Big(\frac{1}{\delta}\Big)+f(\lambda,n)\right) + 2L\sqrt{p}\sigma.
\end{align*}
Since the bound holds for any $\sigma>0$, we can optimize over this parameter. 
Moreover, as the bound holds simultaneously for all $Q_{w,\sigma}=\mathcal{N}(w,\sigma^2I)$, 
when we optimize the bound, $\sigma$ is permitted to depend on the optimization parameters. 
We will discuss optimization over $\sigma,\lambda$ after finding the form of the KL-divergence terms.

\textbf{Step 3 (KL divergence terms):} Now we will compute the $KL(Q_{w,\theta}||P_p)$ term. We have 
\begin{align*}
    KL(Q_{w,\theta}||P_p) = KL(Q_{\mathcal{H},w} \times \prod_{i=1}^K Q_{(i)} || P_\hh \times \prod_{i=1}^K P_{(i)}) &= KL(Q_{\mathcal{H},w}||P_\hh) + \sum_{i=1}^K KL(Q_{(i)} || P_{(i)}).
\end{align*}
We start with $KL(Q_{\mathcal{H},w}||P_\hh)$. Since $Q_{\mathcal{H},w}=\mathcal{N}(w,\sigma^2I)$ and $P_\mathcal{H}=\mathcal{N}(w_0,s^2I)$ it is known that
\begin{align*}
    KL(Q_{\mathcal{H},w}||P_\hh) = \frac{1}{2}\left[ 2p\log\frac{s}{\sigma}-p+\frac{1}{s^2}\|w-w_0\|^2 + \frac{\sigma^2}{s^2}p \right].
\end{align*}

Then, we need to derive the expressions for each $KL(Q_{(i)} || P_{(i)})$. 
If $i\leq k$, then the base distributions are parametric, and we denote the KL term by $K_i(\alpha_i,\pi_i)$; while for $k<i\leq K$ the base distributions are parameter-free, and we denote the KL term $K_i(\pi_i)$. 

We start with the more challenging $K_i(\alpha_i,\pi_i)$ term. We will omit the index $i$ to keep the notation uncluttered. Since none of the $Q_{(i)},P_{(i)}$ are absolutely continuous  with respect to  the Lebesgue measure, we need to resort to the general definition of KL-divergence using the Radon-Nikodym derivative. 

Let $(\real,\mathcal{L})$ be our measurable space. We will use the following definitions in the proof:
\begin{compactenum}
    \item Dirac measure $\delta_0$, where $\delta_0(E) = \mathbbm{1}[0 \in E]$ for all measurable sets $E$.
    \item Lebesgue measure $\lambda$ on the real line.
    \item Uniform measure $\mathrm{U}[-A,A]$, where $\mathrm{U}[-A,A](E) = \frac{\lambda(E \cap [-A,A])}{2A}$, for all measurable sets $E$.
    \item Prior measure $P=\beta\delta_0 + (1-\beta)\mathrm{U}[-A,A]$, where $A>0$, $\beta \in (0,1)$.
    \item Posterior measure $Q=(1-\pi)\delta_0 + \pi \mathrm{U}[-\alpha,\alpha]$, where $\alpha \in (0,A]$.
\end{compactenum}

Both $P$ and  $Q$ are $\sigma$-finite measures, since they are probability measures. 
Recall that a measure $\mu$ is absolutely continuous with respect to another measure $\nu$, written as $\mu \ll \nu$, if for all measurable sets $A$ such that $\nu(A)=0$, we also have $\mu(A)=0$.
Here $\delta_0, \mathrm{U}[-A,A]$, and $Q$ are absolutely continuous with respect to $P $. Indeed, if there is some measurable set $E$ such that $P(E)=0$, then $$\beta\delta_0(E) + (1-\beta)\mathrm{U}[-A,A](E) = 0 \implies \delta_0(E)=\mathrm{U}[-A,A](E)=0$$ 
since $\beta \in (0,1)$. Moreover, $Q(E) = (1-\pi)\delta_0(E)+ \pi \mathrm{U}[-\alpha,\alpha](E) = 0$, where we also used that $[-\alpha,\alpha] \subseteq [-A,A]$ which implies $\mathrm{U}[-\alpha,\alpha] \ll \mathrm{U}[-A,A]$. 

However, $P$ and $Q$ are not absolutely continuous with respect to the Lebesgue measure, since $Q(\{0\}),P(\{0\}) >0$ while $\lambda(\{0\})=0$.
However, $\delta_0$ and $\mathrm{U}[-\alpha,\alpha]$ are absolutely continuous with respect to $P$, and thus we have:
\begin{align*}
\frac{dQ}{dP} &= \frac{d((1-\pi)\delta_0 + \pi\mathrm{U}[-\alpha,\alpha])}{dP} = (1-\pi)\frac{d\delta_0}{dP} + \pi \frac{d\mathrm{U}[-\alpha,\alpha]}{dP}.
\end{align*}
We can readily verify that the Radon-Nikodym derivatives have the form 
\begin{align*}
    \frac{d\delta_0}{dP}(x) &= \frac{1}{\beta} \mathbbm{1}[x=0], \qquad
    \frac{d\mathrm{U}[-\alpha,\alpha]}{dP}(x) = \frac{A}{\alpha} \mathbbm{1}[x \in [-\alpha,\alpha] \setminus \{0\}].
\end{align*}
where the equalities hold $P$-almost everywhere. Indeed, we can check that these functions satisfy the required conditions, and then use the uniqueness of the Radon-Nikodym derivative up to $P$-null sets.
Next we compute the Kullback-Leibler divergence $KL(Q||P)$.
\begin{align*}
    KL(Q||P) &= \int_{\real} \log \frac{dQ}{dP} dQ = \int_\real  \log \frac{dQ}{dP} d[(1-\pi)\delta_0 + \pi\mathrm{U}[-\alpha,\alpha]] \\
    &= (1-\pi)\int_\real  \log \frac{dQ}{dP} d\delta_0 + \pi  \int_\real  \log \frac{dQ}{dP} d\mathrm{U}[-\alpha,\alpha] \\
    &=  (1-\pi)  \log \frac{dQ}{dP}(0) + 2 \pi \int_\real \mathbbm{1}[x \in [-\alpha,0)]\log \frac{dQ}{dP} d\mathrm{U}[-\alpha,\alpha].
\end{align*}
Now, using our formulas for $\frac{dQ}{dP}$, $\frac{d\delta_0}{dP}$, and $ \frac{d\mathrm{U}[-\alpha,\alpha]}{dP}$, we find 
\begin{align*}
    &(1-\pi)  \log \frac{1-\pi}{\beta} + 2 \pi \frac{1}{2}\log\frac{\pi A}{(1-\beta)\alpha}
     = (1-\pi)  \log \frac{1-\pi}{\beta} + \pi\log\frac{\pi}{1-\beta} + \pi\log\frac{A}{\alpha}\\
    &=  KL(\mathcal{B}(\pi)||\mathcal{B}(1-\beta)) + \pi KL(\mathrm{U}[-\alpha,\alpha]||\mathrm{U}[-A,A]),
\end{align*}
where $\mathcal{B}(\pi)$ denotes the Bernoulli distribution with parameter $\pi$.\\

For $K_i(\pi_i)$, omitting again the index $i$ to keep the notation clean, we find
\begin{align*}
    Q &= (1-\pi)\delta_0 + \pi \mathrm{U}[\{0,\ldots,N-1\}] = \left\{1-\pi+\frac{\pi}{N}, \frac{\pi}{N}, \ldots, \frac{\pi}{N}\right\}\\
    P &= \beta\delta_0 + (1-\beta) \mathrm{U}[\{0,\ldots,N-1\}] = \left\{\beta+\frac{1-\beta}{N}, \frac{1-\beta}{N}, \ldots, \frac{1-\beta}{N}\right\}.
\end{align*}
Then, 
\begin{align*}
KL(Q||P) &= \sum_{x=0}^{N-1}q(x)\log\frac{q(x)}{p(x)} 
= (1-\pi+\frac{\pi}{N})\log\frac{(1-\pi+\frac{\pi}{N})}{(\beta+\frac{1-\beta}{N})} + (N-1)\frac{\pi}{N}\log\frac{\pi/N}{(1-\beta)/N}\\
&=
(1-\frac{(N-1)\pi}{N})\log\frac{(1-\frac{(N-1)\pi}{N})}{(1-\frac{(N-1)(1-\beta)}{N})} + \frac{(N-1)\pi}{N}\log\frac{(N-1)\pi/N}{(N-1)(1-\beta)/N} \\
&=
\mathrm{KL}\left(\mathcal{B}\left(\left(1-\frac{1}{N}\right)\pi\right)||\mathcal{B}\left(\left(1-\frac{1}{N}\right)(1-\beta)\right)\right).
\end{align*}

\textbf{Step 4 (Putting it all together):} We write the whole bound for completeness. Given $\delta \in [0,1),\lambda>0$ and a prior measure $P_\mathcal{H} \times \prod_{i=1}^K P_{(i)}$ it holds with probability at least $1-\delta$ over the draw of $S=\{(X_i,Y_i)\}_{i=1}^n \sim \mathcal{D}$ and simultaneously over $w \in W$ and $\theta \in \Theta$, where $\theta=\{a_1,\ldots,a_k,\pi_1,\ldots,\pi_K\}$ and $a_i \in (0,A_i]$ and $\pi_i \in [0,1]$, that:
\begin{align}
    R_{TRM}(w,\theta)  &\leq  
    R_{TRM,n}(w,\theta) +\nonumber\\ &\frac{1}{\lambda}\left(\frac{1}{2}\left[2p\log\frac{s}{\sigma}-p+\frac{1}{s^2}\|w-w_0\|^2 + \frac{\sigma^2}{s^2}p \right]+\mathrm{Reg}(\theta)+\log\Big(\frac{1}{\delta}\Big)+f(\lambda,n)\right) + 2L\sqrt{p}\sigma,
    \label{eq:pacbayestrmfull}
\end{align}
where
$\mathrm{Reg}(\theta)$ is defined in
\eqref{eq:pacreg} in
Theorem \ref{thm:specialpac}.


\textbf{TRM bound for a countable $W$:} 
In the above analysis, Lipschitz continuity is used to move from a probabilistic version of TRM to a deterministic one. 
However, Lipschitz continuity of the loss is not needed is when $W$ is countable. 
In that case, by choosing $Q_{\mathcal{H},w} = \delta_w$ where $\delta_{w}$ is the discrete Dirac mass on $h_w$, the KL divergence term is equal to:
$\log\left(\frac{1}{P_{\hh}(w)}\right) + \sum_{i=1}^K KL(Q_{(i)} || P_{(i)})$, and the bound is:
\begin{align}
    R_{TRM}(w,\theta)  \leq  
    R_{TRM,n}(w,\theta) + \frac{1}{\lambda}\left(\log\left(\frac{1}{P_{\hh,w}\delta}\right) + \sum_{i=1}^K KL(Q_{(i)} || P_{(i)})+f(\lambda,n)\right).
    \label{eq:pacbayestrmcount}
\end{align}

\textbf{Optimization over the "hyperparameters" $\sigma,\lambda$:}
\begin{enumerate}
\item \textbf{Optimization over $\sigma>0$}:
As we discussed above, since the bound holds for any $\sigma>0$, we can optimize over this variable before optimizing with respect to $w$. 
From the right hand side of Eq. \ref{eq:pacbayestrmfull}, we see that the terms involving $\sigma$ are:
$$-\frac{p}{\lambda}\log\sigma + \frac{p}{2\lambda s^2}\sigma^2+2L\sqrt{p}\sigma.$$
This is a convex function of $\sigma$, attaining its minimum at
$\sigma^*(\lambda) = \left(\sqrt{\frac{\lambda^2 L^2}{p}+\frac{1}{s^2}}+ \frac{\lambda L}{\sqrt{p}}\right)^{-1}$.

The bound would hold simultaneously for all $\sigma >0$ if we considered posteriors of the form  $Q_{\mathcal{H},w,\sigma}=\mathcal{N}(w,\sigma^2I)$. Then, $\sigma$---which also appears in the "training error" term---would be  permitted to depend on the parameters during optimization. 

\item \textbf{Optimization  with respect to  $\lambda$}:
The bounds we derived hold for any $\lambda>0$. However, they do not hold uniformly over $\lambda$. In practice, this implies that we have to set some fixed $\lambda$ in the beginning before optimizing over the parameters of the posterior. In particular, it is not permitted for $\lambda$ to depend on these parameters. 
In some cases, we can optimize the right hand side of the bound over $\lambda$, set $\lambda$ to this value, and then optimize over the parameters. 
This is the case for example in Eq. \ref{eq:alquierbounded} when $KL(\rho||\pi)$ does not depend on $\rho$. For example if $\rho(w)=\delta_w$ on a countable space $W$, then $KL(\rho||\pi)=\log\frac{1}{\pi(w)}$ in which case $\lambda^* = \frac{1}{b-a}\sqrt{8n\log\frac{1}{\pi(w)\delta}}$. In our case, even for this simple scenario discussed in  \eqref{eq:pacbayestrmcount}, optimization over $\lambda$ is still not possible, since the KL-divergence also depends on the parameters $\theta$ of our distribution of transformations $Q_\theta$.

There are various techniques to address the dependence on $\lambda$. Typically, a union bound over a discrete grid $\Lambda$ is used to obtain the PAC-Bayes bound uniformly over $\lambda$ on this grid. This loosens the initial bound by a logarithmic term  $\frac{\log|\Lambda|}{n}$ in the cardinality of the grid. 
\citet{LangfordNotBounding} choose a grid to loosen the bound only by an iterated logarithm factor $\frac{\log\log|\Lambda|}{n}$ in the cardinality of the grid. 
Moreover, they relax the upper bound even further to transform the discrete grid into a continuous grid. Optimizing over a discrete grid requires multiple training loops, which is not efficiently feasible in our application. 
Moreover, gradient updates jointly on the parameters of the model, the transformations, and the regularization parameter $\lambda$ may result in unstable training, which moreover can also depend strongly on the initialization. 

Another method is to use a bound that is not dependent on any parameter $\lambda$, such as the bound in \citep{10.1145/307400.307435}. However, that has been derived only for bounded losses. That bound suggests an objective similar to the one we are currently using.

Experimentally, we found out that $\lambda=\sqrt{n}$ works well in practice. Then, using the optimal $\sigma^*(\lambda) \approx \frac{1}{\sqrt{n}}\frac{\sqrt{p}}{2L}$ the bound becomes (e.g., for subgaussian losses with variance factor $s^2$):
\begin{align}
    R_{TRM}(w,\theta)  &\leq  
    R_{TRM,n}(w,\theta) + \notag\\ &\frac{1}{\sqrt{n}}\left(\frac{1}{2}\left[2p\log\frac{s}{\sigma}-p+\frac{1}{s^2}\|w-w_0\|^2 + \frac{\sigma^2}{s^2}p \right]+\mathrm{Reg}(\theta)+\log\Big(\frac{1}{\delta}\Big)+\frac{s^2}{2}\right) + 2L\sqrt{p}\sigma \notag \\ 
    &= R_{TRM,n}(w,\theta) + \frac{1}{\sqrt{n}}\left(\frac{1}{2s^2}\|w-w_0\|^2 + \mathrm{Reg}(\theta)\right) + c_n
    \label{eq:pacbayestrmfull2}
\end{align}
where $c_n = \frac{1}{\sqrt{n}}\left(p\log\frac{2Ls\sqrt{n}}{\sqrt{p}} + \frac{1}{n}\frac{p^2}{8L^2s^2} + \log\frac{1}{\delta}+\frac{s^2}{2}+\frac{p}{2}\right) \xrightarrow[]{n \rightarrow \infty} 0$.
\end{enumerate}
\end{proof}

\begin{remark}\label{rmk:funcvsparam}
As mentioned in \citet{DR17}, bounding the risk $R_{RM}(w)$  with respect to  the parameters $w \in W$ instead of the elements  $h_w \in \mathcal{H}$ might result in looser bounds due to overparametrization. Much tighter bounds would be obtained if we considered $\mathcal{H}$ as the quotient space $W/{\sim}$, where the equivalence relation $\sim$ is defined as $w \sim  w' \iff h_w=h_{w'}$ and re-derived the bound directly on this quotient space for $R_{RM}(h_w)$. 
However, finding these equivalences in a deep network may in general be intractable. 
In our setup, we would further tighten the bounds if we derived them for $Q$ supported on transformations $\rho \in G_Q$---namely $R_{TRM}(h_w,Q_\theta)$---rather than on the parameters of the transformations $a \in A$, namely $R_{TRM}(w,a)$. 

We could do that in two steps. First, for each base distribution $Q_{(i)}$ on $A_{(i)}$, consider $G_{Q_{(i)}}$ as the space $A_{(i)}/{\sim}$ where $a \sim a' \iff \rho(a)=\rho(a')$. Since the base distributions are usually simple, we could actually establish an isomorphism in this case by careful construction of $A_{(i)}$ (e.g., rotations not exceeding $2\pi$). 
As a second step, consider the distributions $Q$ on the space $A$, and view $G_Q$ as $A/{\sim}$, 
where for $\rho(a):=\rho_1(a_1)\circ\ldots\rho_K(a_K) \in G_Q$ we have $a \sim a' \iff \rho(a)=\rho(a')$. 
There are practical scenarios where an isomorphism can be established in this case too, depending on the structure of the group. 
For example, when the assumptions on the base transformations above hold, and the total transformation space is a \textit{inner semi-direct product} of a larger group, i.e., $G_{Q}=G_{Q_{(1)}} \ldots G_{Q_{(K)}}$, the space of compositions can be identified with the Cartesian product. 

Further, we could also consider all posteriors $Q \in \Omega$, instead of the posteriors indexed by $\Theta$, i.e., $\theta \sim \theta' \iff Q_\theta=Q_\theta'$.
In our example with mixtures of base distributions, $\Omega$ and $\Theta$ are actually isomorphic given the conditions above and if not all parameter-free bases are point masses at identity. 
Finally, one more improvement could be to derive the bounds for posteriors on $\mathcal{H} \circ G_Q$, namely $R_{RM}(h_w \circ g_a)$ instead of the product space $\mathcal{H} \times G_Q$, i.e., the map $(h_w,g_a) \rightarrow R_{RM}(h_w \circ g_a)$. 

\end{remark}

\subsection{Unbiased Gradient Estimates}\label{sec:gradEstim}
In Eq. \ref{eq:loss} we define the loss for SCALE as: 
$$ L(w,\theta):=\frac{1}{n}\sum_{i=1}^n\EE_{g\sim Q_\theta}\left[l(h_w(gx_i),y_i)\right]+\lambda_\mathrm{reg}\text{Reg}(\theta)$$
with $\theta=\{\pi_1,\ldots,\pi_K,\alpha_1,\ldots,\alpha_k\}$. We note again that, as stated in \ref{pacqtheta}, 
from the $K$ base transformation distributions we assume that only the first $k$ contain learnable parameters $\alpha_i$, $i\in [k]$, respectively. 
\par Computing the gradient of the regularization term $\lambda_\mathrm{reg}\text{Reg}(\theta)$  is straightforward, so we will focus on the gradient computation for the first term of the loss. For a fixed $s\in \{1,\ldots,K\}$, we can write the first term of the loss as:
\begin{align}
    &\frac{1}{n}\sum_{i=1}^n \EE_{g_1\sim Q_{(1)}}\ldots\EE_{g_s\sim Q_{(s)}}\ldots\EE_{g_K\sim Q_{(K)}}[l(h_w(g_1\ldots g_s \ldots g_K x_i),y_i)]=\nonumber\\ &\frac{1}{n}\sum_{i=1}^n\Bigg((1-\pi_s)\cdot \EE_{g_1\sim Q_{(1)}}\ldots \EE_{g_K\sim Q_{(K)}}\left[l(h_w(g_1\ldots g_{s-1}g_{s+1}\ldots g_K x_i),y_i)\right]\nonumber\\
    &+\pi_s\cdot\EE_{g_1\sim Q_{(1)}}\ldots\EE_{g_s\sim U_{s,\alpha_s}}\ldots \EE_{g_K\sim Q_{(K)}}\left[l(h_w(g_1\ldots g_{s}\ldots g_K x_i),y_i)\right]\Bigg).\label{Eq:expExpr}
\end{align}
The partial derivative with respect to $\pi_s$ is
\begin{align}
    \frac{1}{n}\sum_{i=1}^n\Big(& -\EE_{g_1\sim Q_{(1)}}\ldots \EE_{g_K\sim Q_{(K)}}\left[l(h_w(g_1\ldots g_{s-1}g_{s+1}\ldots g_K x_i),y_i)\right]\nonumber\\
    &+\EE_{g_1\sim Q_{(1)}}\ldots\EE_{g_s\sim U_{s,\alpha_s}}\ldots \EE_{g_K\sim Q_{(K)}}\left[l(h_w(g_1\ldots g_{s}\ldots g_Kx_i),y_i)\right]\Big).\label{Eq:grad_pi}
\end{align}
To compute an unbiased estimator of this gradient we first sample $M$ iid copies from the distribution of $g=g_1\ldots g_{s-1}u_sg_{s+1}\ldots g_K$, where each $g_\chi$ with $\chi\not=s$ is sampled  from  $Q_{(\chi)}$, 
 and $u_s$ is sampled from $U_{s,\alpha_s}$. We denote by ${g_\chi^{(j)}}$ and ${u_s^{(j)}}$ the $j^{th}$ sample from the distribution of $g_\chi$ and $u_s$, respectively. This leads to the following unbiased Monte Carlo estimator of the partial derivative with respect to $\pi_s$ shown in  \eqref{Eq:grad_pi}:
\begin{align}
    \frac{1}{nM}\sum_{i=1}^n\Bigg(\sum_{j=1}^{M} l\left(h_w(g_1^{(j)}\ldots u_s^{(j)}\ldots g_K^{(j)}x_i),y_i\right)-\sum_{j=1}^{M} l\left(h_w(g_1^{(j)}\ldots g_{s-1}^{(j)}g_{s+1}^{(j)}\ldots g_K^{(j)} x_i),y_i\right)
  \Bigg). \label{Eq:piEstim}
\end{align}
Now suppose that $s\in \{1,\ldots,k\}$ so the $s$-th transformation corresponds to a parametric distribution $Q_{(s)}$ with learnable parameter $a_s$.
For the partial derivative with respect to $\alpha_s$, we assume that the sample $u_s\sim U_{s,\alpha_s}$ can be written as $u_s=\rho_s(a)$ with  $a=\alpha_s\varepsilon,\varepsilon\sim \text{U}[-1,1]$ 
and with $\rho_s$ being a differentiable transformation. We assume that all $g_\chi$ with $\chi<s$ are differentiable with respect to their input, i.e., as functions from $\mathcal{X}$ to $\mathcal{X}$. Importantly, this covers   ``discrete'' transformations  such as flips, which are differentiable transforms of the input object.


Under these assumptions, using the reparametrization trick \citep{kingma2014auto}, which is essentially the same idea as functional/structural inference, see e.g., \cite{fraser1966structural}, we find the partial derivative with respect to $\alpha_s$:
\begin{align}
    &\frac{\partial}{\partial \alpha_s}\Bigg(\frac{1}{n}\sum_{i=1}^n \EE_{g_1\sim Q_{(1)}}\ldots\EE_{g_s\sim Q_{(s)}}\ldots\EE_{g_K\sim Q_{(K)}}[l(h_w(g_1\ldots g_s \ldots g_K x_i),y_i)]\Bigg)=\nonumber\\
    &\frac{\pi_s}{n}\sum_{i=1}^n\EE_{g_1\sim Q_{(1)}}\ldots \EE_{\varepsilon\sim [-1,1]}\ldots\EE_{g_K\sim Q_{(K)}}\left[\frac{\partial l(h_w(g_1\ldots \rho_s(\alpha_s\varepsilon)\ldots g_K x_i),y_i)}{\partial a_s} \right].\label{Eq:alphaPartial} 
\end{align}
Similarly to before, an unbiased estimator of the gradient from  \eqref{Eq:alphaPartial} can be computed by taking $M$ samples from the distribution of each $g_\chi\sim Q_{(\chi)}$ with $\chi\not=s$ and $M$ samples from the distribution $\varepsilon\sim \text{U}[-1,1]$. This leads to the following unbiased Monte Carlo estimator of the partial derivative with respect to $\alpha_s$:
\begin{align}
    \frac{\pi_s}{nM}\sum_{i=1}^n\sum_{j=1}^M\left[\frac{\partial l(h_w(g_1^{(j)}\ldots \rho_s(\alpha_s\varepsilon^{(j)})\ldots g_K^{(j)} x_i),y_i)}{\partial \alpha_s}\right].\label{Eq:alphaEstim}
\end{align}
{\bf Efficient gradient estimation:}  During training, for each datapoint $x$ we take $M$ samples from the distribution of $gx=\prod_{i=1}^K g_i x$. 
For each $g_\chi$, we first sample $t_\chi\sim \text{Bern}(\pi_\chi)$ and $u_\chi\sim U_{\chi,\alpha_\chi}$ if $\chi\leq k$ or $u_\chi\sim U_{\chi}$ if $\chi>k$. Then we set $g_\chi=\mathbbm{1}(t_\chi=0)I+\mathbbm{1}(t_\chi=1)u_\chi$. This means that among the $M$ samples from the distribution of $gx$, the expected number of transforms $g_\chi$ sampled from the transformation distribution $U_{\chi,\alpha_\chi}$ or $U_{\chi}$ is $\pi_\chi M$. To obtain an efficient training procedure, we develop estimators for the derivatives with respect to \emph{all} parameters of $Q_\theta$ using \emph{only these $M$ samples}. 
\par For $s\in\{1,\ldots,K\}$, we modify the estimator from equation \ref{Eq:piEstim} for the partial derivative with respect to $\pi_s$ (when $\pi_s\in (0,1)$) as follows:
\begin{align}
\frac{1}{n}\sum_{i=1}^n\Bigg(&\frac{\sum_{j=1}^{M}\mathbbm{1}(t_s^{(j)}=1)\  l\left(h_w(g_1^{(j)}\ldots u_s^{(j)}\ldots g_K^{(j)}x_i),y_i\right)}{\sum_{j=1}^M \mathbbm{1}(t_s^{(j)}=1)}\nonumber\\ &-\frac{\sum_{j=1}^{M}\mathbbm{1}(t_s^{(j)}=0)\  l\left(h_w(g_1^{(j)}\ldots g_{s-1}^{(j)}g_{s+1}^{(j)}\ldots g_K^{(j)}x_i),y_i\right)}{\sum_{j=1}^M \mathbbm{1}(t_s^{(j)}=0)}\Bigg).\label{eq:piFastEstim} 
\end{align}
\par For $s\in\{1,\ldots,k\}$ we modify the partial derivative with respect to $\alpha_s$ as follows:
\begin{align}
        \frac{1}{nM}\sum_{i=1}^n\sum_{j=1}^M\mathbbm{1}(t_s^{(j)}=1)\left[\frac{\partial l(h_w(g_1^{(j)}\ldots \rho_s(\alpha_s\varepsilon^{(j)})\ldots g_K^{(j)} x_i),y_i)}{\partial \alpha_s}\right].\label{eq:alphaFastEstim}
\end{align}
One can readily verify that these estimators remain  consistent as  the number $M$ of samples tends to infinity.
\par The estimator from equation \eqref{eq:piFastEstim} can be problematic when $\pi_s$ is very close to zero or unity, which results in $\mathbbm{1}(t_s=1)$ or $\mathbbm{1}(t_s=0)$ being zero with high probability. When this happens, we set the corresponding term in the estimator (i.e., the fraction whose denominator is zero), to zero. In addition, to avoid this scenario, especially early on in the training process, before convergence, we---somewhat heuristically---constrain $\pi_s$ to the interval $[c,1-c]$ for some $c\in (0,0.5)$, as we will describe in Section  \ref{sec:trainDetails}. During training, we reduce $c$ linearly, so that at the end of training $c$ is effectively zero.

\subsection{Training Details using SCALE}\label{sec:trainDetails}
When we train using SCALE, we optimize both the network parameters and the parameters of the distribution $Q_\theta$ in a single training process. 
We have generally observed that the algorithm is not too sensitive to the hyperparameters, and a relatively straightforward and quick tuning process (say, a small grid search over a few values)  is enough to find good defaults. The experiments were executed using GeForce RTX 2080 Ti GPUs.
\par First we present the hyperparameters that are kept fixed in all datasets. We found these hyperparameters by a grid search to ensure a stable convergence behavior across multiple datasests. We initialize all $\alpha_i$ to 0.1 and all $\pi_i$ to $1/K$, where recall that $K$ is the number of transformations composed. 
Additionally, to avoid having $\pi_i$ converge to zero or unity at the beginning of the training, we constraint all $\pi_i$ to the interval $[c,1-c]$. 
We initialize $c=0.4/K$ and reduce it 
linearly (by a constant after each epoch) so that at the end of the  training it equals zero. Since we have a pre-determined number of epochs for each dataset (discussed below), this determines the amount to decrease after each epoch. 
For the computation of the gradient presented in \ref{sec:gradEstim}, we observe that with  $M\geq 4$, 
the training stays nearly unaffected by the number of Monte Carlo samples $M$.
We use $M=4$ to reduce the required computation. For MNIST/rotMNIST we use a batch size of 128, while for CIFAR 10/100 we use a batch size of 64.
\par Finally we use the following network-specific hyperparameters:
\begin{itemize}
    \item \textbf{MNIST/rotMNIST:} We use the same network architecture used in \cite{benton2020learning}, which consists of five convolutional layers and one fully connected layer. To optimize the network's parameters we use the Adam optimizer \citep{kingma2015adam} with a learning rate of 0.02 and with cosine annealing \cite{IlyaSGDR2017}. For the regularization term, in the rotMNIST and MNIST experiments  $\lambda_\mathrm{reg}$ is set to 0.006. To learn the augmentation parameters $\pi_i$, after each epoch, we linearly reduce the learning rate from 0.001 to zero in order to make the training more robust to the choice of the hyperparameter $\lambda_\mathrm{reg}$. We train for 200 epochs by jointly optimizing the parameters of $Q_\theta$ and the parameters of the network, then we train for another 100 epochs by optimizing only the parameters of the network.
    \item \textbf{CIFAR 10/100:} We use a WideResNet 28-10 \citep{zagoruyko2016WideresNet}. We optimize the parameters of the network with an SGD optimizer with learning rate of 0.1, cosine annealing, weight decay of 0.0001 and Nesterov momentum \citep{nesterov83} with value 0.9. For the CIFAR 10 experiment, $\lambda_\mathrm{reg}$ is set to 0.01, for CIFAR 100  $\lambda_\mathrm{reg}$ is set to 0.02. To learn the augmentation parameters $\pi_i$, after each epoch, we linearly reduce the learning rate from 0.001 to zero. We train for 200 epochs by jointly optimizing both the parameters $Q_\theta$ and the network parameters.
\end{itemize}
For all reported results (when no test-time augmentation is applied), we present the average test accuracy over 5 trials over all elements of randomness in the training (random initialization, random sampling of transforms, etc). The standard deviation of the test accuracy is 0.12\% for the RotMNIST experiments, 0.09\% for the CIFAR 10 experiments, and 0.1\% for CIFAR 100. 
\par Additionally, when we apply test-time augmentation, there is an additional variance in the model accuracy due to the randomly sampled augmentations during  inference. 
Table \ref{tab:CIFARAccuracyDetailed} shows the mean accuracy and the standard deviation in the results due to the random choice of test-time augmentations. 
Again, we can see that the variance of the test accuracy is small enough that it does not affect the quantitative comparison between the models. Finally Figure \ref{fig:LearnParam} shows the final parameters $\pi_i$, $\alpha_i$ learned using SCALE on CIFAR10/100.
\begin{table*}[ht]
\vskip -0.15in
\caption{{ Test (Mean accuracy/ Standard deviation) of a Wide ResNet 28-10 trained on CIFAR 10/100, using various methods for learning data  augmentations. 
All methods are evaluated using   test-time augmentation with $n=4,8$ samples. 
The standard deviation is computed using five trials over the random choice of test-time augmentations.}}
\label{tab:CIFARAccuracyDetailed}
\vskip -0.3in
\begin{center}
\begin{small}
\begin{sc}
\begin{tabular}{|l|c|c|c|c|c|c||r|}
\hline
 & &  Augerino & Fast-AA  & Faster-AA & DADA &\textbf{SCALE}  \\
 
 & &  +Flips& &  &geometric& \textbf{(Ours)} \\
 
\hline
\multirow{2}{*}{CIFAR10} &TestAug. (N=4)   & 95.7\%/ 0.1\% & \textbf{97.0}\%/ 0.08\% & 96.6\%/ 0.09\% & 96.5\%/0.1\% & 96.7\%/ 0.09\% \\
  \cline{2-7}
 &TestAug. (N=8)   & 95.8\%/ 0.1\% & \textbf{97.2}\%/ 0.04\% & 96.8\%/ 0.07\% &96.8\%/0.1\%  & 96.9\%/ 0.07\% \\
\hline\hline
\multirow{2}{*}{CIFAR100} 
& TestAug. (N=4)  & 79.2\%/ 0.2\% & 82.5 \%/ 0.1\% & 82.0\%/ 0.09\% & 79.8\%/ 0.1\% & \textbf{83.0}\%/ 0.12\% \\
\cline{2-7}
& TestAug. (N=8)   & 79.3\%/ 0.2\% & 82.9\%/ 0.08\% & 82.4\%/ 0.1\% & 80.1\%/ 0.1\% & \textbf{83.3}\%/ 0.1\% \\
\hline 
\end{tabular}
\end{sc}
\end{small}
\end{center}
\vskip -0.1in
\end{table*}

\subsection{Implementation Details for Parametrization}\label{sec:paramdetails}
In the parametrization presented in Section \ref{sec:parametrization} we include the following transformations: rotation (both a continuous version and a discrete rotation by $180^o$), $X$-axis scaling, $Y$-axis scaling, $X$-axis shearing, horizontal flip, and random cropping. We implement the continuous rotation, scaling, and shearing using the differentiable image sampling method proposed in Spatial Transformers Networks by \cite{jaderberg2015spatial}. We implement each of the above transformations as follows:

\begin{itemize}
    \item \underline{Rotation:} For the continuous rotation, we sample the angle $a_1$ (in radians) by sampling $\varepsilon\sim \text{U}[-1,1]$ and taking $a_1=\alpha_1\varepsilon$. Then we perform the rotation using the following rotation matrix:
    \begin{align}\label{rotmx}
        \begin{bmatrix}\cos(a_1) & -\sin(a_1) & 0 \\ \sin(a_1) & \cos(a_1) & 0 \\ 0 & 0 & 1\end{bmatrix}.
    \end{align}
    \item \underline{Discrete rotation by $180^o$:} This is performed using the same rotation matrix from \eqref{rotmx} as the general continuous rotation, with angle $a_5$ replacing $a_1$ in \eqref{rotmx}. The angle $a_5$ is sampled uniformly from $\{0,\pi\}$.
    \item \underline{$X$-axis scaling:} Similar to the rotation, we sample  $a_2=\alpha_2\varepsilon$ by sampling $\varepsilon\sim \text{U}[-1,1]$. Then the scaling is performed using the transformation matrix:
        \begin{align*}
        \begin{bmatrix}e^{a_2} & 0 & 0 \\ 0 & 1 & 0 \\ 0 & 0 & 1\end{bmatrix}.
    \end{align*}
    \item \underline{$Y$-axis scaling:} We sample $a_3=\alpha_3\varepsilon$ by sampling $\varepsilon\sim \text{U}[-1,1]$. Then the scaling is performed using the transformation matrix:
          \begin{align*}
        \begin{bmatrix}1 & 0 & 0 \\ 0 & e^{a_3} & 0 \\ 0 & 0 & 1\end{bmatrix}.
    \end{align*}
    \item \underline{X-axis shearing:} We sample $a_4=\alpha_4 \varepsilon$ by sampling $\varepsilon\sim \text{U}[-1,1]$. Then the shearing is performed using the transformation matrix:
              \begin{align*}
        \begin{bmatrix}1 & a_4 & 0 \\ 0 & 1 & 0 \\ 0 & 0 & 1\end{bmatrix}.
    \end{align*}
    \item \underline{Horizontal flip:} To perform the horizontal flip, we sample $a_6$ uniformly from $\{-1,1\}$. If $a_6=-1$, then the order of the pixels on the $X$-axis is flipped, otherwise the input image stays unchanged. To combine the flips more efficiently  with the rotations, scalings, and shearings, we use the  transformation matrix
    \begin{align*}
        \begin{bmatrix}a_6 & 0 & 0 \\ 0 & 1 & 0 \\ 0 & 0 & 1 \end{bmatrix}.
    \end{align*}
    \item \underline{Random Crops:} For the random cropping we use the implementation from the Torchvision package within PyTorch \citep{pytorch} with padding equal to four. 
\end{itemize}
\subsection{SVHN Experiment: Precomputed policies are not transferable}\label{SVHNExp}
Precomputed augmentation policies that are produced by computationally expensive search methods similar to Fast-AA are available only for a limited set of the most frequently used datasets. Additionally, there is no guarantee that we can succesfully transfer these policies to new datasets. As a result, for any new dataset, a user needs to search for a new best policy, which highlights the importance of methods with low computational cost. 
\par In this section we perform experiments on the SVHN dataset \citep{Netzer2011ReadingDI}, showcasing that precomputed policies \emph{are not perfectly transferable even between  similar datasets}. SVHN contains three sets of samples: the ``train'' set containing  73,257  difficult training samples, the ``extra'' set containing 531,131 less difficult training samples and the ``test'' set containing 26,032 test samples.
For the precomputed policy, we use a Fast-AA  policy computed  for the whole SVHN dataset (``train'' set and ``extra'' set), to train a network on a smaller training set that contains only the more difficult training samples (``train'' set).
\par Table \ref{tab:SVHNAcc} compares the accuracies achieved using SCALE, Augerino, DADA (with the same augmentation space) and the precomputed Fast-AA policy to train on the ``train" set of SVHN. We show that  SCALE outperforms both the precomputed Fast-AA policy and Augerino and DADA.
\par For this experiment we used a ResNet34 network \citep{he2016deep} trained with learning rate 0.001, cosine annealing and the Adam optimizer \citep{kingma2015adam}. For the SCALE algorithm we used $\lambda_\mathrm{reg}$ initialized at $0.02$ and for Augerino we used $\lambda_\mathrm{reg}=0.05$.
\begin{table*}[ht]

\caption{{Test Accuracy on SVHN when the model is trained on the ``train'' set using Augerino, the  Fast-AA policy (precomputed for the whole SVHN), DADA and SCALE. We show the average accuracy over 5 trials. For all methods the standard deviation is less than 0.12\%}}
\label{tab:SVHNAcc}
\vskip -0.3in
\begin{center}
\begin{small}
\begin{sc}
\begin{tabular}{lcccr}
\hline
 & Augerino  & Fast-AA & DADA & SCALE\\ 
 & & (precomputed policy)& geometric & (Ours) \\
 \hline
 SVHN & 97.2\% & 97.1\% & 96.6\% &\textbf{97.4\%} \\
\hline
\end{tabular}
\end{sc}
\end{small}
\end{center}
\vskip -0.2in
\end{table*}

\subsection{Learning Data Augmentations on Language Navigation Task}
\par We showcase the use of our method on a recently proposed task of language navigation, where the set of best augmentations is not generally known. Specifically in this task, given a semantic map and a natural language instruction, we aim to infer a sequence of waypoints that define a path that follows the given instruction. For our experiment we follow the definition of the task proposed in \cite{krantz2020beyond}, where the maps are extracted from scenes of the Matterport3D dataset \citep{chang2017matterport3d} and the instructions are given from the R2R dataset  \citep{anderson2018vision}.
\par We apply SCALE over the path prediction model shown in Figure \ref{fig:vln_trm}, which was proposed in \cite{georgakis2022cm2}. We can measure the performance of the predicted waypoints by comparing them to the waypoints of the ground truth path. We consider a predicted waypoint as correct if it is located at most one meter away from the ground truth. We focus on learning transformations applied on the input semantic map using the parametrization described in Section \ref{sec:parametrization}. In the figure \ref{fig:vln_fig}, we show the performance achieved by our method compared to the performance of the baseline model as a function of the regularization parameter $\lambda_{reg}$. We can observe that for values of $\lambda_{reg}$ ranging from $5*10^{-4}$ to $3.5*10^{-3}$  our method learns augmentations that result in increased performance compared to the performance achieved by the baseline.
\begin{figure}[h]
    \centering
    \subfloat[]{\includegraphics[width=0.35\textwidth]{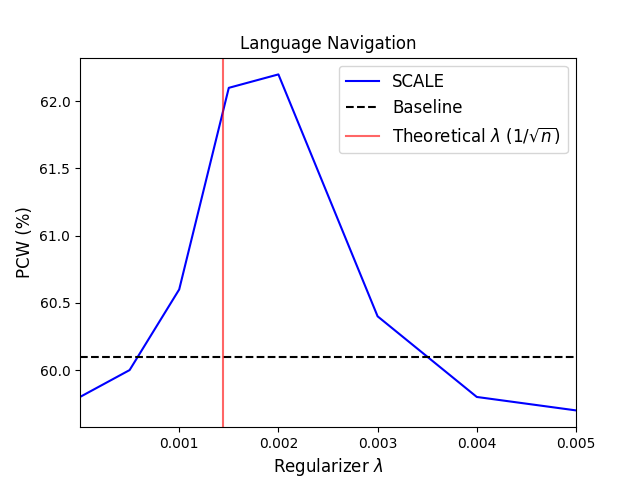}\label{fig:vln_fig}}
    \subfloat[]{\includegraphics[width=0.55\textwidth]{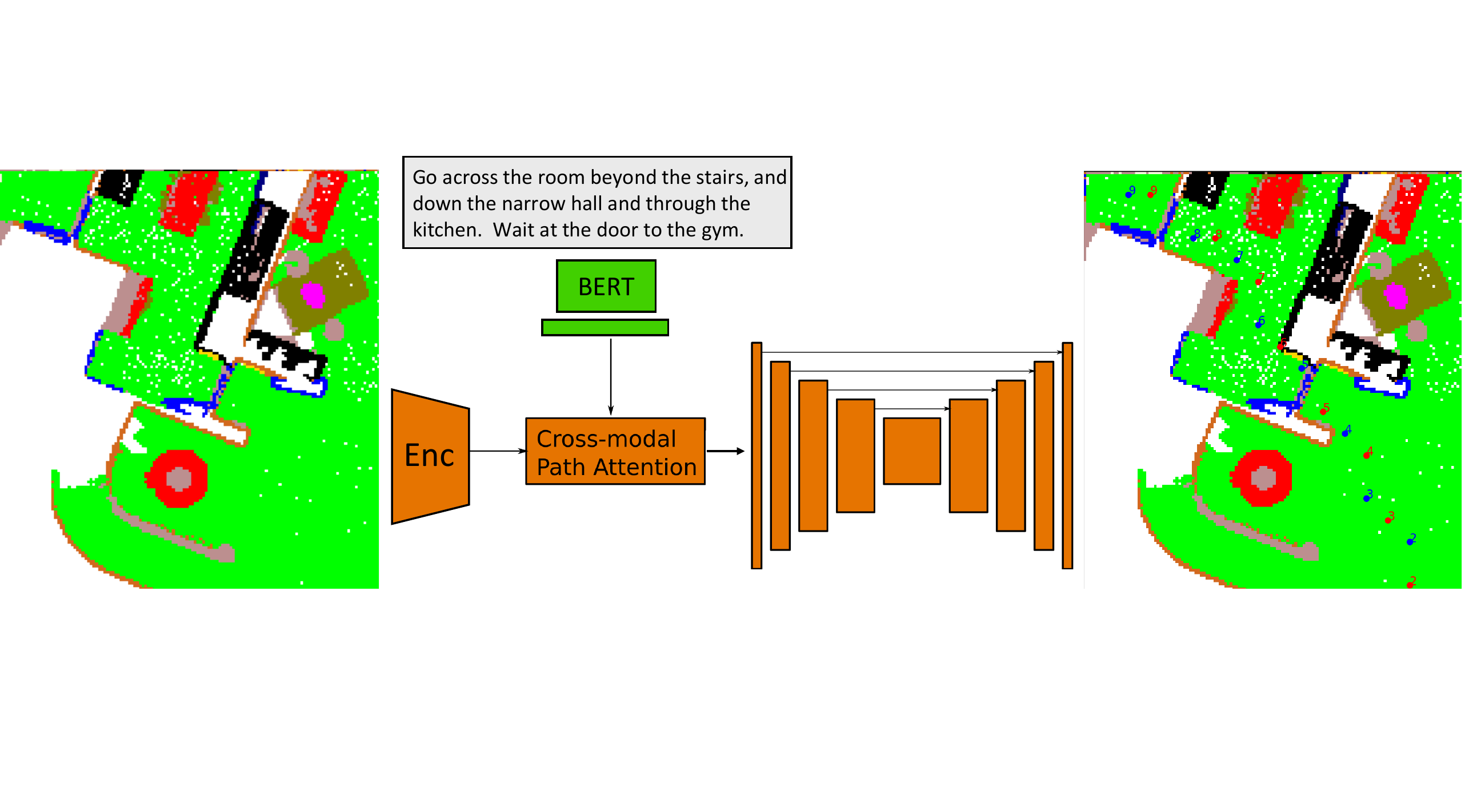}\label{fig:vln_trm}}
    \caption{(a) Percentage of correct waypoints (PCW) predicted by the language navigation model when we train only the original model (Baseline) and when we incorporate SCALE into the training of the model with different regularization parameters $\lambda_{reg}$. (b) A high level overview of the language navigation model used in \cite{georgakis2022cm2}, which is given a semantic map and an instruction and outputs ten waypoint predictions (red dots) that aim to match the ten waypoints of the ground truth path (blue dots). }
    \label{fig:vlntrm}
\end{figure}

\subsection{A  Bayesian Perspective of ETRM}
In this section, we will see how variational inference \citep{mackay2003information,wainwright2008} on a specific graphical model suggests optimizing a similar objective as the one we arrived to in Theorem \ref{thm:specialpac} using PAC-Bayes bounds. By optimizing this objective we find the model $h_w \in \hh$ and distribution $Q \in \Omega$ that minimize a tradeoff between the  ETRM loss (\eqref{eq:ETRM}) and negative entropy.  

First we build the corresponding graphical model. To simplify the discussion we assume that all random variables have a density. A reasonable analogue in the setting of graphical models to the transformation hypothesis space $\Omega$ in TRM is a corresponding prior on the transformations $g: \mathcal{X} \rightarrow \mathcal{X}$. In particular, before seeing any data, we can assume $p(g)$ is the uniform density on $G_\Omega$; this choice can be viewed as a non-informative prior that maximizes the prior entropy. Instead of the standard observation model---or conditional density---$p(y|x)$ in supervised learning, our observation model now is $p(y|g,x)$.  During training, we assume that the functional form of this conditional distribution is known, but depends on unknown parameters $w$. For example, in binary classification $p_w(y|g,x)= \mathrm{Bern}(y;h_w(gx))$, where the form of $h_w$ is known. Training the graphical model amounts to learning the unknown non-random parameters $w$. We train the graphical model using a dataset $S = (x_i,y_i)_{i=1}^n$. We denote $S_y=\{y_i\}_{i=1}^n,S_x=\{x_i\}_{i=1}^n$. We further assume that the $x_i$s are independent ($x_i \indep x_j$), while the $y_i$s are conditionally independent ($y_i \indep y_j |g$). Figure \ref{fig:pgm} depicts one such graphical model.
\begin{figure}[H]
    \centering
   \includegraphics[width=0.3\textwidth]{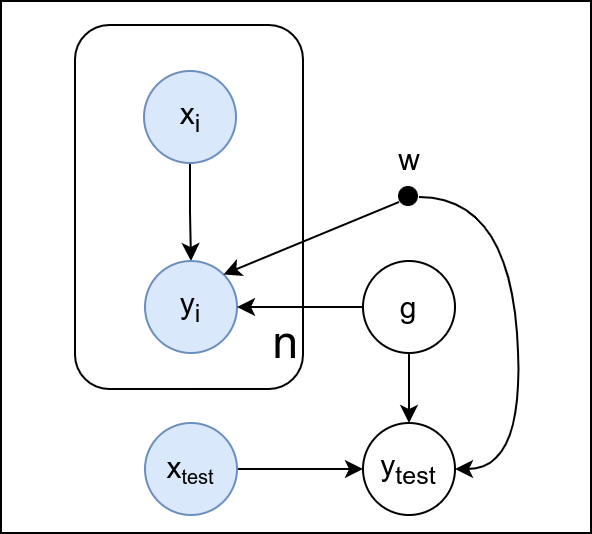}
    \caption{Probabilistic Graphical Model analogue. Circles represent random variables and arrows represent conditional probabilities. Filled circles are observed variables. Dots represent parameters.}
    \label{fig:pgm}
\end{figure}
We do not need to model the density of $x_i$s since during training we maximize the conditional likelihood (marginalizing over $g$), solving
$\max_{w} \log p_w(S_y|S_x)$.
Note that for $i \neq j$, we do not know that  $y_i \indep y_j|x_i,x_j$ when $g$ is not observed (i.e., we may have $y$s  not D-separated in the graphical model). So, the above expression is not separable in the samples and thus not easily optimized. Explicitly writing the expectation over $g$ does not seem to make the optimization more convenient, since the expression becomes:
\begin{equation}
    \max_{w} \log p_w(S_y|S_x) = \log \mathbb{E}_{g \sim p(g)}\prod_{i=1}^n p_w(y_i|x_i,g) 
    \label{Eq:condlik}
\end{equation}
This is not ideally suited for stochastic optimization, because we cannot easily construct unbiased estimators of the gradient with respect to the parameters $w$. Additionally, the product of distributions may not be numerically stable as $n$ increases. 

Since exact training is not convenient, we can train the graphical model using variational inference by maximizing a lower bound on this objective. First we need to approximate the posterior $p(g|S)$. To be consistent with the TRM analogy, we choose the variational posterior family to be the same as the transformation hypothesis space $\Omega$. In particular, since in TRM we sample independent transforms, this family contains distributions $Q \in \Omega$ for which $Q(g|y,x)= Q(g)$. 

This might seem a restrictive family at first glance, however, since $g$ is a transformation acting on $x$, the transformed sample $gx$ is still dependent on the initial sample $x$. Only the \emph{distribution} of the transformation $Q(g)$ is sample-independent. For example, this allows for a uniform distribution over rotations in some fixed range to be in the variational family, assigning the same probability to two different rotated vectors given the original vector, i.e., $p(R_\theta x|x)=p(R_\theta x'|x')$, where $R_\theta$ is the rotation matrix with angle $\theta$. On the other hand, a uniform distribution over rotations in some range that is dependent on the sample $x$ cannot be in the variational family. 

Variational inference suggests finding the best approximate posterior in the variational family as measured by the Kullback-Leibler divergence between the variational posteriors $Q(g) \in  \Omega$  and the true posterior $p(g|S)$. Finding both the optimal $w$ and the optimal $Q \in \Omega$ involves maximizing the following objective jointly over $w$ and $Q$:
\begin{align*}
     \mathcal{L}(w,Q) &= \log p_w(S_y|S_x) - KL(Q(g)||p(g|S)) \nonumber\\
     &=  \log p_w(S_y|S_x) - \left(KL(Q(g)||p(g)) + \EE_{g \sim Q(g)}\log \frac{p_w(S_y|S_x)}{p_w(S_y|S_x,g)}\right). \nonumber
\end{align*}

By cancelling terms, this further equals
\begin{align}
     &\EE_{g \sim Q(g)}\log p_w(S_y|S_x,g) - KL(Q(g)||p(g)) 
     = \EE_{g \sim Q(g)}\log \prod_{i=1}^n p_w(y_i|x_i,g) - KL(Q(g)||p(g)) \nonumber\\
     &= \sum_{i=1}^n \EE_{g \sim Q(g)}\log p_w(y_i|x_i,g) - KL(Q(g)||p(g)).
    \label{Eq:varinf}
\end{align}
We used the independence $g \indep x_i$ for all $i \in [n]$ and $y_i \indep y_j|g,x_i,x_j$ for $i \neq j$ from the graphical model. The objective is clearly a lower bound for equation \eqref{Eq:condlik} as the Kullback-Leibler divergence is non-negative and can be zero only if $p(g|S) \in \Omega$.

Using the prior on $g$ defined above as a uniform density on $G_\Omega$ (which has larger support than the variational posterior, thus the Kullback-Leibler term is well-defined), minimizing Equation \eqref{Eq:varinf} over $w,Q$ is equivalent to solving:
\begin{equation}
\max_{w, Q \in \Omega}  \sum_{i=1}^n \EE_{g \sim Q(g)} \log p_w(y_i|g,x_i) + H(Q).
\label{Eq:elbo}
\end{equation}
where $H(Q) := -\EE_{g \sim Q}[\log Q(g)]$ is the differential entropy of $Q$.

The equation above is similar to the one from Theorem \ref{thm:specialpac}. 
\begin{enumerate}

\item 
The first term is the ETRM objective with loss $-\log p_w(y|g,x)$. For example, if the observation model is $y|g,x \sim \text{Bern}(h_w(gx))$, then $-\log p_w(y_i|g,x_i)=-[y_i\log h_w(gx_i)+(1-y_i)\log(1-h_w(gx_i))]$ which is the binary cross entropy loss, commonly used for binary classification. 
\item  The second term is a regularizer that promotes the selection of distributions with high entropy.
\end{enumerate}

At test time, the Bayesian approach is to find $p_w(y_{test}|x_{test},S)$, which results in averaging over the posterior of the transformations as follows:
\begin{equation}
    p_w(y_{test}|x_{test},S) = \EE_{g \sim p(g|S)}[p_w(y_{test}|g,x_{test})],
\end{equation}
where we used that $g \indep x_{test}|S$ and $y_{test} \indep S |g,x_{test}$ from the graphical model. Hence, if we use the variational posterior $Q^*$ found by optimizing \eqref{Eq:elbo} to approximate $p(g|S)$, and the model $h_{w^*}$ found by optimizing the same objective, we can construct the estimate:
\begin{equation*}
    \hat p(y_{test}|x_{test},S) = \EE_{g \sim Q^*}[p(y_{test}|h_{w^*}(gx_{test}))]
\end{equation*}
which is a direct analogue of using test time augmentation.

    If the global variable $g$ were substituted by local variables $g_i$ for each $x_i,y_i$, then amortized variational inference in the resulting new graphical model would also result in  an objective similar to the one from Theorem \ref{thm:specialpac}. However, the prediction step would be slightly different.

\end{document}